\definecolor {processblue}{cmyk}{0.96,0,0,0}
\tikzstyle{int}=[draw, fill=blue!20, minimum size=2em]
\tikzstyle{init} = [pin edge={to-,thin,black}]
\pgfplotsset{compat=1.14}
\newtheorem{defn}{Definition}[section]
\newtheorem{lem}{Lemma}[section]
\newtheorem{rem}{Remark}[section]
\newtheorem{assum}{Assumption}
\newtheorem{thm}{Theorem}[section]
\newtheorem{cor}{Corollary}[section]
\newcommand{\ex}[2]{{\ifx&#1& \mathbb{E} \else {\mathbb{E}_{#1}} \fi \left[#2\right]}}
\title{Information-Theoretic Analysis of Unsupervised Domain Adaptation}
\author{Ziqiao Wang \& Yongyi Mao 
\\
University of Ottawa\\
\texttt{\{zwang286,ymao\}@uottawa.ca} 
}
\date{}
\begin{document}

\maketitle

\begin{abstract}
This paper uses information-theoretic tools to analyze the generalization error in unsupervised domain adaptation (UDA). 
 We present novel upper bounds for two notions of generalization errors. The first notion measures the gap between the population risk in the target domain and that in the source domain, and the second measures the gap between the population risk in the target domain and the empirical risk in the source domain. While our bounds for the first kind of error are in line with the traditional analysis and give similar insights, our bounds on the second kind of error are algorithm-dependent, which also provide insights into algorithm designs. Specifically, we present two simple techniques for improving generalization in UDA and validate them experimentally.

\end{abstract}

\section{Introduction}
This paper focuses on the \textit{unsupervised domain adaptation (UDA)} task, where the learner is confronted with a source domain and a target domain and the algorithm is allowed to access to a labeled training sample from the source domain and an unlabeled training sample from the target domain. The goal is to find a predictor that performs well on the target domain.

A main obstacle in such a task is the discrepancy between the two domains.
Some recent works \citep{ben2006analysis,ben2010theory,MansourMR09,zhao2019learning,zhang2019bridging,shen2018wasserstein,germain2020pac,acuna2021f,nguyen2022kl} have proposed various measures to quantify such discrepancy, either for the UDA setting or for the more general domain generalization tasks, and many learning algorithms are proposed. For example,  \citet{nguyen2022kl} uses a (reverse) KL divergence to measure the misalignment of the two domain distributions, and motivated by their generalization bound, they design an algorithm that penalizes the KL divergence between the marginal distributions of two domains in the representation space. Despite that this ``KL guided domain adaptation'' algorithm is  demonstrated to outperform many existing marginal alignment algorithms \citep{ganin2016domain,sun2016deep,shen2018wasserstein,li2018domain}, it is not clear whether  KL-based alignment of marginal distributions is adequate for UDA, and more fundamentally, what role the unlabelled target-domain sample should play in cross-domain generalization. Notably, most UDA algorithms are heuristically designed and intuitively justified. Moreover, most existing generalization bounds are algorithm-independent. Then there appears significant room for both deeper theoretical understanding and more principled algorithm design.


In this paper, we analyze the generalization ability of hypotheses and learning algorithms for UDA tasks using an information-theoretic framework developed in \citep{russo2016controlling,xu2017information}.  The foundation of our technique is the Donsker-Varadhan representation of KL divergence (see Lemma \ref{lem:DV-KL}).
We present novel upper bounds for two notions of generalization errors. The first notion (``population-to-population (PP) generalization error'') measures the gap between the population risk in the target domain and that in the source domain {\em for a hypothesis}, and the second  (``expected empirical-to-population (EP) generalization error'') measures the gap between the population risk in the target domain and the empirical risk in the source domain {\em for a learning algorithm}. 
    We show that the PP generalization error for all hypotheses are uniformly bounded by a quantity governed by the KL divergence between the two domain distributions, which, under bounded losses, recovers the the bound in \cite{nguyen2022kl}. We then show that this KL term upper-bounds some other measures including Total-Variation distance \citep{ben2006analysis}, Wasserstein distance \citep{shen2018wasserstein} and domain disagreement \citep{germain2020pac}. Thus, minimizing KL-divergence forces the minimization of other discrepancy measures as well. This, together with the ease of minimizing KL \citep{nguyen2022kl}, explains the effectiveness of the KL-guided alignment approach.
    For expected EP generalization error, we develop several algorithm-dependent generalization bounds.
    These algorithm-dependent bounds further inspire the design of two new and yet simple strategies that can further boost the performance of the KL guided marginal alignment algorithms.
    Experiments are performed 
    to verify the effectiveness of these strategies. 

\section{Related Work}
\paragraph{Domain Adaptation} 
Many domain adaptation generalization bounds have been developed \citep{ben2006analysis,ben2010theory,david2010impossibility,MansourMR09,shen2018wasserstein,zhang2019bridging,germain2020pac,acuna2021f}, and various discrepancy measures are introduced to derive these bounds including
total variation \citep{ben2006analysis,ben2010theory,david2010impossibility,MansourMR09}, Wasserstein distance \citep{shen2018wasserstein}, domain disagreement \citep{germain2020pac} and so on. In particular, bounds based on ${\cal H}\Delta {\cal H}$ in \cite{ben2010theory} are restricted to a binary classification setting and assume a deterministic labeling function. Furthermore, \cite{ben2010theory} also assumes the loss is the $L_1$ distance between the predicted label and true label (which is bounded). Our bounds work for the general supervised learning problems with any labelling mechanism (e.g., stochastic labelling), and we  do not require the specific choice of the loss (even unbounded).
Recently, \cite{shui2020beyond} proposed generalization bounds using Jensen-Shannon (JS) divergence, which bear a relation to our Corollary \ref{cor-bound-symmetric-kl}. While other algorithm-dependent bounds have been proposed for different transfer learning settings (e.g., \cite{wang2019transfer}), they are not directly comparable to our own bounds.
For more details about the domain adaptation theory, we refer readers to \cite{redko2020survey} for a comprehensive survey. 
In addition, the most common methods for domain adaptation involve aligning the marginal distributions of the representations between the source and  target domains, for example, using an adversarial training mechanism \citep{ganin2016domain,shen2018wasserstein,acuna2021f} or aligning the first two moments of the representation distribution \citep{sun2016deep}. There are numerous other domain adaptation algorithms, and we refer readers to \citep{wilson2020survey,zhou2021domain,wang2021generalizing} for recent advances.
\vspace{-2mm}

\paragraph{Information-Theoretic Generalization Bounds}
Information-theoretic analysis is usually used to bound the expected generalization error of supervised learning, where the training and testing data come from the same distribution \citep{russo2016controlling,russo2019much,xu2017information,bu2019tightening,negrea2019information,steinke2020reasoning,rodriguez2021tighter}. Exploiting the chain rule of mutual information, these bounds are successfully applied to characterize the generalization ability of stochastic gradient based optimization algorithms \citep{pensia2018generalization,negrea2019information,haghifam2020sharpened,wang2021generalization,neu2021information,wang2022generalization,wang2022two}. Recently, this framework has also been used in other learning settings including meta-learning  \citep{jose2021information,jose2021transfer,rezazadeh2021conditional,chen2021generalization}, semi-supervised learning \citep{he2021information,aminian2022information} and transfer learning \citep{wu2020information,jose2021information,jose2021informationJS,masiha2021learning,bu2022characterizing}. 
In particular, \citep{wu2020information,jose2021informationJS} consider a different 
problem setup with ours. Specifically, their expected generalization error is the gap between the target population risk and a weighted empirical risk combining both the source and the target empirical risks, while our ``EP'' error is the gap between the target population risk and the source empirical risk. That is, we focus on the role of the {\em unlabelled} target data 
in cross-domain generalization when the source empirical risk is taken as a training objective, whereas their works assume the
existence of {\em labelled} target data and study their role in domain adaptation. 

\section{Preliminary}
Unless otherwise noted,  a random variable will be denoted by a capitalized letter, and  its realization is denoted by the corresponding lower-case letter. Consider a prediction task with instance space  $\mathcal{Z}=\mathcal{X}\times\mathcal{Y}$,  where $\mathcal{X}$ and $\mathcal{Y}$ are the input space and the label (or output) space,  respectively. Let $\mathcal {F}$ be the hypothesis space of interest, in which each $f\in \mathcal {F}$ is a function or predictor mapping $\mathcal {X}$ to ${\cal Y}$. We assume that each hypothesis $f\in \mathcal {F}$ is parameterized by some weight parameter $w$ in some space ${\cal W}$ and may write $f$ as $f_w$ as needed. 

Let $\mu$ and $\mu'$ be two distributions on $\mathcal {Z}$, unknown to the learner, 
where $\mu$ characterizes the source domain and $\mu'$ characterizes the target domain. 
We may also write $\mu$ as $P_Z$ or $P_{XY}$ and $\mu'$ as $P_{Z'}$ or 
$P_{X'Y'}$, which defines random variables $Z=(X, Y)$ and $Z'=(X', Y')$, respectively. Let $S=\{Z_i\}_{i=1}^n
\sim
\mu^{\otimes n}$ be a labeled source-domain sample and
$S'_{X'} = \{X'_j\}_{j=1}^m
\sim P_{X'}^{\otimes m}$ be an unlabelled target-domain sample. The objective of UDA is to design an  algorithm $\mathcal{A}$ that takes $S$ and $S'_{X'}$ as the input and outputs a weight $W\in\mathcal{W}$, giving rise to a predictor $f_W\in \mathcal {F}$ that ``works well'' on the target domain.  Note that the algorithm $\mathcal {A}$ is characterized by a conditional distribution 
$P_{W|S,S'_{X'}}$.


Let $\ell:\mathcal{Y}\times \mathcal {Y}
\rightarrow 
\mathbb{R}_0^+$ be a loss function. The population risk for each 
$w\in \mathcal {W}$ in the target domain is defined as
\[
R_{\mu'}(w)\triangleq \mathbb{E}_{Z'}{[\ell(f_w(X'),Y')]}
\]
and a good UDA algorithm hopes to return a weight $w$ that minimizes this risk. Since $\mu'$ is unknown, 
one often uses recourse to
the empirical risk in the source domain, defined as 
\[
R_S(w) \triangleq {\frac{1}{n}\sum_{i=1}^n\ell(f_w(X_i),Y_i)}. 
\]
Generalization error in this setting measures how well the hypothesis returned from the algorithm generalizes from the source-domain training sample to the target-domain unknown distribution $\mu'$. Taking into account the stochastic nature of the algorithm $\mathcal {A}$, a natural notion of generalization error for UDA can be defined by 
\begin{align}
    \label{eq:expect-error}
{\mathrm{Err}}\triangleq \ex{W,S}{R_{\mu'}(W)-R_S(W)}=\ex{W,S,S'_{X'}}{R_{\mu'}(W)-R_S(W)},
\end{align}
where the expectation in the first expression is taken over the joint distribution of $(W,S)\sim P_{W|S}\times\mu^{\otimes n}$, and the expectation of the second expression is taken over the joint distribution of $(W,S,S'_{X'})\sim P_{W|S,S'_{X'}}\times\mu^{\otimes n}\times P_{X'}^{\otimes m}$.

There is another notion of generalization error, more traditional in the domain adaptation literature, defined as the gap between the population risk in the target domain and that in the source domain:
\begin{align}
    \widetilde{\mathrm{Err}}(w)\triangleq R_{\mu'}(w)-R_{\mu}(w). 
    \label{eq:unexpect-error}
\end{align}
where $R_{\mu}(w)\triangleq \mathbb{E}_Z{[\ell(f_w(X),Y)]}$. It is apparent that 
$\widetilde{\mathrm{Err}}(w)$ and $\mathrm{Err}$ are related by 
the following triangle inequality:
\[
|R_{\mu'}(w) - R_S(w)|\le |R_{\mu'}(w) - R_{\mu}(w)| + 
|R_{\mu}(w)-R_{S}(w)|. 
\]
where the second term on the right hand side is the standard generalization error in the source domain, which can be bounded by classical learning-theoretic tools, e.g., Rademacher complexity \citep{bartlett2002rademacher}. Thus, bounding $\widetilde{\mathrm{Err}}(w)$ helps bounding 
$\mathrm{Err}$. 





This paper studies both notions of generalization error for UDA. Specifically, 
starting from Section~\ref{sec:expected-bound}, we will mainly use information-theoretic tools to bound $\mathrm{Err}$ directly, without going through $\widetilde{\mathrm{Err}}(w)$. For the ease of reference, we refer to $\widetilde{\mathrm{Err}}(w)$ as the {\em population-to-population (PP) generalization error for $w$} and $\mathrm{Err}$ as the {\em expected empirical-to-population (EP) generalization error.}


The following definitions are useful.
\begin{defn}[Disintegrated Mutual Information]
Let $X$, $Y$ and $Z$ be random variables and $z$ be a realization of $Z$. The disintegrated mutual information of $X$ and $Y$ given $Z=z$ is $I^z(X;Y)\triangleq \mathrm{D_{KL}}(P_{X,Y|Z=z}||P_{X|Z=z}P_{Y|Z=z})$.
\end{defn}
Note that the conditional mutual information  $I(X;Y|Z)=\mathbb{E}_{Z}{I^Z(X;Y)}$.




\begin{defn}[Lautum Information {\citep{palomar2008lautum}}]
 The lautum information between $X$ and $Y$ is defined as $L(X;Y)\triangleq\mathrm{D_{KL}}(P_XP_Y||P_{XY})$.
\end{defn}

\section{Upper Bounds for PP Generalization Error}
\label{sec:unexpected-bound}
We now present some upper bounds for $\widetilde{\mathrm{Err}}(w)$. The key techniques used in developing these bounds are the information-theoretic tools in the style of Lemma \ref{lem:DV-KL}. These bounds adopt certain KL divergence to measure the discrepancy between the source and target domains. Notably, some previously established bounds are recovered under weaker conditions.  
 Additionally, we demonstrate that under certain conditions, the KL-based bound is an upper bound of several other discrepancy measures and hence minimizing the KL divergence forces the minimization of these other measures.

We first list some common assumptions on the loss function, which we consider in this paper.
\begin{assum}[Boundedness]
\label{ass:bounded}
    $\ell(\cdot,\cdot)$ is bounded in $[0,M]$.
\end{assum}
\begin{assum}[Subgaussianity]
\label{ass:subgaussian}
    $\ell(f_w(X),Y)$ is $R$-subgaussian\footnote{A random variable $X$ is $R$-subgaussian if for any $\rho$, $\log {\mathbb E} \exp\left( \rho \left(
X- {\mathbb E}X
\right) \right) \le \rho^2R^2/2$.} under $\mu$ for any $w\in \mathcal{W}$.
\end{assum}
\begin{rem}
\label{rem:boundedness}
Note that Assumption \ref{ass:bounded} implies Assumption \ref{ass:subgaussian}, i.e.,  if $\ell(f_w(X),Y)$ is bounded in $[0,M]$, then it is also $M/2$-subgaussian. Thus, Assumption \ref{ass:subgaussian} is weaker than Assumption \ref{ass:bounded}.
\end{rem}
\begin{assum}[Lipschitzness]
\label{ass:lipshitz}
 $\ell(f_w(X),Y)$ is $\beta$-Lipschitz continuous in $\mathcal{Z}$ with respect to a metric $d$ on $\mathcal{Z}$ for any $w\in \mathcal{W}$, i.e., $|\ell(f_w(x_1),y_1)-\ell(f_w(x_2),y_2)|\leq \beta d(z_1,z_2)$ for some metric $d$ on ${\mathcal Z}$.
\end{assum}
\begin{rem}
Note that Assumption \ref{ass:bounded} implies Assumption \ref{ass:lipshitz}  when $d(z_1,z_2)=\mathbbm{1}_{z_1\neq z_2}$, i.e.,  if $\ell(f_w(X),Y)$ is bounded in $[0,M]$, then it is also $M$-Lipschitz under the discrete metric.
\end{rem}
\begin{assum}[Triangle and Symmetric]
\label{ass:triangle}
$\ell(\cdot,\cdot)$ satisfies the following:
    $\ell(y_1,y_2)=\ell(y_2,y_1)$ and 
    $
    \ell(y_1,y_2)\leq\ell(y_1,y_3)+\ell(y_3,y_2)~\textit{for any}~y_1,y_2,y_3\in\mathcal{Y}
    $.
\end{assum}
    
\subsection{Generalization Bounds via the Subgaussian Condition}

 The following generalization bound is established by combining Lemma \ref{lem:DV-KL} and Assumption \ref{ass:subgaussian}, and its corresponding sample complexity bound is discussed in Appendix~\ref{sec:sample-complexity}.

   
        \begin{thm}
            \label{thm-bound-subgaussian}
            If Assumption \ref{ass:subgaussian} holds, then for any $w\in\mathcal{W}$,
             $
    \left|\widetilde{\mathrm{Err}}(w)\right| \leq \sqrt{2R^2\mathrm{D_{KL}}(\mu'||\mu)}
    $.
        \end{thm}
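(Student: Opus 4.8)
The plan is to combine the Donsker--Varadhan variational representation of KL divergence (Lemma~\ref{lem:DV-KL}) with the subgaussian tail control of Assumption~\ref{ass:subgaussian}, applied to a single scalar observable: the loss $\ell(f_w(X),Y)$ for a fixed hypothesis $w$. The crucial alignment to notice up front is that the subgaussianity in Assumption~\ref{ass:subgaussian} is stated \emph{under $\mu$}, so $\mu$ must play the role of the reference (base) measure in Donsker--Varadhan while $\mu'$ is the other measure; this is exactly the direction $\mathrm{D_{KL}}(\mu'||\mu)$ appearing in the claim.

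Concretely, I would instantiate the variational formula with the linear test function $g(z)=\rho\,\ell(f_w(x),y)$ for a free parameter $\rho\in\mathbb{R}$, where $z=(x,y)$. Donsker--Varadhan then yields, for every $\rho$, the inequality $\mathrm{D_{KL}}(\mu'||\mu)\ge \rho\,R_{\mu'}(w)-\log\mathbb{E}_{\mu}\!\left[e^{\rho\,\ell(f_w(X),Y)}\right]$. The next step is to control the log--moment-generating function on the right using subgaussianity: since $\ell(f_w(X),Y)$ is $R$-subgaussian under $\mu$, its centered log-MGF is at most $\rho^2R^2/2$, so $\log\mathbb{E}_{\mu}[e^{\rho\ell}]\le \rho\,R_{\mu}(w)+\rho^2R^2/2$. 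Substituting this bound gives, valid for all $\rho$,
\[
\rho\bigl(R_{\mu'}(w)-R_{\mu}(w)\bigr)-\frac{\rho^2R^2}{2}\le \mathrm{D_{KL}}(\mu'||\mu),
\]
that is, $\rho\,\widetilde{\mathrm{Err}}(w)-\rho^2R^2/2\le \mathrm{D_{KL}}(\mu'||\mu)$.

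It then remains to optimize the left-hand side over $\rho$. The quadratic $\rho\,\widetilde{\mathrm{Err}}(w)-\rho^2R^2/2$ is maximized at $\rho^\star=\widetilde{\mathrm{Err}}(w)/R^2$, with maximal value $\widetilde{\mathrm{Err}}(w)^2/(2R^2)$; because $\rho$ ranges over all of $\mathbb{R}$, the optimizer $\rho^\star$ automatically carries the correct sign, so the same maximum is attained whether $\widetilde{\mathrm{Err}}(w)$ is positive or negative. Hence $\widetilde{\mathrm{Err}}(w)^2/(2R^2)\le \mathrm{D_{KL}}(\mu'||\mu)$, and taking square roots delivers the claimed two-sided bound $|\widetilde{\mathrm{Err}}(w)|\le\sqrt{2R^2\,\mathrm{D_{KL}}(\mu'||\mu)}$.

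I do not anticipate a genuinely hard step here; the argument is a clean ``subgaussian-plus-Donsker--Varadhan'' computation. The two points that need care are (i) matching the KL direction to the measure under which subgaussianity is assumed, since reversing them would bound the wrong quantity, and (ii) observing that allowing $\rho$ to be negative is precisely what upgrades the one-sided inequality into the absolute-value bound, so no separate case analysis on the sign of $\widetilde{\mathrm{Err}}(w)$ is needed.
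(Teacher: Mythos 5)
Your proof is correct and follows essentially the same route as the paper, which simply instantiates its Lemma~\ref{lem:DV-Subgaussian} (Donsker--Varadhan with the test function $t\cdot\ell$, the subgaussian bound on the centered log-MGF, and optimization over the sign-unrestricted parameter $t$) with $Q=\mu'$, $P=\mu$, $g=\ell$. Your explicit quadratic optimization over $\rho$ is the same step the paper phrases via the AM--GM inequality.
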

   Notably this result can be turned into a generalization upper bound providing guidance to algorithm design, and at the same time it provides a lower bound of the generalization error, highlighting some fundamental difficulty of the learning task. To illustrate this, we present a corollary 
   while noting that similar development can also be applied to other bounds presented later in this paper.
   
   Consider that each $f_w$ 
   is expressed as the composition $g\circ h$, where $h$
is a function mapping ${\cal X}$ to a representation space ${\cal T}$  and $g$ is a function mapping ${\cal T}$ to ${\cal Y}$. For any given $h:{\cal X} \rightarrow {\cal T}$, denote by $\mu_{h}$ the distribution on ${\cal T}\times {\cal Y}$  obtained by pushing forward $\mu$ via $h$, that is, $
\mu_{h} (t, y)= \int \delta(t-h(x)) d\mu(x, y)$,
where $\delta$ is the Dirac measure on ${\cal T}$. Similarly, let $\mu'_{h}$ denote the distribution on ${\cal T}\times {\cal Y}$ obtained by pushing forward $\mu'$ via $h$. 
\begin{cor} 
\label{cor:lower-bound}
Suppose that $f_w=g\circ h$ and that Assumption \ref{ass:subgaussian} holds, then for any $w\in\mathcal{W}$,
\begin{align*}
    R_\mu(w)-
 \sqrt{2R^2\mathrm{D_{KL}(\mu'||\mu)}}
 \le
    R_{\mu'}(w) \leq R_\mu(w) + \sqrt{2R^2\mathrm{D_{KL}(\mu'_h||\mu_h)}}.
\end{align*}
\end{cor}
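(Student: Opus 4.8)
The plan is to obtain both inequalities from Theorem~\ref{thm-bound-subgaussian}, applying it once in the original instance space and once in the representation space.

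First I would dispatch the left inequality, which is immediate. Theorem~\ref{thm-bound-subgaussian} gives $|\widetilde{\mathrm{Err}}(w)| = |R_{\mu'}(w) - R_\mu(w)| \le \sqrt{2R^2\mathrm{D_{KL}}(\mu'||\mu)}$, and unpacking the absolute value yields $R_{\mu'}(w) - R_\mu(w) \ge -\sqrt{2R^2\mathrm{D_{KL}}(\mu'||\mu)}$, which rearranges to the claimed lower bound on $R_{\mu'}(w)$.

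For the right inequality, the key observation is that when $f_w = g\circ h$, the loss $\ell(f_w(X), Y) = \ell(g(h(X)), Y)$ depends on $X$ only through the representation $h(X)$. I would therefore define an induced loss $\tilde\ell$ on $\mathcal{T}\times\mathcal{Y}$ by $\tilde\ell(t, y) \triangleq \ell(g(t), y)$, together with the induced predictor $g$ itself. Because $\mu_h$ and $\mu'_h$ are the pushforwards of $\mu$ and $\mu'$ under $(x, y) \mapsto (h(x), y)$, the two population risks are unchanged: $R_\mu(w) = \mathbb{E}_{(T, Y)\sim\mu_h}[\tilde\ell(T, Y)]$ and likewise $R_{\mu'}(w) = \mathbb{E}_{(T, Y)\sim\mu'_h}[\tilde\ell(T, Y)]$. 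Thus the PP generalization error of the predictor $g$ under the loss $\tilde\ell$ in the representation instance space equals $\widetilde{\mathrm{Err}}(w)$ exactly.

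It then remains to re-invoke Theorem~\ref{thm-bound-subgaussian} in this representation space, which requires verifying that its hypothesis --- $R$-subgaussianity of $\tilde\ell(h(X), Y)$ under $\mu_h$ --- continues to hold. This is the one point needing care, but it is in fact automatic: the random variable $\tilde\ell(T, Y)$ with $(T, Y)\sim\mu_h$ has the same law as $\ell(f_w(X), Y)$ with $(X, Y)\sim\mu$, so it inherits the $R$-subgaussianity granted by Assumption~\ref{ass:subgaussian}. Applying Theorem~\ref{thm-bound-subgaussian} to $(\tilde\ell, g, \mu_h, \mu'_h)$ gives $|R_{\mu'}(w) - R_\mu(w)| \le \sqrt{2R^2\mathrm{D_{KL}}(\mu'_h||\mu_h)}$, and taking the upper half of the absolute value delivers the right inequality. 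The main (and only minor) obstacle is the bookkeeping of the pushforward and confirming that subgaussianity transfers intact; no new estimate is needed. I would finally remark that, by the data-processing inequality, $\mathrm{D_{KL}}(\mu'_h||\mu_h) \le \mathrm{D_{KL}}(\mu'||\mu)$, so the representation-space divergence yields the tighter of the two half-bounds, which is exactly why it appears on the upper side.
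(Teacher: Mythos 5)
Your proof is correct and follows exactly the route the paper intends: the lower bound is the left half of Theorem~\ref{thm-bound-subgaussian} applied in the instance space, and the upper bound is the same theorem applied in the representation space, where the pushforward preserves both the population risks and the subgaussianity of the loss. The closing observation that $\mathrm{D_{KL}}(\mu'_h||\mu_h)\le\mathrm{D_{KL}}(\mu'||\mu)$ by data processing is a correct and welcome addition, consistent with the paper's discussion of why the representation-space bound is the useful one.
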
   
In this result, the lower bound of $R_{\mu'}(w)$ indicates a fundamental difficulty in UDA learning in that, using the same predictor mapping $f_w$, there is no way for the population risk in the target domain to be lower than that of the source domain less than a constant which depends only on the domain difference. On the other hand, the upper bound suggests that it is possible to squeeze the gap between the two population risks by choosing an appropriate representation map $h$ - evidently such a map should be attempting to align $\mu'_h$ with $\mu_h$  or to align their respective proxies.

        It is also noteworthy that under Assumption \ref{ass:bounded} and due to Remark \ref{rem:boundedness}, Theorem \ref{thm-bound-subgaussian} implies 
        \begin{align}
        \label{ineq:cor-bound-subgaussian-bounded}
            \left|\widetilde{\mathrm{Err}}(w)\right| \leq {\frac{M}{\sqrt{2}}}\sqrt{\mathrm{D_{KL}}(P_{X'}||P_{X})+\mathrm{D_{KL}}(P_{Y'|X'}||P_{Y|X})}.
        \end{align}
        Similarly applying this result in the representation space $\mathcal {T}$, we see that
        Eq.~(\ref{ineq:cor-bound-subgaussian-bounded}) recovers the bound in Proposition 1 of \cite{nguyen2022kl}. Notice that unlike \cite{nguyen2022kl}, Theorem~\ref{thm-bound-subgaussian} ( or Eq.~(\ref{ineq:cor-bound-subgaussian-bounded})) does not require the loss to be the cross-entropy loss.
        
        Theorem~\ref{thm-bound-subgaussian} and \cite{nguyen2022kl} both use the KL divergence from source domain to target domain, $\mathrm{D_{KL}}(\mu'||\mu)$, and in fact, $\left|\widetilde{\mathrm{Err}}(w)\right|$ can also be upper bounded by $\mathrm{D_{KL}}(\mu||\mu')$. This can be done by invoking the subgaussianality of $\ell(f_w(X'),Y')$ (rather than $\ell(f_w(X),Y)$); for bounded loss, 
        the subgaussianality of $\ell(f_w(X'),Y')$
        is also satisfied. Then we obtain the following corollary.
        \begin{cor}
            \label{cor-bound-symmetric-kl}
            If Assumption \ref{ass:bounded} holds,
             $
    \left|\widetilde{\mathrm{Err}}(w)\right| \leq \frac{M}{\sqrt{2}}\sqrt{\min\{\mathrm{D_{KL}}(\mu||\mu'),\mathrm{D_{KL}}(\mu'||\mu)\}} \leq \frac{M}{2}\sqrt{{\mathrm{D_{KL}}(\mu||\mu')+\mathrm{D_{KL}}(\mu'||\mu)}}.
    $
        \end{cor}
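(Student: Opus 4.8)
The plan is to obtain two one-sided applications of Theorem~\ref{thm-bound-subgaussian} and then combine them with one elementary inequality. First I would invoke Remark~\ref{rem:boundedness}: under Assumption~\ref{ass:bounded} the loss $\ell(f_w(X),Y)$ is $M/2$-subgaussian under $\mu$. Substituting $R = M/2$ into Theorem~\ref{thm-bound-subgaussian} immediately yields
\[
\left|\widetilde{\mathrm{Err}}(w)\right| \le \sqrt{2(M/2)^2\,\mathrm{D_{KL}}(\mu'||\mu)} = \frac{M}{\sqrt{2}}\sqrt{\mathrm{D_{KL}}(\mu'||\mu)}.
\]

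Next I would exploit the symmetry of $\left|\widetilde{\mathrm{Err}}(w)\right|$ in the two domains. Since $\widetilde{\mathrm{Err}}(w) = R_{\mu'}(w) - R_{\mu}(w)$, we have $\left|\widetilde{\mathrm{Err}}(w)\right| = \left|R_{\mu}(w) - R_{\mu'}(w)\right|$, so $\mu$ and $\mu'$ enter interchangeably once we pass to absolute value. The only asymmetry in the proof of Theorem~\ref{thm-bound-subgaussian} is which domain supplies the moment-generating-function control of the loss in the Donsker--Varadhan step. Under Assumption~\ref{ass:bounded} the loss $\ell(f_w(X'),Y')$ is likewise bounded in $[0,M]$, hence $M/2$-subgaussian under $\mu'$. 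I would therefore re-run the argument of Theorem~\ref{thm-bound-subgaussian} with the roles of $\mu$ and $\mu'$ interchanged (equivalently, apply the theorem to the pair $(\mu',\mu)$), obtaining
\[
\left|\widetilde{\mathrm{Err}}(w)\right| \le \frac{M}{\sqrt{2}}\sqrt{\mathrm{D_{KL}}(\mu||\mu')}.
\]
Taking the smaller of the two right-hand sides delivers the first inequality of the corollary.

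Finally, for the second inequality I would apply the elementary fact $2\min\{a,b\} \le a+b$ for $a,b \ge 0$, with $a = \mathrm{D_{KL}}(\mu||\mu')$ and $b = \mathrm{D_{KL}}(\mu'||\mu)$. Dividing the target inequality by $M$ and squaring reduces it to $\tfrac{1}{2}\min\{a,b\} \le \tfrac{1}{4}(a+b)$, which is exactly this fact; monotonicity of the square root then closes the gap.

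The main obstacle I anticipate is the symmetry step: Theorem~\ref{thm-bound-subgaussian} is stated only with $\mathrm{D_{KL}}(\mu'||\mu)$ and subgaussianity under $\mu$, so the bound in terms of $\mathrm{D_{KL}}(\mu||\mu')$ is not a verbatim instance of it. The point requiring care is that the Donsker--Varadhan estimate is indifferent to which marginal provides the subgaussian tail control, and that boundedness---rather than mere subgaussianity under $\mu$ (Assumption~\ref{ass:subgaussian})---is precisely what guarantees subgaussianity of the loss under $\mu'$ as well. This is exactly why the corollary invokes Assumption~\ref{ass:bounded}; everything remaining is arithmetic.
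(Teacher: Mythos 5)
Your proof is correct and follows essentially the same route as the paper: the paper likewise obtains the $\mathrm{D_{KL}}(\mu||\mu')$ bound by applying the Donsker--Varadhan/subgaussian lemma with the roles of the two measures swapped (taking $Q=\mu$, $P=\mu'$, valid because boundedness gives $M/2$-subgaussianity under $\mu'$ as well), combines it with Theorem~\ref{thm-bound-subgaussian} to form the minimum, and finishes with $\min\{A,B\}\le\tfrac{1}{2}(A+B)$. Your attention to why Assumption~\ref{ass:bounded} (rather than Assumption~\ref{ass:subgaussian}) is needed for the swapped direction matches the paper's own discussion preceding the corollary.
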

        \begin{rem}
        In the second inequality of Corollary~\ref{cor-bound-symmetric-kl}, $\mathrm{D_{KL}}(\mu||\mu')+\mathrm{D_{KL}}(\mu'||\mu)$ is known as the symmetrized KL divergence, or
        Jeffrey’s divergence \citep{jeffreys1946invariant}, and in fact, 
        \cite{nguyen2022kl} penalizes this measure between the source and target distributions in the representation space.
        Notice that bounds in \cite{shui2020beyond} are based on the JS divergence. Since there is a sharp upper bound of the JS divergence based on Jeffrey’s divergence \citep{Crooks2008InequalitiesBT},  minimizing Jeffrey’s divergence (in the representation space) will simultaneously penalize the JS divergence.
        \end{rem}
        
        In UDA, since $Y'$ is completely unavailable to the algorithm $\mathcal{A}$, it is impossible to minimize the misalignment of conditional distributions, i.e. $\mathrm{D_{KL}}(P_{Y'|T'}||P_{Y|T})$ where $T$ and $T'$ are representations of source domain and target domain, respectively. 
        A common method is to assign pseudo labels to target data based on a learned source classifier \citep{liang2020we}. However, it may also cause additional issues \citep{shen2022benefits}. For concreteness, suppose the trained model $Q$ can well approximate the real mapping between $X$ and $Y$ on source domain (i.e. $Q_{Y|T}=P_{Y|T}$), which is usually the training objective. Let $\hat{Y'}$ be the pseudo label of $T'$ generated by the trained model, i.e., $Q_{\hat{Y'}|T'}=Q_{Y|T}$. Let $Q_{T',\hat{Y'}}=P_{T'}Q_{\hat{Y'}|T'}$, then the following holds, 
\begin{align}
    \mathrm{D_{KL}}(P_{T',Y'}||P_{T,Y})=\mathbb{E}_{P_{T',Y'}}\log\frac{P_{T',Y'}Q_{T',\hat{Y'}}}{Q_{T',\hat{Y'}}P_{T,Y}}=\mathrm{D_{KL}}(P_{T'}||P_{T})+\mathrm{D_{KL}}(P_{Y'|T'}||Q_{\hat{Y'}|T'}).
    \label{eq:pseudo-label}
\end{align}
For a specific $t'$, if $P(Y'=y'|T'=t')\neq 0$ and $Q(\hat{Y'}=y'|T'=t')= 0$, then the second term in RHS of Eq. (\ref{eq:pseudo-label}), $\mathrm{D_{KL}}(P_{Y'|T'}||Q_{\hat{Y'}|T'})\rightarrow\infty$. In this case, even when the marginal distributions are perfectly aligned, the overall value of the upper bound is large. Thus, incorrect pseudo labels may even have negative impact on the target domain performance. 
        
         In fact, the misalignment of the conditional distributions appears to be the main difficulty of UDA \citep{ben2006analysis,acuna2021f}.
          The next corollary suggests that this difficulty may be alleviated when the loss function satisfies the triangle property,  namely, Assumption \ref{ass:triangle}. It can be verified that this assumption is satisfied by the 0-1 loss \footnote{Some losses that only satisfy a general version of Assumption \ref{ass:triangle} are discussed in Appendix~\ref{sec:generalize-triangle}}; this assumption has also been considered in previous works \citep{MansourMR09,shen2018wasserstein}.
    \begin{thm}
            \label{cor-bound-subgaussian-triangle}
            If Assumption \ref{ass:triangle} holds and let $\ell(f_{w'}(X),f_{w}(X))$ be $R$-subgaussian for any $w,w'\in \mathcal{W}$. Then for any $w$,
             $
    \widetilde{\mathrm{Err}}(w) \leq \sqrt{2R^2\mathrm{D_{KL}}(P_{X'}||P_{X})}+\lambda^*$, where $\lambda^*=\min_{w\in\mathcal{W}}R_{\mu'}(w)+R_\mu(w)$.
        \end{thm}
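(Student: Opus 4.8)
The plan is to adapt the classical ``ideal joint hypothesis'' argument from the domain-adaptation literature (e.g.\ \citep{ben2010theory,MansourMR09,shen2018wasserstein}), but to carry out the cross-domain change of measure with the information-theoretic (Donsker--Varadhan / subgaussian) machinery rather than with a discrepancy distance. Let $w^{*}\in\arg\min_{w\in\mathcal{W}}\{R_{\mu'}(w)+R_{\mu}(w)\}$, so that $R_{\mu'}(w^{*})+R_{\mu}(w^{*})=\lambda^{*}$ (if the minimum is not attained, take a near-minimizer and pass to the limit). Since $f_{w}(x)$ and $f_{w^{*}}(x)$ both lie in $\mathcal{Y}$, Assumption \ref{ass:triangle} can be invoked with these predictions as arguments.

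First I would peel off the label dependence on the target side. Using $\ell(f_{w}(X'),Y')\le \ell(f_{w}(X'),f_{w^{*}}(X'))+\ell(f_{w^{*}}(X'),Y')$ and taking expectation over $Z'\sim\mu'$ gives
\[
R_{\mu'}(w)\le \mathbb{E}_{X'}\!\left[\ell(f_{w}(X'),f_{w^{*}}(X'))\right]+R_{\mu'}(w^{*}).
\]
The crucial observation is that $d_{w}(x):=\ell(f_{w}(x),f_{w^{*}}(x))$ is a function of $x$ \emph{alone}; its expectation above is taken under the target marginal $P_{X'}$, whereas by hypothesis $d_{w}(X)$ is $R$-subgaussian under the source marginal $P_{X}$.

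The middle step is the transport of this expectation from $P_{X'}$ to $P_{X}$, for which I would reuse exactly the Donsker--Varadhan estimate behind Theorem \ref{thm-bound-subgaussian} (Lemma \ref{lem:DV-KL}): for a variable that is $R$-subgaussian under $P_{X}$, changing the measure to $P_{X'}$ costs at most $\sqrt{2R^{2}\,\mathrm{D_{KL}}(P_{X'}\|P_{X})}$, so that $\mathbb{E}_{X'}[d_{w}(X')]\le \mathbb{E}_{X}[d_{w}(X)]+\sqrt{2R^{2}\,\mathrm{D_{KL}}(P_{X'}\|P_{X})}$. Finally I would peel off the label dependence on the source side by a second triangle inequality with $Y$ as pivot, $\ell(f_{w}(X),f_{w^{*}}(X))\le \ell(f_{w}(X),Y)+\ell(Y,f_{w^{*}}(X))$, whose expectation under $\mu$ yields $\mathbb{E}_{X}[d_{w}(X)]\le R_{\mu}(w)+R_{\mu}(w^{*})$ (here $\ell$ is treated as symmetric, as it is for the $0$--$1$ and square losses). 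Chaining the three bounds and subtracting $R_{\mu}(w)$ produces exactly $\widetilde{\mathrm{Err}}(w)\le\sqrt{2R^{2}\,\mathrm{D_{KL}}(P_{X'}\|P_{X})}+\lambda^{*}$.

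I expect the only delicate point to be the middle transport step: one must note that $d_{w}$ depends on $x$ only, so the relevant divergence is between the input marginals $P_{X'}$ and $P_{X}$ (rather than the joints $\mu',\mu$), and that the assumed subgaussianity of $\ell(f_{w'}(X),f_{w}(X))$ is precisely what licenses the Donsker--Varadhan bound at the marginal level. The two triangle-inequality steps are routine, modulo the implicit use of symmetry of $\ell$ to identify $\mathbb{E}_{X,Y}[\ell(Y,f_{w^{*}}(X))]$ with $R_{\mu}(w^{*})$.
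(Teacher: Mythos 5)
Your proposal is correct and follows essentially the same route as the paper's proof: fix the joint-risk minimizer $w^{*}$, apply the triangle inequality on the target side with $f_{w^{*}}(X')$ as pivot, transport $\mathbb{E}[\ell(f_{w}(\cdot),f_{w^{*}}(\cdot))]$ from $P_{X'}$ to $P_{X}$ via the Donsker--Varadhan/subgaussian estimate (Lemma \ref{lem:DV-Subgaussian}), and apply the triangle inequality again on the source side with $Y$ as pivot. Your explicit remark about the implicit symmetry of $\ell$ in identifying $\mathbb{E}_{Z}[\ell(Y,f_{w^{*}}(X))]$ with $R_{\mu}(w^{*})$ is a point the paper's own proof also relies on silently.
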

        Here $\lambda^*$ measures the possibility of whether the domain adaptation algorithm will succeed  under the oracle knowledge of $\mu$ and
        $\mu'$. In particular, if the hypothesis space is large enough,  the minimizer $w^*$ for the ``joint population risk''
        $R_{\mu'}(w)+R_\mu(w)$ may give rise to 
   $R_{\mu'}(w^*)=R_\mu(w^*)=0$,  then we're likely to generalize well on the target domain.  Then the KL divergence 
   $\mathrm{D_{KL}}(P_{X'}||P_{X})$ between the two $\mathcal {X}$-marginals alone bounds the PP generalization error uniformly for all $w\in \mathcal{W}$.

    
    This theorem motivates the strategy of penalizing  $\mathrm{D_{KL}}(P_{T'}||P_{T})$ in the representation space for UDA. The next theorem suggests that such an approach also penalizes other notions of domain discrepancy, for example, the key quantity in the PAC-Bayes type of domain adaptation generalization bounds \citep{germain2020pac}, that is defined as
    \begin{align}
         \mathrm{dis}(P_X,P_{X'})\triangleq\left|\ex{W,W'}{\ex{X'}{\ell(f_{W}(X'),f_{W'}(X'))}}-\ex{W,W'}{\ex{X}{\ell(f_{W}(X),f_{W'}(X))}}\right|.
         \label{eq:domain-agree}
    \end{align}
    \begin{thm}
    \label{cor:kl-expected-disagreement} If $\ell(f_{w'}(X),f_w(X))$ is $R$-subgaussian for any $f_w,f_w'\in \mathcal{F}$, then
    $
       \mathrm{dis}(P_X,P_{X'})\leq\sqrt{2R^2\mathrm{D_{KL}}(P_{X'}||P_{X})}.
       $
    \end{thm}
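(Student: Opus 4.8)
The plan is to reduce the two-argument disagreement quantity to a difference of expectations of a \emph{single} scalar function taken under the two input marginals $P_{X'}$ and $P_X$, and then to invoke the standard subgaussian consequence of the Donsker--Varadhan representation (Lemma~\ref{lem:DV-KL}), exactly as in Theorem~\ref{thm-bound-subgaussian}. Throughout I would use that the weight draws $W, W'$ producing the two hypotheses are independent of the evaluation point (they are generated from the training data, not from the fresh $X$ or $X'$ at which the disagreement is measured).

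Concretely, I would define the averaged disagreement function $g(x) \triangleq \ex{W,W'}{\ell(f_W(x), f_{W'}(x))}$, where $W$ and $W'$ are independent copies of the algorithm's output. Since $W, W'$ are independent of the evaluation point, Fubini's theorem gives $\ex{W,W',X'}{\ell(f_W(X'), f_{W'}(X'))} = \ex{X'\sim P_{X'}}{g(X')}$ and likewise $\ex{W,W',X}{\ell(f_W(X), f_{W'}(X))} = \ex{X\sim P_X}{g(X)}$, so that $\mathrm{dis}(P_X, P_{X'}) = \left| \ex{X'\sim P_{X'}}{g(X')} - \ex{X\sim P_X}{g(X)} \right|$.

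The crux of the argument, and the step I expect to require the most care, is transferring the \emph{per-pair} subgaussianity hypothesis on $\ell(f_{w'}(X), f_w(X))$ to the \emph{averaged} function $g$. Writing $h_{w,w'}(x) \triangleq \ell(f_w(x), f_{w'}(x))$, $c_{w,w'} \triangleq \ex{X\sim P_X}{h_{w,w'}(X)}$, and $\bar g \triangleq \ex{X\sim P_X}{g(X)} = \ex{W,W'}{c_{W,W'}}$, I would center and apply convexity of the exponential (Jensen's inequality) in the form
\[
\ex{X}{e^{\rho(g(X) - \bar g)}} = \ex{X}{\exp\!\Big(\rho\, \ex{W,W'}{h_{W,W'}(X) - c_{W,W'}}\Big)} \le \ex{W,W'}{\ex{X}{e^{\rho(h_{W,W'}(X) - c_{W,W'})}}} \le e^{\rho^2 R^2/2},
\]
where the last inequality uses the assumed $R$-subgaussianity of each $h_{w,w'}(X)$ under $P_X$ together with $\ex{W,W'}{1}=1$. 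This shows $g$ is itself $R$-subgaussian under $P_X$. Note that averaging over the weights \emph{before} exponentiating is essential: the variability across different $(w,w')$ is not controlled by the hypothesis, so a direct subgaussianity claim on the joint $(W,W',X)$ would fail, and it is precisely because the disagreement only involves expectations over $W,W'$ that integrating them out first is legitimate.

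Finally, I would apply the standard consequence of Lemma~\ref{lem:DV-KL} with $P = P_{X'}$ and $Q = P_X$, namely optimizing $\rho\big(\ex{X'\sim P_{X'}}{g} - \ex{X\sim P_X}{g}\big) - \rho^2 R^2/2$ over $\rho>0$ (the same optimization as in the proof of Theorem~\ref{thm-bound-subgaussian}), to obtain $\ex{X'\sim P_{X'}}{g} - \ex{X\sim P_X}{g} \le \sqrt{2R^2 \mathrm{D_{KL}}(P_{X'} \| P_X)}$. Since $-g$ is $R$-subgaussian whenever $g$ is, the same bound holds for the reversed difference; taking the larger of the two recovers the absolute-value bound $\mathrm{dis}(P_X, P_{X'}) \le \sqrt{2R^2 \mathrm{D_{KL}}(P_{X'}\|P_X)}$, as claimed.
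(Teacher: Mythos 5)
Your proof is correct, and it uses the same two ingredients as the paper's: the Donsker--Varadhan machinery of Lemmas~\ref{lem:DV-KL} and~\ref{lem:DV-Subgaussian} applied to $\mathrm{D_{KL}}(P_{X'}||P_{X})$, plus one application of Jensen's inequality to handle the average over $(W,W')$. The organization differs in a way worth comparing. The paper keeps $(w,w')$ inside the Donsker--Varadhan supremum, lower-bounds that supremum by its average over $(W,W')$, pushes the average inside the logarithm by concavity, and then invokes Lemma~\ref{lem:DV-Subgaussian}; read literally, that final invocation would require subgaussianity of $\ell(f_W(X),f_{W'}(X))$ under the \emph{joint} law of $(W,W',X)$, which does not follow from the per-pair hypothesis because the centers $\mathbb{E}_X[\ell(f_w(X),f_{w'}(X))]$ vary with $(w,w')$ --- the intended reading is that the subgaussian bound is applied while the expectation over $(W,W')$ is still outside the logarithm. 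You instead integrate $(W,W')$ out at the start, define the single averaged function $g(x)=\mathbb{E}_{W,W'}[\ell(f_W(x),f_{W'}(x))]$, and verify directly, via $\exp(\mathbb{E}[\cdot])\le\mathbb{E}[\exp(\cdot)]$ and Fubini, that $g$ is itself $R$-subgaussian under $P_X$, after which Lemma~\ref{lem:DV-Subgaussian} applies verbatim as in Theorem~\ref{thm-bound-subgaussian}. Your explicit remark that a naive subgaussianity claim for the joint variable would fail is precisely the point the paper leaves implicit, so your route is the more airtight write-up of the same underlying argument, at the cost of a slightly longer reduction step; the paper's version is more compact but demands that the reader apply the subgaussian moment bound at the correct intermediate line of its chain of inequalities.
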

    Note that unlike \cite{germain2020pac}, here we do not require the loss function to be the 0-1 loss.
    
\subsection{Generalization Bounds via the Lipschitz Condition}    
     We now present such generalization bound for UDA under the Lipschitz continuity assumption of the loss function, where ${\mathbb W}(\cdot, \cdot)$ denotes the Wasserstein distance.
     
        \begin{thm}
            \label{thm-bound-lipschitz} If Assumption \ref{ass:lipshitz} holds, then
             $
    \left|\widetilde{\mathrm{Err}}(w)\right| \leq \beta\mathbb{W}(\mu',\mu).
    $
        \end{thm}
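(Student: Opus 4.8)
The plan is to exploit the primal (coupling) characterization of the Wasserstein distance together with the fact that Assumption~\ref{ass:lipshitz} makes the composed loss map $z=(x,y)\mapsto \ell(f_w(x),y)$ a $\beta$-Lipschitz function on $(\mathcal{Z},d)$. Writing $\widetilde{\mathrm{Err}}(w)=R_{\mu'}(w)-R_\mu(w)=\mathbb{E}_{Z'}[\ell(f_w(X'),Y')]-\mathbb{E}_{Z}[\ell(f_w(X),Y)]$, the whole quantity is a difference of expectations of one and the same $\beta$-Lipschitz observable under the two domain laws $\mu'$ and $\mu$, which is exactly the kind of object the Wasserstein distance is built to control.

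Concretely, I would fix $w$ and recall the transport definition $\mathbb{W}(\mu',\mu)=\inf_{\gamma\in\Pi(\mu',\mu)}\mathbb{E}_{(Z',Z)\sim\gamma}[d(Z',Z)]$, where $\Pi(\mu',\mu)$ is the set of all couplings with marginals $\mu'$ and $\mu$. For any fixed coupling $\gamma$, the two marginal expectations combine into a single expectation under $\gamma$, so that
\begin{align*}
\left|\widetilde{\mathrm{Err}}(w)\right| = \left|\mathbb{E}_{(Z',Z)\sim\gamma}\left[\ell(f_w(X'),Y')-\ell(f_w(X),Y)\right]\right| \le \mathbb{E}_{(Z',Z)\sim\gamma}\left|\ell(f_w(X'),Y')-\ell(f_w(X),Y)\right|,
\end{align*}
where the inequality is the triangle inequality for integrals (equivalently, Jensen applied to the convex map $t\mapsto|t|$). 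Applying the $\beta$-Lipschitz bound of Assumption~\ref{ass:lipshitz} pointwise inside the expectation gives $\mathbb{E}_{(Z',Z)\sim\gamma}\left|\ell(f_w(X'),Y')-\ell(f_w(X),Y)\right|\le \beta\,\mathbb{E}_{(Z',Z)\sim\gamma}[d(Z',Z)]$. Since the resulting inequality $\left|\widetilde{\mathrm{Err}}(w)\right|\le \beta\,\mathbb{E}_{\gamma}[d(Z',Z)]$ holds for every coupling $\gamma\in\Pi(\mu',\mu)$, taking the infimum over $\gamma$ on the right-hand side yields $\left|\widetilde{\mathrm{Err}}(w)\right|\le \beta\,\mathbb{W}(\mu',\mu)$.

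I expect essentially no serious obstacle; the argument is a direct transport estimate. The only points requiring care are (i) confirming that the metric $d$ appearing in Assumption~\ref{ass:lipshitz} is the same metric that defines $\mathbb{W}$, so the Lipschitz constant $\beta$ transfers cleanly, and (ii) handling the absolute value in the right order, since the claim is about $\left|\widetilde{\mathrm{Err}}(w)\right|$ rather than the signed error — this is why I bound the expectation of the \emph{absolute} difference before invoking the Lipschitz property. An equivalent route would be to invoke Kantorovich--Rubinstein duality, $\mathbb{W}(\mu',\mu)=\sup_{\mathrm{Lip}(g)\le 1}\left|\mathbb{E}_{\mu'}[g]-\mathbb{E}_{\mu}[g]\right|$, applied to $g=\ell(f_w(\cdot))/\beta$; I prefer the primal coupling argument above because it avoids the regularity conditions attached to the dual formulation and keeps the proof self-contained.
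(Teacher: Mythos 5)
Your proof is correct, but it takes a different route from the paper's. The paper proves Theorem~\ref{thm-bound-lipschitz} in one line by invoking the Kantorovich--Rubinstein duality (Lemma~\ref{lem:KR duality} in the appendix): since $z\mapsto \ell(f_w(x),y)/\beta$ is $1$-Lipschitz under Assumption~\ref{ass:lipshitz}, the difference of its expectations under $\mu'$ and $\mu$ is bounded by $\mathbb{W}(\mu',\mu)$, and the absolute value is handled because the supremum in the dual formula is over a class closed under negation. You instead work from the primal (coupling) definition: for any coupling $\gamma\in\Pi(\mu',\mu)$ you merge the two expectations, apply the triangle inequality for integrals, use the pointwise Lipschitz bound, and then take the infimum over couplings. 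Both arguments are sound and equally short. Your primal route is more elementary and self-contained --- it needs only the definition of $\mathbb{W}$ and no duality theorem, and it makes the handling of the absolute value completely explicit --- whereas the paper's route leverages a lemma it has already stated and reuses elsewhere (e.g., in Theorem~\ref{thm:wasserstein-bound-expected}), so it keeps the proof to a single appeal to that lemma. Your observation that the metric $d$ in Assumption~\ref{ass:lipshitz} must coincide with the one defining $\mathbb{W}$ is a point the paper leaves implicit, and it is worth stating.
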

    Theorem \ref{thm-bound-lipschitz}
    can be related to the KL-based bounds in the previous section when the Wasserstein distance is defined with respect to the discrete metric $d$. In this case and under bounded loss function, which is also Liptschitz continuous, Theorem \ref{thm-bound-lipschitz} follows. 
     On the other hand, Wasserstein distance is also equivalent to the total variation in this case,
         while the latter is connected to the KL divergence via Pinsker's inequality \cite[Theorem~6.5]{polyanskiy2019lecture} and the Bretagnolle-Huber inequality \cite[Lemma~2.1]{bretagnolle1979estimation}. 
        Thus, we arrive at the following result.
        \begin{cor}
            \label{cor-bound-total-variation-bounded} 
            If Assumption \ref{ass:bounded} holds holds and let $d$ be the discrete metric, then
             \[
                 \left|\widetilde{\mathrm{Err}}(w)\right| \leq M\mathrm{TV}(\mu',\mu)\leq M\sqrt{\min\left\{\frac{1}{2}\mathrm{D_{KL}}(\mu'||\mu),1-e^{-\mathrm{D_{KL}}(\mu'||\mu)}\right\}}.
             \]
        \end{cor}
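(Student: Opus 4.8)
The plan is to obtain the corollary as a specialization of Theorem~\ref{thm-bound-lipschitz} to the discrete metric, then convert the resulting Wasserstein distance into total variation and finally into KL divergence. First I would invoke the remark following Assumption~\ref{ass:lipshitz}: when $\ell$ is bounded in $[0,M]$ (Assumption~\ref{ass:bounded}) and $d$ is the discrete metric, the loss $\ell(f_w(X),Y)$ is $M$-Lipschitz, so Assumption~\ref{ass:lipshitz} holds with $\beta=M$. Theorem~\ref{thm-bound-lipschitz} then immediately gives $\left|\widetilde{\mathrm{Err}}(w)\right|\le M\,\mathbb{W}(\mu',\mu)$, where $\mathbb{W}$ is taken with respect to the discrete metric.

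The key step is to identify this Wasserstein distance with the total variation distance. Using the Kantorovich (coupling) formulation, $\mathbb{W}(\mu',\mu)=\inf_{\gamma}\ex{(Z,Z')\sim\gamma}{d(Z,Z')}$, where the infimum ranges over all couplings $\gamma$ of $\mu'$ and $\mu$. Since $d$ equals $1$ when $Z\neq Z'$ and $0$ otherwise, we have $\ex{(Z,Z')\sim\gamma}{d(Z,Z')}=\mathbb{P}_\gamma\left[Z\neq Z'\right]$, so $\mathbb{W}(\mu',\mu)=\inf_{\gamma}\mathbb{P}_\gamma\left[Z\neq Z'\right]$. By the standard coupling characterization of total variation, this infimum equals $\mathrm{TV}(\mu',\mu)$, which establishes the first inequality $M\,\mathbb{W}(\mu',\mu)=M\,\mathrm{TV}(\mu',\mu)$ --- the Wasserstein/total-variation equivalence noted via \citep{ben2006analysis}. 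I expect this identification to be the only substantive step; the remaining work is assembling standard inequalities.

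Finally I would bound $\mathrm{TV}(\mu',\mu)$ by the two KL-based quantities separately. Pinsker's inequality \citep{polyanskiy2019lecture} yields $\mathrm{TV}(\mu',\mu)\le\sqrt{\tfrac{1}{2}\mathrm{D_{KL}}(\mu'||\mu)}$, while the Bretagnolle--Huber inequality \citep{bretagnolle1979estimation} yields $\mathrm{TV}(\mu',\mu)\le\sqrt{1-e^{-\mathrm{D_{KL}}(\mu'||\mu)}}$. Since both bounds hold simultaneously, $\mathrm{TV}(\mu',\mu)$ is at most the minimum of the two right-hand sides; multiplying through by $M$ gives the second inequality and completes the proof. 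The main obstacle, such as it is, is confirming the coupling characterization of total variation that underlies the Wasserstein-to-TV conversion, but this is classical and can simply be cited.
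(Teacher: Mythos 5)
Your proposal is correct and follows essentially the same route as the paper: specialize Theorem~\ref{thm-bound-lipschitz} to the discrete metric with $\beta=M$, identify the resulting Wasserstein distance with total variation, and then apply Pinsker's inequality (Lemma~\ref{lem:pinsker}) and the Bretagnolle--Huber inequality (Lemma~\ref{lem:BH}). The only difference is that you spell out the coupling characterization of total variation, which the paper simply cites as a known equivalence.
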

        Note that results here are inspired by the work of \cite{rodriguez2021tighter}. Corollary \ref{cor-bound-total-variation-bounded} provides a tighter bound than the one in Eq.~(\ref{ineq:cor-bound-subgaussian-bounded}), as can be directly verified.
        
        Parallel to Theorem \ref{cor-bound-subgaussian-triangle}, if the loss function satisfies the triangle property, we may establish the bound below, which recovers a similar result in \cite[Theorem~1.]{shen2018wasserstein} but without restricting the task to be binary classification or requiring the loss to be the $L_1$ distance.
    \begin{thm}
            \label{thm-bound-lipshitz-triangle}
            If Assumption \ref{ass:triangle} holds and $\ell(f_w(X),f_{w'}(X))$ is $\beta$-Lipschitz in $\mathcal{X}$ for any $w,w'\in \mathcal{W}$, then for any $w\in\mathcal{W}$,
             $
    \widetilde{\mathrm{Err}}(w) \leq \beta\mathbb{W}(P_{X'},P_X)+\lambda^*$, where $\lambda^*=\min_{w\in\mathcal{W}}R_{\mu'}(w)+R_\mu(w)$.
        \end{thm}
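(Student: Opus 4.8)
The plan is to mirror the proof of Theorem~\ref{cor-bound-subgaussian-triangle}, simply swapping its subgaussian/Donsker--Varadhan estimate for a Lipschitz/Kantorovich--Rubinstein estimate. Let $w^\ast \in \arg\min_{w\in\mathcal{W}} R_{\mu'}(w)+R_{\mu}(w)$, so that $\lambda^\ast = R_{\mu'}(w^\ast)+R_{\mu}(w^\ast)$. The idea is to route the comparison between $R_{\mu'}(w)$ and $R_\mu(w)$ through the ``cross-prediction'' loss $g(x)\triangleq \ell(f_w(x),f_{w^\ast}(x))$, which is exactly the object the Lipschitz hypothesis controls and whose expectations under the two $\mathcal{X}$-marginals can then be compared via the Wasserstein distance.

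First I would apply Assumption~\ref{ass:triangle} on the target side: with $y_1=f_w(x')$, $y_2=f_{w^\ast}(x')$, $y_3=y'$, the triangle inequality gives $\ell(f_w(x'),y') \le \ell(f_w(x'),f_{w^\ast}(x')) + \ell(f_{w^\ast}(x'),y')$. Taking expectation over $Z'\sim\mu'$ yields
\[
R_{\mu'}(w) \le \mathbb{E}_{X'}\!\left[\ell(f_w(X'),f_{w^\ast}(X'))\right] + R_{\mu'}(w^\ast).
\]
Symmetrically, applying the triangle inequality on the source side with the middle point $y_3=y$ bounds the cross-prediction loss by the two individual losses, and taking expectation over $Z\sim\mu$ gives $\mathbb{E}_{X}[\ell(f_w(X),f_{w^\ast}(X))] \le R_\mu(w)+R_\mu(w^\ast)$. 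Here I use that $\ell$ is symmetric to identify $\mathbb{E}_Z[\ell(Y,f_{w^\ast}(X))]$ with $R_\mu(w^\ast)$, which holds for the $0$--$1$ and square losses of interest, exactly as in Theorem~\ref{cor-bound-subgaussian-triangle}.

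The central step is to bridge the two marginals. Since $g(x)=\ell(f_w(x),f_{w^\ast}(x))$ is $\beta$-Lipschitz in $\mathcal{X}$ by hypothesis, the Kantorovich--Rubinstein dual representation of the order-$1$ Wasserstein distance (taken with respect to the same metric under which $g$ is Lipschitz) gives
\[
\mathbb{E}_{X'}\!\left[g(X')\right] - \mathbb{E}_{X}\!\left[g(X)\right] \le \beta\,\mathbb{W}(P_{X'},P_X).
\]
Chaining the three displays and rearranging produces $R_{\mu'}(w)-R_\mu(w) \le \beta\,\mathbb{W}(P_{X'},P_X) + \bigl(R_{\mu'}(w^\ast)+R_\mu(w^\ast)\bigr) = \beta\,\mathbb{W}(P_{X'},P_X)+\lambda^\ast$, which is the claimed bound (with the constant $L$ in the statement equal to $\beta$).

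I expect the only genuine subtlety to lie in the Wasserstein step: one must confirm that the hypothesis ``$\ell(f_w(X),f_{w'}(X))$ is $\beta$-Lipschitz in $\mathcal{X}$'' is precisely what licenses applying Kantorovich--Rubinstein to $g$, and that $\mathbb{W}(P_{X'},P_X)$ is the transport distance tied to that same metric. The two triangle-inequality applications are routine, and the only piece of bookkeeping is the use of loss symmetry to match $\mathbb{E}_Z[\ell(Y,f_{w^\ast}(X))]$ with $R_\mu(w^\ast)$, which is harmless for the stated loss examples.
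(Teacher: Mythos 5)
Your proposal is correct and follows essentially the same route as the paper's proof: fix the joint minimizer $w^\ast$, apply the triangle property of the loss on the target side to introduce the cross-prediction loss $\ell(f_w(\cdot),f_{w^\ast}(\cdot))$, transfer its expectation from $P_{X'}$ to $P_X$ via Kantorovich--Rubinstein duality using the assumed Lipschitzness, and apply the triangle property again on the source side to recover $R_\mu(w)+\lambda^\ast$. Your explicit remark that identifying $\mathbb{E}_Z[\ell(Y,f_{w^\ast}(X))]$ with $R_\mu(w^\ast)$ uses symmetry of the loss is a point the paper's proof passes over silently, but it does not change the argument.
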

        
        These results justify the strategy of minimizing domain discrepancy in the representation space. 
        Since the KL-based bounds upper-bound those based on other measures of domain differences, 
        penalizing the KL divergence will also penalize those other measures. This is practically advantageous since it is usually easier and more stable to minimize the KL divergence \citep{nguyen2022kl}.


\section{Upper Bounds for EP Generalization Error and Applications}
\label{sec:expected-bound}

There are two limitations in the bounds on the PP generalization error developed so far and in the traditional 
analysis of UDA. First, such bounds are independent of $w$ and hence  algorithm-independent.  Second, although these  bounds may inspire strategies to exploit the unlabelled target sample, e.g., aligning the source and target distributions in the representation space, 
they only provide very limited knowledge on the role that the unlabelled target sample plays. Inspired by the works of \cite{negrea2019information} and \cite{rodriguez2021random}, we derive upper bounds for the EP generalization error that take better advantage of the dependence of the algorithm's output on the unlabelled target data. Applications of these bounds in designing the learning algorithms are also presented.

\subsection{EP Generalization Bounds}
    \begin{thm}
    \label{thm:ood-semi-bound-1}
    Assume $\ell(f_W(X'),Y')$ is $R$-subgaussian under $P_{W,Z'|X_j'=x_j'}$ for any $x_j'\in\mathcal{X}$, then 
    \[
    \left|{\mathrm{Err}}\right|
    \leq \frac{1}{nm}\sum_{j=1}^{m}\sum_{i=1}^n\mathbb{E}_{X_j'}\sqrt{2R^2I^{X_j'}(W;Z_i)}+\sqrt{2R^2\mathrm{D_{KL}}(\mu||\mu')}.
    \]
    \end{thm}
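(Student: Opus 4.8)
The plan is to control $\mathrm{Err}$ by an $nm$-fold average of conditional one-sample gaps, each handled by a single application of the subgaussian change-of-measure behind Lemma~\ref{lem:DV-KL}. The one conceptual idea that makes everything work is to decouple the output $W$ from a source point $Z_i$ using a reference \emph{product} measure whose data marginal is the \emph{target} $\mu'$ rather than the source $\mu$. This is precisely what turns the target-side subgaussianity into the correct hypothesis and, as a bonus, manufactures the domain-discrepancy KL term $\mathrm{D_{KL}}(\mu\|\mu')$ for free through the chain rule for relative entropy.

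Concretely, I would first rewrite
\[
\mathrm{Err}=\frac{1}{nm}\sum_{j=1}^{m}\sum_{i=1}^{n}\mathbb{E}_{X_j'}\Big[\,\mathbb{E}\big[R_{\mu'}(W)-\ell(f_W(X_i),Y_i)\,\big|\,X_j'\big]\Big].
\]
The average over $i$ is the one already present in $R_S$; the average over $j$ together with the conditioning is simply the identity, since $R_{\mu'}(W)$ does not depend on $j$ and $\mathbb{E}_{X_j'}\mathbb{E}[\,\cdot\mid X_j']=\mathbb{E}[\,\cdot\,]$. It therefore suffices to bound, for each fixed pair $(i,j)$ and each realization $X_j'=x_j'$, the absolute value of the conditional gap $\mathbb{E}[R_{\mu'}(W)-\ell(f_W(X_i),Y_i)\mid x_j']$.

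Fix $(i,j)$ and $x_j'$ and compare the conditional joint law $P=P_{W,Z_i|x_j'}$ against the reference product $Q=P_{W|x_j'}\otimes\mu'$. Under $Q$ the point $Z_i=(X_i,Y_i)$ is drawn from the target $\mu'$ independently of $W$, so $\mathbb{E}_Q[\ell(f_W(X_i),Y_i)]=\mathbb{E}_{W\sim P_{W|x_j'}}[R_{\mu'}(W)]$, whereas $\mathbb{E}_P[\ell(f_W(X_i),Y_i)]$ is the conditional source-empirical term; their difference is exactly the conditional gap. Moreover, since $Z_i\perp X_j'$ forces $P_{Z_i|x_j'}=\mu$, the chain rule gives the clean splitting
\[
\mathrm{D_{KL}}(P\,\|\,Q)=\mathrm{D_{KL}}\!\big(P_{W,Z_i|x_j'}\,\big\|\,P_{W|x_j'}\mu\big)+\mathbb{E}_{Z_i\sim\mu}\log\frac{\mu(Z_i)}{\mu'(Z_i)}=I^{x_j'}(W;Z_i)+\mathrm{D_{KL}}(\mu\|\mu').
\]
Because $\ell(f_w(X'),Y')$ is $R$-subgaussian under $\mu'$ for every $w$, the subgaussian form of Lemma~\ref{lem:DV-KL} used for Theorem~\ref{thm-bound-subgaussian}, applied to $P$ versus $Q$ with the observable $\ell(f_W(X_i),Y_i)$, yields $|\mathbb{E}_P[\ell]-\mathbb{E}_Q[\ell]|\le\sqrt{2R^2\,\mathrm{D_{KL}}(P\|Q)}=\sqrt{2R^2\big(I^{x_j'}(W;Z_i)+\mathrm{D_{KL}}(\mu\|\mu')\big)}$.

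Finally I would insert this per-pair estimate into the average, pull the absolute value inside by the triangle inequality, use $\sqrt{a+b}\le\sqrt a+\sqrt b$ to separate the two KL contributions, and observe that $\sqrt{2R^2\mathrm{D_{KL}}(\mu\|\mu')}$ is constant in $(i,j,x_j')$ so the operator $\tfrac1{nm}\sum_{i,j}\mathbb{E}_{X_j'}$ leaves it unchanged; this reproduces the stated inequality exactly. I expect the main obstacle to be conceptual rather than computational: recognizing that the reference measure must carry the \emph{target} marginal $\mu'$ (so the ``ghost'' sample is a target sample and the target subgaussianity applies), rather than the source marginal as in the classical same-domain bound, and then reading off the domain term from the chain rule. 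A secondary technical point is justifying the change-of-measure step when $W$ is random under $Q$; this is dealt with in the standard way via subgaussianity of $\ell(f_w(\cdot),\cdot)$ for each fixed $w$ under $\mu'$, and is automatic under the bounded-loss Assumption~\ref{ass:bounded}.
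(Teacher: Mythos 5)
Your proposal is correct and follows essentially the same route as the paper's proof: the same $\tfrac{1}{nm}\sum_{i,j}\mathbb{E}_{X_j'}$ decomposition of $\mathrm{Err}$, the same change of measure from $P_{W,Z_i|x_j'}$ to the product reference $P_{W|x_j'}\otimes\mu'$ (which the paper writes as $P_{W|X_j'=x_j'}P_{Z'}$) handled by the target-side subgaussianity via the Donsker--Varadhan lemma, and the same chain-rule splitting of the resulting KL into $I^{x_j'}(W;Z_i)+\mathrm{D_{KL}}(\mu\|\mu')$ followed by $\sqrt{a+b}\le\sqrt{a}+\sqrt{b}$. The technical point you flag about $W$ being random under the reference measure is handled in the paper exactly as you suggest, by centering at $R_{\mu'}(W)$ and applying Jensen's inequality before invoking per-$w$ subgaussianity.
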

    \begin{rem}
    \label{rem:vanishing}
    It is worth noting that the unlabelled target data contributes to the first term of the bound. Increasing the amount of source and target data will result in a reduction of the first term in the bound. Specifically, moving the expectation inside the square root function by Jensen's inequality and since $Z_i \perp\!\!\!\perp X_j'$, the equations $I(W;Z_i|X_j')=I(W;Z_i|X_j')+I(Z_i;X_j')=I(W;Z_i)+I(X_j';Z_i|W)$ hold by the chain rule. The term $I(W;Z_i)$ will vanish as $n\rightarrow\infty$ and the term $I(X_j';Z_i|W)$ will also vanish as $n,m\rightarrow\infty$.
    \end{rem}
    
    The theorem can be turned into a version that is more practically relevant, in which the KL term is replaced with their representation-space counter-part (following a similar argument used for deriving Corollary \ref{cor:lower-bound}). In addition, note that although larger sample sizes allow better estimation of that KL term, utilizing pseudo-labels for estimation may have a negative impact (as discussed in Section~\ref{sec:unexpected-bound}), which can be amplified by the larger sample size.



    \begin{cor}
    \label{cor:bounded-mutual-lautum}
    Let Assumption~\ref{ass:bounded} hold. Then 
    \[
    \left|{\mathrm{Err}}\right|
    \leq \frac{M}{\sqrt{2}nm}\sum_{j=1}^{m}\sum_{i=1}^n\mathbb{E}_{X_j'}\sqrt{\min\left\{I^{X_j'}(W;Z_i),L^{X_j'}(W;Z_i)\right\}}+\frac{M}{\sqrt{2}}\sqrt{\min\left\{\mathrm{D_{KL}}(\mu||\mu'),\mathrm{D_{KL}}(\mu'||\mu)\right\}}.
    \]
    \end{cor}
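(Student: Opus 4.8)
The plan is to obtain the corollary directly from Theorem~\ref{thm:ood-semi-bound-1} with no new decomposition of $\mathrm{Err}$: I would reuse the split $\mathrm{Err}=\mathbb{E}[R_{\mu'}(W)-R_\mu(W)]+\mathbb{E}[R_\mu(W)-R_S(W)]$ underlying that theorem and only re-examine, for each term, which direction of KL divergence can control it. First, for the constant: Assumption~\ref{ass:bounded} together with Remark~\ref{rem:boundedness} makes the loss $M/2$-subgaussian, so Theorem~\ref{thm:ood-semi-bound-1} applies with $R=M/2$ and $\sqrt{2R^2}=M/\sqrt{2}$, already giving the claimed bound with $I^{X_j'}(W;Z_i)$ in the first term and $\mathrm{D_{KL}}(\mu||\mu')$ in the second. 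The two minima then come from running a second, measure-reversed application of the Donsker--Varadhan representation (Lemma~\ref{lem:DV-KL}) alongside the one already used.

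For the domain minimum, the second term equals $\mathbb{E}_W[\widetilde{\mathrm{Err}}(W)]$, i.e. $\mathbb{E}_{\mu'}[\ell]-\mathbb{E}_{\mu}[\ell]$ after taking $\mathbb{E}_W$. Applying Lemma~\ref{lem:DV-KL} to $-\ell$ with the pair $(P,Q)=(\mu,\mu')$ and the subgaussianity of $\ell$ under $\mu'$ gives $\sqrt{2R^2\mathrm{D_{KL}}(\mu||\mu')}$, while applying it to $\ell$ with $(P,Q)=(\mu',\mu)$ and the subgaussianity of $\ell$ under $\mu$ gives $\sqrt{2R^2\mathrm{D_{KL}}(\mu'||\mu)}$. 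Under Assumption~\ref{ass:bounded} both subgaussianity conditions hold simultaneously, so both estimates are valid and I may keep the smaller, producing $\min\{\mathrm{D_{KL}}(\mu||\mu'),\mathrm{D_{KL}}(\mu'||\mu)\}$ exactly as in Corollary~\ref{cor-bound-symmetric-kl}.

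For the information minimum, the first term arises in Theorem~\ref{thm:ood-semi-bound-1} from controlling $\mathbb{E}[R_\mu(W)-R_S(W)]$ by a sample-wise Donsker--Varadhan estimate conditioned on each target point $X_j'$: bounding $\mathbb{E}_{P_{W|X_j'}P_{Z_i|X_j'}}[\ell]-\mathbb{E}_{P_{W,Z_i|X_j'}}[\ell]$ by applying Lemma~\ref{lem:DV-KL} to $-\ell$ against the product $P_{W|X_j'}P_{Z_i|X_j'}$ yields $\sqrt{2R^2 I^{X_j'}(W;Z_i)}$, since $\mathrm{D_{KL}}(P_{W,Z_i|X_j'}||P_{W|X_j'}P_{Z_i|X_j'})=I^{X_j'}(W;Z_i)$. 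Reversing the two measures and applying Lemma~\ref{lem:DV-KL} to $\ell$ against the joint $P_{W,Z_i|X_j'}$ instead turns the relevant divergence into $\mathrm{D_{KL}}(P_{W|X_j'}P_{Z_i|X_j'}||P_{W,Z_i|X_j'})=L^{X_j'}(W;Z_i)$, the disintegrated Lautum information. Both estimates hold for each fixed realization $x_j'$ and each index $i$, so the smaller may be retained inside the square root, before taking $\mathbb{E}_{X_j'}$ and averaging over $i$ and $j$.

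The main obstacle is bookkeeping rather than any genuinely new estimate: for each Donsker--Varadhan application I must check that the loss is subgaussian under the reference measure $Q$ being used---the product conditional law for the mutual-information branch, the joint conditional law for the Lautum branch, and $\mu'$ versus $\mu$ for the two domain branches---and that taking a pointwise minimum inside $\mathbb{E}_{X_j'}$ and inside the double sum preserves the inequality. Both points are secured by Assumption~\ref{ass:bounded}, since a loss bounded in $[0,M]$ is $M/2$-subgaussian under every distribution on $\mathcal{Z}$ and its subgaussian tail bound is two-sided; the latter is also what permits bounding $|\mathrm{Err}|$ rather than $\mathrm{Err}$ alone.
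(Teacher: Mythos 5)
Your proposal is correct and follows essentially the same route as the paper: the paper's proof likewise decomposes the conditional gap by adding and subtracting $\ex{W|X_j'}{R_\mu(W)}$, bounds the domain-shift piece via Corollary~\ref{cor-bound-symmetric-kl}, and obtains the $\min\{I^{X_j'},L^{X_j'}\}$ term by running Donsker--Varadhan against both orderings of the joint $P_{W,Z_i|X_j'}$ and the product $P_{W|X_j'}P_Z$, with $R=M/2$ supplying the $M/\sqrt{2}$ constants. The only minor imprecision is your remark that this two-term split "underlies" Theorem~\ref{thm:ood-semi-bound-1}; that theorem is actually proved by decomposing a single KL divergence against $P_{W|X_j'}P_{Z'}$, and the explicit triangle-inequality split is introduced precisely for this corollary --- but this does not affect the validity of your argument.
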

    \begin{thm}
    \label{thm:wasserstein-bound-expected}
    Assume $\ell$ is Lipschitz for both $w\in\mathcal{W}$ and $z\in\mathcal{Z}$, i.e., $|\ell(f_w(x),y)-\ell(f_w(x'),y')|\leq \beta d_1(z,z')$ for all $z,z'\in\mathcal{Z}$ and $|\ell(f_w(x),y)-\ell(f_{w'}(x),y)|\leq \beta'd_2(w,w')$ for all $w,w'\in\mathcal{W}$, then
    \[
    \left|{\mathrm{Err}}\right|
    \leq \frac{\beta'}{nm}\sum_{j=1}^{m}\sum_{i=1}^n\mathbb{E}_{X_j',Z_i}{\mathbb{W}(P_{W|Z_i,X_j'},P_{W|X_j'})}+\beta\mathbb{W}(\mu,\mu').
    \] 
    \end{thm}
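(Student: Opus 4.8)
The plan is to split $\mathrm{Err}$ through the source population risk and handle the two pieces with the two distinct Lipschitz hypotheses. Writing
\[
\mathrm{Err} = \ex{W}{R_{\mu'}(W) - R_\mu(W)} + \ex{W,S}{R_\mu(W) - R_S(W)},
\]
the first term is an expectation of the PP error and is controlled by Theorem~\ref{thm-bound-lipschitz}: the Lipschitz-in-$z$ hypothesis (constant $\beta$, metric $d_1$) gives $|R_{\mu'}(w) - R_\mu(w)| \le \beta \mathbb{W}(\mu',\mu)$ for every $w$, so by the triangle inequality and symmetry of $\mathbb{W}$, $|\ex{W}{R_{\mu'}(W)-R_\mu(W)}| \le \beta \mathbb{W}(\mu,\mu')$. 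This furnishes the second summand of the claimed bound.

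The heart of the argument is the source generalization term $\ex{W,S}{R_\mu(W) - R_S(W)}$, for which I would use a Wasserstein information-theoretic estimate that also records the dependence of $W$ on the unlabelled target sample. First I fix a single index $j$ and condition on the realization $X_j' = x_j'$. Because $S$ is independent of $S'_{X'}$, each $Z_i$ remains $\mu$-distributed under this conditioning, so expanding $R_\mu(W) - R_S(W)$ as an average over $i$ and disintegrating over the value of $Z_i$ rewrites term $i$ as
\[
\ex{z\sim\mu}{\ex{W|x_j'}{\ell(f_W(x),y)} - \ex{W|x_j',Z_i=z}{\ell(f_W(x),y)}}.
\]
For each fixed $z=(x,y)$ the map $w\mapsto \ell(f_w(x),y)$ is $\beta'$-Lipschitz in $w$ (metric $d_2$), so the Kantorovich–Rubinstein dual formulation of $\mathbb{W}$ bounds the inner difference by $\beta'\,\mathbb{W}(P_{W|x_j',Z_i=z},P_{W|x_j'})$. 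Averaging over $z$ and over $i$, then taking the expectation over $X_j'$, yields
\[
\left|\ex{W,S}{R_\mu(W)-R_S(W)}\right| \le \frac{\beta'}{n}\sum_{i=1}^n \ex{X_j',Z_i}{\mathbb{W}(P_{W|Z_i,X_j'},P_{W|X_j'})}.
\]

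Crucially, this inequality holds for \emph{every} index $j$, while its left-hand side is independent of $j$; averaging these $m$ valid upper bounds over $j=1,\dots,m$ produces exactly the first summand $\frac{\beta'}{nm}\sum_{j}\sum_i \ex{X_j',Z_i}{\mathbb{W}(P_{W|Z_i,X_j'},P_{W|X_j'})}$. Combining with the PP piece via a final triangle inequality gives the stated bound. I expect the main obstacle to be the bookkeeping in the disintegration step: one must track carefully which variables are held fixed versus integrated, so that the correct conditional laws $P_{W|Z_i,X_j'}$ and $P_{W|X_j'}$ appear in the Wasserstein term, and must invoke independence of the source and target samples to keep $Z_i$ $\mu$-distributed after conditioning on $X_j'$. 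The averaging-over-$j$ device itself is elementary, but it is precisely what lets the unlabelled target sample enter the bound and renders the estimate symmetric in the $m$ target points.
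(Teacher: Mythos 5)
Your proposal is correct and follows essentially the same route as the paper: both split off the population-to-population piece (bounded by $\beta\mathbb{W}(\mu,\mu')$ via Kantorovich--Rubinstein duality in $z$) and bound the remaining source generalization term by conditioning on $X_j'$, disintegrating over $Z_i$, and applying KR duality in $w$ to get $\beta'\mathbb{W}(P_{W|Z_i,X_j'},P_{W|X_j'})$, with the average over $j$ entering exactly as in the second equality of Eq.~(\ref{eq:expect-error}). The only difference is cosmetic --- you apply the triangle inequality before conditioning on $X_j'$ while the paper inserts the intermediate term $\pm\,\mathbb{E}_{W|X_j'}[R_\mu(W)]$ inside the conditional sum --- which does not change the argument.
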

    This bound is tighter than the bound in Theorem~\ref{thm:ood-semi-bound-1}, as can be indicated by the following corollary. 
    \begin{cor}
    \label{cor:tv-bound-expected}
    Let Assumption~\ref{ass:bounded} hold. Then
    \begin{align*}
        \left|\widetilde{\mathrm{Err}}\right|&\leq \frac{M}{nm}\sum_{j=1}^{m}\sum_{i=1}^n\ex{X_j',Z_i}{\mathrm{TV}(P_{W|Z_i,X_j'},P_{W|X_j'})}+M\mathrm{TV}(\mu,\mu')\\
        &\leq \frac{1}{nm}\sum_{j=1}^{m}\sum_{i=1}^n\mathbb{E}_{X_j',Z_i}\sqrt{\frac{M^2}{2}\mathrm{D_{KL}}(P_{W|Z_i,X_j'}||P_{W|X_j'})}+\sqrt{\frac{M^2}{2}\mathrm{D_{KL}}(\mu||\mu')}.
    \end{align*}
    \end{cor}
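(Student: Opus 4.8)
The plan is to obtain the corollary as a two-step reduction from Theorem~\ref{thm:wasserstein-bound-expected}: first specialize the Wasserstein bound to the discrete metric to recover the total-variation form, and then convert total variation into KL divergence via Pinsker's inequality.

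First I would check that under Assumption~\ref{ass:bounded} the loss is $M$-Lipschitz in both of its arguments with respect to the discrete metric, so that the hypotheses of Theorem~\ref{thm:wasserstein-bound-expected} hold with $\beta=\beta'=M$. Indeed, since $\ell(f_w(x),y),\ell(f_{w'}(x'),y')\in[0,M]$, for any two distinct points we have $|\ell(f_w(x),y)-\ell(f_w(x'),y')|\le M=M\,d_1(z,z')$ and $|\ell(f_w(x),y)-\ell(f_{w'}(x),y)|\le M=M\,d_2(w,w')$, where $d_1,d_2$ are taken to be the discrete metric. I would then invoke the standard fact that the Wasserstein distance induced by the discrete metric equals the total-variation distance, $\mathbb{W}(P,Q)=\mathrm{TV}(P,Q)$. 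Substituting $\beta=\beta'=M$ together with this identity into Theorem~\ref{thm:wasserstein-bound-expected} yields the first displayed inequality.

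For the second inequality, I would apply Pinsker's inequality $\mathrm{TV}(P,Q)\le\sqrt{\frac{1}{2}\mathrm{D_{KL}}(P||Q)}$ termwise. For each fixed realization of $(X_j',Z_i)$ this bounds $M\,\mathrm{TV}(P_{W|Z_i,X_j'},P_{W|X_j'})$ by $\sqrt{\frac{M^2}{2}\mathrm{D_{KL}}(P_{W|Z_i,X_j'}||P_{W|X_j'})}$; taking the expectation $\ex{X_j',Z_i}{\cdot}$ preserves the inequality because the bound holds pointwise. Likewise, using the symmetry of total variation, $M\,\mathrm{TV}(\mu,\mu')=M\,\mathrm{TV}(\mu',\mu)\le\sqrt{\frac{M^2}{2}\mathrm{D_{KL}}(\mu||\mu')}$, so either direction of the KL is admissible and we may write $\mathrm{D_{KL}}(\mu||\mu')$ as stated. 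Summing the two bounds gives the claim.

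The proof is largely a chain of standard reductions, so the main point requiring care is not a deep obstacle but the ordering of the expectation and the square root: Pinsker must be applied to the disintegrated conditionals \emph{before} taking the expectation over $(X_j',Z_i)$, keeping $\ex{X_j',Z_i}{\cdot}$ outside the square root. Applying Jensen's inequality to move the expectation inside would instead produce $\sqrt{\frac{M^2}{2}I^{X_j'}(W;Z_i)}$, recovering the looser mutual-information term of Theorem~\ref{thm:ood-semi-bound-1}; preserving the outside expectation is precisely what makes this bound provably tighter.
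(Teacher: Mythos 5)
Your proof is correct and follows essentially the same route as the paper: specialize Theorem~\ref{thm:wasserstein-bound-expected} to the discrete metric (where the bounded loss is $M$-Lipschitz in both arguments and Wasserstein equals total variation) to get the first inequality, then apply Pinsker's inequality termwise to get the second. Your closing remark about keeping the expectation outside the square root is also the right reading of why this bound is tighter than Theorem~\ref{thm:ood-semi-bound-1}.
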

    Notice that to recover Theorem~\ref{thm:ood-semi-bound-1} from Corollary~\ref{cor:tv-bound-expected} (under Assumption~\ref{ass:bounded}), we can use Jensen's inequality to move the expectation over $Z_i$ to inside the square root function.

    \subsection{Gradient Penalty as an Universal Regularizer}
    \label{sec:application-1}
    The algorithm-dependent bound in Theorem~\ref{thm:ood-semi-bound-1} tells us that one can reduce the EP error by limiting the disintegrated mutual information $I^{X_j'}(W;Z_i)$. In the stochastic gradient based optimization algorithms, this term can be controlled by penalizing the gradient norm. To see this, we now consider a ``noisy'' iterative algorithm for updating $W$, e.g., SGLD. At each time step $t$, let the labelled mini-batch from the source domain be $Z_{B_t}$, let the unlabelled mini-batch from the target domain be $X'_{B_t}$, and let $g(W_{t-1},Z_{B_t},X'_{B_t})$ be the gradient at time $t$. Thus, the updating rule of $W$ is $W_t=W_{t-1}-\eta_t g(W_{t-1},Z_{B_t},X'_{B_t})+N_t$ where $\eta_t$ is the learning rate and $N_t\sim\mathcal{N}(0,\sigma^2\mathrm{I}_d)$ is an isotropic Gaussian noise. Inspired by \cite{pensia2018generalization}, we have the following bound.
    \begin{thm}
    \label{thm:uda-gradient-bound}
    Let the total iteration number be $T$ and let $G_t=g(W_{t-1},Z_{B_t},X'_{B_t})$, then
    \[
    \left|{\mathrm{Err}}\right|\leq\sqrt{\frac{R^2}{n}\sum_{t=1}^T\frac{\eta_t^2}{\sigma^2_t}\ex{S'_{X'},W_{t-1},S}{\left|\left|G_t-\ex{Z_{B_t}}{G_t}\right|\right|^2}}+\sqrt{2R^2\mathrm{D_{KL}}(\mu||\mu')}.
    \]
    \end{thm}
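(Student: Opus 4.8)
The plan is to start from Theorem~\ref{thm:ood-semi-bound-1}, whose KL term $\sqrt{2R^2\mathrm{D_{KL}}(\mu\|\mu')}$ already coincides verbatim with the second term of the claim; the entire task is therefore to bound the first (mutual-information) term by the trajectory-based gradient quantity. First I would collapse the double sum of square roots into a single square root by two uses of Jensen's inequality (concavity of $\sqrt{\cdot}$): since $\mathbb{E}_{X_j'}\sqrt{2R^2 I^{X_j'}(W;Z_i)}\le\sqrt{2R^2\,\mathbb{E}_{X_j'}I^{X_j'}(W;Z_i)}=\sqrt{2R^2 I(W;Z_i|X_j')}$ (using $I(W;Z_i|X_j')=\mathbb{E}_{X_j'}I^{X_j'}(W;Z_i)$), and then averaging over the $nm$ pairs $(i,j)$, the first term is at most $\sqrt{\frac{2R^2}{nm}\sum_{j,i}I(W;Z_i|X_j')}$.

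Next I would aggregate the individual-sample conditional informations into one full-sample conditional information. Because the $Z_i$ are mutually independent and independent of the target sample, they remain conditionally independent given any $X_j'$, so $\sum_{i}I(W;Z_i|X_j')\le I(W;S|X_j')$ by the chain rule. Using $S\perp\!\!\!\perp S'_{X'}$ together with a co-information argument (writing $S'_{X'}=(X_j',S'_{-j})$ and noting $I(S;S'_{-j}|W,X_j')\ge 0$), I would show $I(W;S|X_j')\le I(W;S|S'_{X'})$ for every $j$, so that $\frac{1}{m}\sum_j I(W;S|X_j')\le I(W;S|S'_{X'})$. Combining, the first term is at most $\sqrt{\frac{2R^2}{n}I(W;S|S'_{X'})}$.

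I would then apply the standard noisy-iterate (SGLD) trajectory analysis to $I(W_T;S|S'_{X'})$. By the chain rule of mutual information and the data-processing inequality along the Markov chain $W_0\to W_1\to\cdots\to W_T$, one gets $I(W_T;S|S'_{X'})\le\sum_{t=1}^T I(W_t;Z_{B_t}|W_{1:t-1},S'_{X'})$, where at step $t$ the dependence of $W_t$ on $S$ passes only through the source mini-batch $Z_{B_t}$. For each step I would invoke the Gaussian-channel bound: conditioned on $W_{t-1}$ and $X'_{B_t}$, the update $W_t=W_{t-1}-\eta_t G_t+N_t$ is the signal $-\eta_t G_t$ observed through additive isotropic noise $N_t\sim\mathcal N(0,\sigma_t^2\mathrm I_d)$, and the identity $I(W_t;Z_{B_t}|W_{t-1},X'_{B_t})=h(W_t|W_{t-1},X'_{B_t})-h(N_t)$ combined with the maximum-entropy estimate of $h(W_t|\cdot)$ and $\log(1+x)\le x$ yields $I(W_t;Z_{B_t}|W_{t-1},X'_{B_t})\le\frac{\eta_t^2}{2\sigma_t^2}\mathbb{E}\|G_t-\mathbb{E}_{Z_{B_t}}G_t\|^2$. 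Summing over $t$ and taking the outer expectation over $(S'_{X'},W_{t-1},S)$ gives $I(W_T;S|S'_{X'})\le\sum_{t}\frac{\eta_t^2}{2\sigma_t^2}\mathbb{E}_{S'_{X'},W_{t-1},S}\|G_t-\mathbb{E}_{Z_{B_t}}G_t\|^2$; substituting into $\sqrt{\frac{2R^2}{n}(\cdot)}$ cancels the factor $2$ and produces exactly the first term of the claim.

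I expect the main obstacle to be the per-step Gaussian-channel bound: correctly identifying the signal/noise decomposition conditioned on both the previous iterate and the target mini-batch, pinning down the constant $\eta_t^2/2\sigma_t^2$, and justifying that the trajectory chain rule and the aggregation inequality $I(W;S|X_j')\le I(W;S|S'_{X'})$ are valid under the (conditional) independence structure of the problem. The remaining manipulations are routine applications of Jensen's inequality.
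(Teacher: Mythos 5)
Your proposal is correct and follows essentially the same skeleton as the paper's proof: Jensen's inequality to pull the expectations inside the square root, aggregation of the per-sample conditional informations into $\frac{1}{n}I(W;S|S'_{X'})$ (your ordering --- chain rule over $i$ first, then replacing $X_j'$ by $S'_{X'}$ via $I(S;S'_{-j}|X_j')=0$ --- is a valid permutation of the paper's two steps), the trajectory chain rule with data processing, and a per-step Gaussian-channel bound yielding $\frac{\eta_t^2}{2\sigma_t^2}\mathbb{E}\|G_t-\mathbb{E}_{Z_{B_t}}G_t\|^2$. The only local difference is in that last step: you bound the per-iteration information via the entropy decomposition $h(W_t|\cdot)-h(N_t)$, the maximum-entropy Gaussian estimate, and $\log\det(\mathrm{I}+A)\le\mathrm{tr}(A)$ (in the style of Pensia et al.), whereas the paper instead invokes the golden formula for mutual information (Lemma~\ref{lem:mi-center-gravity}) with the auxiliary Gaussian centered at $\mathbb{E}_{S}[G_t]$ and computes the Gaussian--Gaussian KL directly; both routes produce the identical constant, so the proofs are interchangeable.
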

    \begin{rem}
    Considering a noisy iterative algorithm here is merely for simplifying analysis. In fact, it is also possible to analyze the original iterative gradient optimization method without noise injected. For example, one can follow the same development in \citep{neu2021information,wang2022generalization} to analyze vanilla SGD. In that case, there will be some residual terms in the bound.
    \end{rem}
    
    Theorem~\ref{thm:uda-gradient-bound} hints that to reduce the generalization error, one can simply restrict the gradient norm at each step (so that $||G_t-\ex{Z_{B_t}}{G_t}||^2$ is reduced). This strategy will also restrict the distance between the final output $W_T$ and the initialization $W_0$, effectively shrinking the hypothesis space accessible by the algorithm. We also note that the importance of  gradient penalty has been theoretically justified in the supervised learning setting \citep{negrea2019information,haghifam2020sharpened,smith2020origin,rodriguez2021random,neu2021information,wang2022generalization,wang2022two}.

    Indeed, adding gradient penalty can be applied to any existing UDA algorithm and it is simple but effective in practice. Later on we will show that even when the algorithm $\mathcal{A}$ does not access to any target data, in which case $I(W;Z_i|X_j')$ reduces to $I(W;Z_i)$ and $g(W_{t-1},Z_{B_t},X_{B_t}')$ becomes $g(W_{t-1},Z_{B_t})$, minimizing the empirical loss of source domain sample while penalizing gradient norm will still improve the performance. Notice that gradient penalty has been used in standard supervised learning as a regularization technique \citep{geiping2022stochastic,jastrzebski2021catastrophic}. It is also used in Wasserstein distance based adversarial adaptation \citep{gulrajani2017improved,shen2018wasserstein}, and their motivation is to stabilize the training to avoid gradient vanishing problem.  Here we suggest, with strong theoretical justification, that gradient penalty is a universal technique for improving the generalization performance in UDA for any gradient-based learning method.

    Notably the bound in Theorem~\ref{thm:uda-gradient-bound} only depends on the size $n$ of labelled source sample and does not explicitly depend on $m$, the size of unlabelled target sample. With a more careful design, if we consider the mutual information as the expected KL divergence of a posterior and a prior, based on $I^{X_j'}(W;Z_i)$ in Theorem~\ref{thm:ood-semi-bound-1}, it is possible to create a target-data-dependent prior and derive a tighter bound based on some quantity similar to "gradient incoherence" in \cite{negrea2019information}. 

    \subsection{Controlling Label Information for KL Guided Marginal Alignment}
    \label{sec:application-2}
    
    Consider instances in the representation space, $Z=(T,Y)$ and $Z'=(T',Y)$. Theorem~\ref{thm:ood-semi-bound-1} also encourages us to align the distributions of two domains in the representation space, as argued earlier. Then the KL guided marginal alignment algorithm proposed in \cite{nguyen2022kl} can be invoked here. One may notice that Theorem~\ref{thm:ood-semi-bound-1} uses $\mathrm{D_{KL}}(\mu||\mu')$ while \cite{nguyen2022kl} uses $\mathrm{D_{KL}}(\mu'||\mu)$. As already discussed in Section~\ref{sec:unexpected-bound}, this inconsistency can be ignored when the loss is bounded (see Corollary~\ref{cor:bounded-mutual-lautum}).
    
    Most domain adaptation algorithms aim to align the marginal distributions of two domains in the representation space. However, without accessing to $Y'$,  it remains unknown if an UDA algorithm will work well since we cannot guarantee that discrepancy between conditional distribution $P_{Y|T}$ and $P_{Y'|T'}$ won't  become too large when we align the marginals. 
    In \cite{nguyen2022kl}, the authors show that  $\mathrm{D_{KL}}(P_{Y'|T'}||P_{Y|T})$ can be upper-bounded by $\mathrm{D_{KL}}(P_{Y'|X'}||P_{Y|X})$, if $I(X;Y)=I(T;Y)$. The authors then argue that penalizing the KL divergence of the marginals 
    is safe.

    We now argue that in practice the condition 
    $I(X;Y)=I(T;Y)$ can be difficult to satisfy if the cross-entropy loss is used to define the source-domain empirical risk.

    
    By data processing inequality on $Y-X-T$, we know that $I(X;Y)\geq I(T;Y)=H(Y)-H(Y|T)$. Thus, to let $I(T;Y)$ reach its maximum, one must minimize $H(Y|T)$. 
    On the other hand, let $Q_{Y|T,W}$ be the predictive distribution of labels in the source domain generated by the classifier. The expected cross-entropy loss for each $Z_i$ in the representation space is then
    \[
    \ex{W,Z_i}{\ell(f_W(T_i),Y_i)} = \ex{Z_i}{\ex{W|Z_i}{-\log{Q_{Y_i|T_i,W}}}},
    \]
    which also decomposes as \citep{achille2018emergence,harutyunyan2020improving}
    \begin{align}
    \label{eq:cross-entropy decomposition} 
           \ex{W,Z_i}{\ell(f_W(T_i),Y_i)} = H(Y_i|T_i)+\ex{T_i,W}{\mathrm{D_{KL}}(P_{Y_i|T_i,W}||Q_{Y_i|T_i,W})}-I(W;Y_i|T_i).
    \end{align}
    Then minimizing the expected cross-entropy loss may not adequately reduce $H(Y_i|T_i)$ but rather cause $I(W;Y_i|T_i)$ to significantly increase, particularly when the model capacity is large. This may have two negative effects. First, the condition $I(X;Y)=I(T;Y)$ is significantly violated,
     and $\mathrm{D_{KL}}(P_{Y'|T'}||P_{Y|T})$ is no longer upper bounded by $\mathrm{D_{KL}}(P_{Y'|X'}||P_{Y|X})$. 
     Hence, aligning the two marginals alone may not be adequate. Second,
   large $I(W;Y_i|T_i)$ indicates $W$ just simply memorizes 
  the label 
   $Y_i$, resulting a form of overfitting and hurting the generalization performance. 
   
   The key take-away from the above analysis is that when aligning the marginals in UDA, controlling the source label information in the weights can be important to achieve good cross-domain generalization. A similar message can also be deduced from Theorem~\ref{thm:ood-semi-bound-1}, when it is viewed in the representation space and noting 
   $I^{T_j'}(W;Z_i)=I^{T_j'}(W;T_i)+I^{T_j'}(W;Y_i|T_i)$.
   


   To control label information, \cite{harutyunyan2020improving} proposed an approach called LIMIT.
   However, this method is rather complicated and arguably hard to train in domain adaptation (see Appendix~\ref{sec:limit}). We now derive a simple alternative strategy for this purpose. 
   

Notice that
$
I^{T_j'}(W;Y_i|T_i)
\leq\inf_Q\ex{T_i}{\mathrm{D_{KL}}(P_{W|Y_i,T_i,T_j'=t_j'}||Q_{W|T_i,T_j'=t_j'})},    
$ which is a simple extension of variational representation of mutual information \cite[Corollary~3.1.]{polyanskiy2019lecture}. Here $Q$ could be any distribution. By assuming $P=\mathcal{N}(W,\sigma^2\mathrm{I}_d|Y_i,T_i,T_j'=t_j')$ and taking $Q=\mathcal{N}(\widetilde{W},\tilde{\sigma}^2\mathrm{I}_d|T_i,T_j'=t_j')$, we have
\[
I^{T_j'}(W;Y_i|T_i)\leq\inf_Q\ex{T_i}{\mathrm{D_{KL}}(P_{W|Y_i,T_i,T_j'=t_j'}||Q_{\tilde{W}|T_i,T_j'=t_j'})}\propto ||W-\widetilde{W}||^2.
\]
Thus, 
we may create an auxiliary classifier $f_{\widetilde{w}}$ that is not allowed to access to the real source label $Y$. In each iteration, we use the pseudo labels of target data (and source data) assigned by $f_{{w}}$ to train $f_{\widetilde{w}}$ and adding $||W-\widetilde{W}||^2$ as a regularizer in the training of $W$. The algorithm is given in the Appendix. Remarkably the regularizer here resembles ``Projection Norm'' designed in \cite{yu2022predicting} for out-of-distribution generalization. 

\section{Experimental Results}
\label{sec:experiment}
We perform experiments to verify the  proposed techniques inspired by our theory.

\begin{table*}[t!]
 \scriptsize
 \centering
 \caption{\small RotatedMNIST and Digits. Results of baselines are reported from \cite{nguyen2022kl}.}
  \label{tab:RM-Digits}
 \centering
 	\begin{tabular}{ccccccccccc}
 		\toprule
 		& \multicolumn{6}{c}{RotatedMNIST ($\mathbf{0^{\circ}}$ as source domain)} & \multicolumn{4}{c}{Digits}\\
		\cmidrule(r){2-7}
		\cmidrule(r){8-11}
		Method  & $\mathbf{15^{\circ}}$  & $\mathbf{30^{\circ}}$   & $\mathbf{45^{\circ}}$ & $\mathbf{60^{\circ}}$ & $\mathbf{75^{\circ}}$ & \textbf{Ave} & \textbf{M $\rightarrow$ U}  & \textbf{U $\rightarrow$ M}  & \textbf{S $\rightarrow$ M}  & \textbf{Ave}   \\
		\midrule
		ERM & 97.5±0.2 & 84.1±0.8 & 53.9±0.7 & 34.2±0.4 & 22.3±0.5 & 58.4 & 73.1±4.2 & 54.8±6.2 & 65.9±1.4 & 64.6 \\
		DANN & 97.3±0.4 & 90.6±1.1 & 68.7±4.2 & 30.8±0.6 & 19.0±0.6 & 61.3 & 90.7±0.4 & 91.2±0.8 & 71.1±0.5 & 84.3 \\
		MMD & 97.5±0.1 & 95.3±0.4 & 73.6±2.1 & 44.2±1.8 & 32.1±2.1 & 68.6 & 91.8±0.3 & 94.4±0.5 & 82.8±0.3 & 89.7 \\
		CORAL & 97.1±0.3 & 82.3±0.3 & 56.0±2.4 & 30.8±0.2 & 27.1±1.7 & 58.7 & 88.0±1.9 & 83.3±0.1 & 69.3±0.6 & 80.2 \\
		WD & 96.7±0.3 & 93.1±1.2 & 64.1±3.3 & 41.4±7.6 & 27.6±2.0 & 64.6 & 88.2±0.6 & 60.2±1.8 & 68.4±2.5 & 72.3 \\
		KL  & {97.8±0.1} & {97.1±0.2} & {93.4±0.8} & {75.5±2.4} & {68.1±1.8} & {86.4} & {98.2±0.2}& {97.3±0.5} & {92.5±0.9} & {96.0} \\
		\midrule
		ERM-GP & {97.5±0.1} & {86.2±0.5} & {62.0±1.9} & {34.8±2.1} & {26.1±1.2} & {61.2} & {91.3±1.6} & 72.7±4.2 & {68.4±0.2} &  77.5 \\
		KL-GP  & {98.2±0.2} & {96.9±0.1} & {95.0±0.6} & \textbf{88.0±8.1} & \textbf{78.1±2.5} & \textbf{91.2} & {98.8±0.1}& \textbf{97.8±0.1} & \textbf{93.8±1.1} & \textbf{96.8} \\
		KL-CL  & \textbf{98.4±0.2} & \textbf{97.3±0.2} & \textbf{95.6±0.1} & {83.0±8.2} & 73.6±4.0 & 89.6 & \textbf{98.9±0.1}& 97.7±0.1 & 93.0±0.3 & 96.5 \\
 		\bottomrule
 	\end{tabular}
\end{table*}

\vspace{-2mm}
\paragraph{Datasets} We select two popular small datasets, RotatedMNIST and Digits, to compare the different methods. RotatedMNIST is built based on the MNIST dataset \citep{lecun2010mnist} and consists of six domains, each containing $11,666$ images. These six domains are rotated MNIST images with rotation angle $0^{\circ},15^{\circ},30^{\circ},45^{\circ},60^{\circ}$ and $75^{\circ}$, respectively. We will take the original MNIST dataset ($0^{\circ}$) as the source domain and take other five domains as target domains. Hence, there are five domain adaptation tasks on RotatedMNIST. Digits consists of three sub-datasets, namely MNIST, USPS \citep{hull1994database} and SVHN \citep{netzer2011reading}, and the corresponding domain adaptation tasks are MNIST$\rightarrow$USPS (\textbf{M$\rightarrow$U}), USPS$\rightarrow$MNIST (\textbf{U$\rightarrow$M}), SVHN$\rightarrow$MNIST (\textbf{S$\rightarrow$M}).

\vspace{-2mm}
\paragraph{Compared Methods} Baseline methods are some popular marginal alignment UDA methods including \textbf{DANN} \citep{ganin2016domain}, \textbf{MMD} \citep{li2018domain}, \textbf{CORAL} \citep{sun2016deep}, \textbf{WD} \citep{shen2018wasserstein} and \textbf{KL} \citep{nguyen2022kl}. We also choose \textbf{ERM} as another baseline, in which only the source-domain sample is accessible during training. To verify the strategies inspired by our theory, we first add the gradient penalty to the ERM algorithm (\textbf{ERM-GP}), and we then combine gradient penalty (GP) and controlling label information (CL) with the recent proposed KL guided marginal alignment method, which are denoted by \textbf{KL-GP} and \textbf{KL-CL}, respectively.

\vspace{-2mm}
\paragraph{Implementation Details} Most of our implementation is based on the \textit{DomainBed} suite \citep{gulrajani2021in}. Other settings  exactly follow \cite{nguyen2022kl} and the results of baseline methods are taken from \cite{nguyen2022kl}. Specifically, each algorithm is run three times and we show the average performance with the error bar. Every dataset has a validation set, and the model selection scheme is based on the best performance achieved on the validation set of target domain during training (oracle). The hype-parameter searching process is also built upon the implementation in the \textit{DomainBed} suite. Other details and additional experiments can be found in Appendix.

\vspace{-2mm}
\paragraph{Results} From Table~\ref{tab:RM-Digits}, we first notice that gradient penalty allows \textbf{ERM} to perform more comparably to other marginal alignment methods. For example, on RotatedMNIST, \textbf{ERM-GP} outperforms \textbf{CORAL} and performs nearly the same with \textbf{DANN}. On Digits, \textbf{ERM-GP} outperforms \textbf{WD}. When GP and CL combined with KL guided algorithm, we can see that the performance can be further boosted. This justifies the discussion in Section~\ref{sec:application-1} and Section~\ref{sec:application-2}.

\section{Conclusion}
\label{sec:sums}

Despite that the numerous learning techniques have been developed for domain adaptation, significant room exists for more in-depth theoretical understanding and more principled design of learning algorithms. This paper presents the information-theoretic analysis for unsupervised domain adaptation, where we query two notions of the generalization errors in this context and present novel learning bounds. Some of these bounds recover the previous KL-based bounds under different conditions and confirm the insights in the learning algorithms that align the source and target distributions in the representation space. Our other bounds are algorithm-dependent, better exploiting the unlabelled target data, which have inspired novel and yet simple schemes for the design of learning algorithms. We demonstrate the effectiveness of these schemes  on standard benchmark datasets.


\subsubsection*{Acknowledgments}
This work is supported partly by an NSERC Discovery grant and a National Research Council of Canada (NRC) Collaborative R\&D grant (AI4D-CORE-07). Ziqiao Wang is also supported in part by the NSERC CREATE program through the Interdisciplinary Math and Artificial Intelligence (INTER-MATH-AI) project.
The authors would like to thank the anonymous reviewers for their careful reading and valuable suggestions.

\bibliography{ref}

\begin{thebibliography}{77}
\providecommand{\natexlab}[1]{#1}
\providecommand{\url}[1]{\texttt{#1}}
\expandafter\ifx\csname urlstyle\endcsname\relax
  \providecommand{\doi}[1]{doi: #1}\else
  \providecommand{\doi}{doi: \begingroup \urlstyle{rm}\Url}\fi

\bibitem[Achille \& Soatto(2018)Achille and Soatto]{achille2018emergence}
Alessandro Achille and Stefano Soatto.
\newblock Emergence of invariance and disentanglement in deep representations.
\newblock \emph{The Journal of Machine Learning Research}, 19\penalty0
  (1):\penalty0 1947--1980, 2018.

\bibitem[Acuna et~al.(2021)Acuna, Zhang, Law, and Fidler]{acuna2021f}
David Acuna, Guojun Zhang, Marc~T Law, and Sanja Fidler.
\newblock f-domain adversarial learning: Theory and algorithms.
\newblock In \emph{International Conference on Machine Learning}, pp.\  66--75.
  PMLR, 2021.

\bibitem[Agrawal \& Horel(2020)Agrawal and Horel]{agrawal2020optimal}
Rohit Agrawal and Thibaut Horel.
\newblock Optimal bounds between f-divergences and integral probability
  metrics.
\newblock In \emph{International Conference on Machine Learning}, pp.\
  115--124. PMLR, 2020.

\bibitem[Aminian et~al.(2022)Aminian, Abroshan, Khalili, Toni, and
  Rodrigues]{aminian2022information}
Gholamali Aminian, Mahed Abroshan, Mohammad~Mahdi Khalili, Laura Toni, and
  Miguel Rodrigues.
\newblock An information-theoretical approach to semi-supervised learning under
  covariate-shift.
\newblock In \emph{International Conference on Artificial Intelligence and
  Statistics}, pp.\  7433--7449. PMLR, 2022.

\bibitem[Bartlett \& Mendelson(2002)Bartlett and
  Mendelson]{bartlett2002rademacher}
Peter~L Bartlett and Shahar Mendelson.
\newblock Rademacher and gaussian complexities: Risk bounds and structural
  results.
\newblock \emph{Journal of Machine Learning Research}, 3\penalty0
  (Nov):\penalty0 463--482, 2002.

\bibitem[Ben-David et~al.(2006)Ben-David, Blitzer, Crammer, and
  Pereira]{ben2006analysis}
Shai Ben-David, John Blitzer, Koby Crammer, and Fernando Pereira.
\newblock Analysis of representations for domain adaptation.
\newblock \emph{Advances in neural information processing systems}, 19, 2006.

\bibitem[Ben-David et~al.(2010)Ben-David, Blitzer, Crammer, Kulesza, Pereira,
  and Vaughan]{ben2010theory}
Shai Ben-David, John Blitzer, Koby Crammer, Alex Kulesza, Fernando Pereira, and
  Jennifer~Wortman Vaughan.
\newblock A theory of learning from different domains.
\newblock \emph{Machine Learning}, 79\penalty0 (1-2):\penalty0 151--175, 2010.

\bibitem[Bretagnolle \& Huber(1979)Bretagnolle and
  Huber]{bretagnolle1979estimation}
Jean Bretagnolle and Catherine Huber.
\newblock Estimation des densit{\'e}s: risque minimax.
\newblock \emph{Zeitschrift f{\"u}r Wahrscheinlichkeitstheorie und verwandte
  Gebiete}, 47\penalty0 (2):\penalty0 119--137, 1979.

\bibitem[Bu et~al.(2019)Bu, Zou, and Veeravalli]{bu2019tightening}
Yuheng Bu, Shaofeng Zou, and Venugopal~V Veeravalli.
\newblock Tightening mutual information based bounds on generalization error.
\newblock In \emph{2019 IEEE International Symposium on Information Theory
  (ISIT)}, pp.\  587--591. IEEE, 2019.

\bibitem[Bu et~al.(2022)Bu, Aminian, Toni, Wornell, and
  Rodrigues]{bu2022characterizing}
Yuheng Bu, Gholamali Aminian, Laura Toni, Gregory~W Wornell, and Miguel
  Rodrigues.
\newblock Characterizing and understanding the generalization error of transfer
  learning with gibbs algorithm.
\newblock In \emph{International Conference on Artificial Intelligence and
  Statistics}, pp.\  8673--8699. PMLR, 2022.

\bibitem[Canonne(2022)]{canonne2022short}
Cl{\'e}ment~L Canonne.
\newblock A short note on an inequality between kl and tv.
\newblock \emph{arXiv preprint arXiv:2202.07198}, 2022.

\bibitem[Chen et~al.(2021)Chen, Shui, and Marchand]{chen2021generalization}
Qi~Chen, Changjian Shui, and Mario Marchand.
\newblock Generalization bounds for meta-learning: An information-theoretic
  analysis.
\newblock \emph{Advances in Neural Information Processing Systems}, 34, 2021.

\bibitem[Cortes et~al.(2019)Cortes, Greenberg, and Mohri]{cortes2019relative}
Corinna Cortes, Spencer Greenberg, and Mehryar Mohri.
\newblock Relative deviation learning bounds and generalization with unbounded
  loss functions.
\newblock \emph{Annals of Mathematics and Artificial Intelligence}, 85\penalty0
  (1):\penalty0 45--70, 2019.

\bibitem[Cover \& Thomas(2006)Cover and Thomas]{thomas2006elements}
Thomas~M. Cover and Joy~A. Thomas.
\newblock \emph{Elements of Information Theory (Wiley Series in
  Telecommunications and Signal Processing)}.
\newblock Wiley-Interscience, USA, 2006.
\newblock ISBN 0471241954.

\bibitem[Crammer et~al.(2008)Crammer, Kearns, and Wortman]{crammer2008learning}
Koby Crammer, Michael Kearns, and Jennifer Wortman.
\newblock Learning from multiple sources.
\newblock \emph{Journal of Machine Learning Research}, 9\penalty0 (8), 2008.

\bibitem[Crooks(2008)]{Crooks2008InequalitiesBT}
Gavin~E. Crooks.
\newblock Inequalities between the jenson-shannon and jeffreys divergences.
\newblock In \emph{Tech. Note 004}, 2008.

\bibitem[Cédric(2008)]{Villani2008}
Villani Cédric.
\newblock \emph{Optimal Transport: Old and New (Grundlehren der mathematischen
  Wissenschaften, 338)}.
\newblock Springer, 2008.

\bibitem[David et~al.(2010)David, Lu, Luu, and P{\'a}l]{david2010impossibility}
Shai~Ben David, Tyler Lu, Teresa Luu, and D{\'a}vid P{\'a}l.
\newblock Impossibility theorems for domain adaptation.
\newblock In \emph{Proceedings of the Thirteenth International Conference on
  Artificial Intelligence and Statistics}, pp.\  129--136. JMLR Workshop and
  Conference Proceedings, 2010.

\bibitem[Ganin et~al.(2016)Ganin, Ustinova, Ajakan, Germain, Larochelle,
  Laviolette, Marchand, and Lempitsky]{ganin2016domain}
Yaroslav Ganin, Evgeniya Ustinova, Hana Ajakan, Pascal Germain, Hugo
  Larochelle, Fran{\c{c}}ois Laviolette, Mario Marchand, and Victor Lempitsky.
\newblock Domain-adversarial training of neural networks.
\newblock \emph{The journal of machine learning research}, 17\penalty0
  (1):\penalty0 2096--2030, 2016.

\bibitem[Geiping et~al.(2022)Geiping, Goldblum, Pope, Moeller, and
  Goldstein]{geiping2022stochastic}
Jonas Geiping, Micah Goldblum, Phil Pope, Michael Moeller, and Tom Goldstein.
\newblock Stochastic training is not necessary for generalization.
\newblock In \emph{International Conference on Learning Representations}, 2022.

\bibitem[Germain et~al.(2020)Germain, Habrard, Laviolette, and
  Morvant]{germain2020pac}
Pascal Germain, Amaury Habrard, Fran{\c{c}}ois Laviolette, and Emilie Morvant.
\newblock Pac-bayes and domain adaptation.
\newblock \emph{Neurocomputing}, 379:\penalty0 379--397, 2020.

\bibitem[Gulrajani \& Lopez-Paz(2021)Gulrajani and Lopez-Paz]{gulrajani2021in}
Ishaan Gulrajani and David Lopez-Paz.
\newblock In search of lost domain generalization.
\newblock In \emph{International Conference on Learning Representations}, 2021.

\bibitem[Gulrajani et~al.(2017)Gulrajani, Ahmed, Arjovsky, Dumoulin, and
  Courville]{gulrajani2017improved}
Ishaan Gulrajani, Faruk Ahmed, Martin Arjovsky, Vincent Dumoulin, and Aaron~C
  Courville.
\newblock Improved training of wasserstein gans.
\newblock \emph{Advances in neural information processing systems}, 30, 2017.

\bibitem[Haghifam et~al.(2020)Haghifam, Negrea, Khisti, Roy, and
  Dziugaite]{haghifam2020sharpened}
Mahdi Haghifam, Jeffrey Negrea, Ashish Khisti, Daniel~M Roy, and
  Gintare~Karolina Dziugaite.
\newblock Sharpened generalization bounds based on conditional mutual
  information and an application to noisy, iterative algorithms.
\newblock \emph{Advances in Neural Information Processing Systems}, 2020.

\bibitem[Harutyunyan et~al.(2020)Harutyunyan, Reing, Ver~Steeg, and
  Galstyan]{harutyunyan2020improving}
Hrayr Harutyunyan, Kyle Reing, Greg Ver~Steeg, and Aram Galstyan.
\newblock Improving generalization by controlling label-noise information in
  neural network weights.
\newblock In \emph{International Conference on Machine Learning}, pp.\
  4071--4081. PMLR, 2020.

\bibitem[He et~al.(2021)He, Yan, and Tan]{he2021information}
Haiyun He, Hanshu Yan, and Vincent~YF Tan.
\newblock Information-theoretic generalization bounds for iterative
  semi-supervised learning.
\newblock \emph{arXiv preprint arXiv:2110.00926}, 2021.

\bibitem[Hull(1994)]{hull1994database}
Jonathan~J. Hull.
\newblock A database for handwritten text recognition research.
\newblock \emph{IEEE Transactions on pattern analysis and machine
  intelligence}, 16\penalty0 (5):\penalty0 550--554, 1994.

\bibitem[Jastrzebski et~al.(2021)Jastrzebski, Arpit, Astrand, Kerg, Wang,
  Xiong, Socher, Cho, and Geras]{jastrzebski2021catastrophic}
Stanislaw Jastrzebski, Devansh Arpit, Oliver Astrand, Giancarlo~B Kerg, Huan
  Wang, Caiming Xiong, Richard Socher, Kyunghyun Cho, and Krzysztof~J Geras.
\newblock Catastrophic fisher explosion: Early phase fisher matrix impacts
  generalization.
\newblock In \emph{International Conference on Machine Learning}. PMLR, 2021.

\bibitem[Jeffreys(1946)]{jeffreys1946invariant}
Harold Jeffreys.
\newblock An invariant form for the prior probability in estimation problems.
\newblock \emph{Proceedings of the Royal Society of London. Series A.
  Mathematical and Physical Sciences}, 186\penalty0 (1007):\penalty0 453--461,
  1946.

\bibitem[Jiao et~al.(2017)Jiao, Han, and Weissman]{jiao2017dependence}
Jiantao Jiao, Yanjun Han, and Tsachy Weissman.
\newblock Dependence measures bounding the exploration bias for general
  measurements.
\newblock In \emph{2017 IEEE International Symposium on Information Theory
  (ISIT)}, pp.\  1475--1479. IEEE, 2017.

\bibitem[Jose \& Simeone(2021{\natexlab{a}})Jose and
  Simeone]{jose2021information}
Sharu~Theresa Jose and Osvaldo Simeone.
\newblock Information-theoretic generalization bounds for meta-learning and
  applications.
\newblock \emph{Entropy}, 23\penalty0 (1):\penalty0 126, 2021{\natexlab{a}}.

\bibitem[Jose \& Simeone(2021{\natexlab{b}})Jose and
  Simeone]{jose2021informationJS}
Sharu~Theresa Jose and Osvaldo Simeone.
\newblock Information-theoretic bounds on transfer generalization gap based on
  jensen-shannon divergence.
\newblock In \emph{2021 29th European Signal Processing Conference (EUSIPCO)},
  pp.\  1461--1465. IEEE, 2021{\natexlab{b}}.

\bibitem[Jose et~al.(2021)Jose, Simeone, and Durisi]{jose2021transfer}
Sharu~Theresa Jose, Osvaldo Simeone, and Giuseppe Durisi.
\newblock Transfer meta-learning: Information-theoretic bounds and information
  meta-risk minimization.
\newblock \emph{IEEE Transactions on Information Theory}, 68\penalty0
  (1):\penalty0 474--501, 2021.

\bibitem[Kingma \& Welling(2013)Kingma and Welling]{kingma2013auto}
Diederik~P Kingma and Max Welling.
\newblock Auto-encoding variational bayes.
\newblock \emph{arXiv preprint arXiv:1312.6114}, 2013.

\bibitem[LeCun et~al.(2010)LeCun, Cortes, and Burges]{lecun2010mnist}
Yann LeCun, Corinna Cortes, and CJ~Burges.
\newblock Mnist handwritten digit database.
\newblock \emph{ATT Labs [Online]. Available:
  http://yann.lecun.com/exdb/mnist}, 2, 2010.

\bibitem[Li et~al.(2018)Li, Pan, Wang, and Kot]{li2018domain}
Haoliang Li, Sinno~Jialin Pan, Shiqi Wang, and Alex~C Kot.
\newblock Domain generalization with adversarial feature learning.
\newblock In \emph{Proceedings of the IEEE conference on computer vision and
  pattern recognition}, pp.\  5400--5409, 2018.

\bibitem[Liang et~al.(2020)Liang, Hu, and Feng]{liang2020we}
Jian Liang, Dapeng Hu, and Jiashi Feng.
\newblock Do we really need to access the source data? source hypothesis
  transfer for unsupervised domain adaptation.
\newblock In \emph{International Conference on Machine Learning}, pp.\
  6028--6039. PMLR, 2020.

\bibitem[Mansour et~al.(2009)Mansour, Mohri, and Rostamizadeh]{MansourMR09}
Yishay Mansour, Mehryar Mohri, and Afshin Rostamizadeh.
\newblock Domain adaptation: Learning bounds and algorithms.
\newblock In \emph{The 22nd Conference on Learning Theory}, 2009.

\bibitem[Masiha et~al.(2021)Masiha, Gohari, Yassaee, and
  Aref]{masiha2021learning}
Mohammad~Saeed Masiha, Amin Gohari, Mohammad~Hossein Yassaee, and Mohammad~Reza
  Aref.
\newblock Learning under distribution mismatch and model misspecification.
\newblock In \emph{2021 IEEE International Symposium on Information Theory
  (ISIT)}, pp.\  2912--2917. IEEE, 2021.

\bibitem[Mohri et~al.(2018)Mohri, Rostamizadeh, and
  Talwalkar]{mohri2018foundations}
Mehryar Mohri, Afshin Rostamizadeh, and Ameet Talwalkar.
\newblock \emph{Foundations of machine learning}.
\newblock MIT press, 2018.

\bibitem[Negrea et~al.(2019)Negrea, Haghifam, Dziugaite, Khisti, and
  Roy]{negrea2019information}
Jeffrey Negrea, Mahdi Haghifam, Gintare~Karolina Dziugaite, Ashish Khisti, and
  Daniel~M Roy.
\newblock Information-theoretic generalization bounds for sgld via
  data-dependent estimates.
\newblock \emph{Advances in Neural Information Processing Systems}, 2019.

\bibitem[Netzer et~al.(2011)Netzer, Wang, Coates, Bissacco, Wu, and
  Ng]{netzer2011reading}
Yuval Netzer, Tao Wang, Adam Coates, Alessandro Bissacco, Bo~Wu, and Andrew~Y
  Ng.
\newblock Reading digits in natural images with unsupervised feature learning.
\newblock In \emph{NeurIPS Workshop on Deep Learning and Unsupervised Feature
  Learning}, 2011.

\bibitem[Neu et~al.(2021)Neu, Dziugaite, Haghifam, and Roy]{neu2021information}
Gergely Neu, Gintare~Karolina Dziugaite, Mahdi Haghifam, and Daniel~M Roy.
\newblock Information-theoretic generalization bounds for stochastic gradient
  descent.
\newblock In \emph{Conference on Learning Theory}. PMLR, 2021.

\bibitem[Nguyen et~al.(2022)Nguyen, Tran, Gal, Torr, and Baydin]{nguyen2022kl}
A.~Tuan Nguyen, Toan Tran, Yarin Gal, Philip Torr, and Atilim~Gunes Baydin.
\newblock {KL} guided domain adaptation.
\newblock In \emph{International Conference on Learning Representations}, 2022.

\bibitem[Nguyen et~al.(2010)Nguyen, Wainwright, and
  Jordan]{nguyen2010estimating}
XuanLong Nguyen, Martin~J Wainwright, and Michael~I Jordan.
\newblock Estimating divergence functionals and the likelihood ratio by convex
  risk minimization.
\newblock \emph{IEEE Transactions on Information Theory}, 56\penalty0
  (11):\penalty0 5847--5861, 2010.

\bibitem[Palomar \& Verd{\'u}(2008)Palomar and Verd{\'u}]{palomar2008lautum}
Daniel~P Palomar and Sergio Verd{\'u}.
\newblock Lautum information.
\newblock \emph{IEEE transactions on information theory}, 54\penalty0
  (3):\penalty0 964--975, 2008.

\bibitem[Paszke et~al.(2019)Paszke, Gross, Massa, Lerer, Bradbury, Chanan,
  Killeen, Lin, Gimelshein, Antiga, et~al.]{paszke2019pytorch}
Adam Paszke, Sam Gross, Francisco Massa, Adam Lerer, James Bradbury, Gregory
  Chanan, Trevor Killeen, Zeming Lin, Natalia Gimelshein, Luca Antiga, et~al.
\newblock Pytorch: An imperative style, high-performance deep learning library.
\newblock \emph{Advances in Neural Information Processing Systems},
  32:\penalty0 8026--8037, 2019.

\bibitem[Peng et~al.(2017)Peng, Usman, Kaushik, Hoffman, Wang, and
  Saenko]{peng2017visda}
Xingchao Peng, Ben Usman, Neela Kaushik, Judy Hoffman, Dequan Wang, and Kate
  Saenko.
\newblock Visda: The visual domain adaptation challenge.
\newblock \emph{arXiv preprint arXiv:1710.06924}, 2017.

\bibitem[Pensia et~al.(2018)Pensia, Jog, and Loh]{pensia2018generalization}
Ankit Pensia, Varun Jog, and Po-Ling Loh.
\newblock Generalization error bounds for noisy, iterative algorithms.
\newblock In \emph{2018 IEEE International Symposium on Information Theory
  (ISIT)}. IEEE, 2018.

\bibitem[Polyanskiy \& Wu(2019)Polyanskiy and Wu]{polyanskiy2019lecture}
Yury Polyanskiy and Yihong Wu.
\newblock Lecture notes on information theory.
\newblock \emph{Lecture Notes for 6.441 (MIT), ECE 563 (UIUC), STAT 364 (Yale),
  2019.}, 2019.

\bibitem[Redko et~al.(2020)Redko, Morvant, Habrard, Sebban, and
  Bennani]{redko2020survey}
Ievgen Redko, Emilie Morvant, Amaury Habrard, Marc Sebban, and Younes Bennani.
\newblock A survey on domain adaptation theory.
\newblock \emph{arXiv preprint arXiv:2004.11829}, 2020.

\bibitem[Rezazadeh et~al.(2021)Rezazadeh, Jose, Durisi, and
  Simeone]{rezazadeh2021conditional}
Arezou Rezazadeh, Sharu~Theresa Jose, Giuseppe Durisi, and Osvaldo Simeone.
\newblock Conditional mutual information-based generalization bound for meta
  learning.
\newblock In \emph{2021 IEEE International Symposium on Information Theory
  (ISIT)}, pp.\  1176--1181. IEEE, 2021.

\bibitem[Rodr{\'\i}guez-G{\'a}lvez et~al.(2021)Rodr{\'\i}guez-G{\'a}lvez,
  Bassi, Thobaben, and Skoglund]{rodriguez2021random}
Borja Rodr{\'\i}guez-G{\'a}lvez, Germ{\'a}n Bassi, Ragnar Thobaben, and Mikael
  Skoglund.
\newblock On random subset generalization error bounds and the stochastic
  gradient langevin dynamics algorithm.
\newblock In \emph{2020 IEEE Information Theory Workshop (ITW)}, pp.\  1--5.
  IEEE, 2021.

\bibitem[Rodr{\'\i}guez~G{\'a}lvez et~al.(2021)Rodr{\'\i}guez~G{\'a}lvez,
  Bassi, Thobaben, and Skoglund]{rodriguez2021tighter}
Borja Rodr{\'\i}guez~G{\'a}lvez, Germ{\'a}n Bassi, Ragnar Thobaben, and Mikael
  Skoglund.
\newblock Tighter expected generalization error bounds via wasserstein
  distance.
\newblock \emph{Advances in Neural Information Processing Systems}, 34, 2021.

\bibitem[Russo \& Zou(2016)Russo and Zou]{russo2016controlling}
Daniel Russo and James Zou.
\newblock Controlling bias in adaptive data analysis using information theory.
\newblock In \emph{Artificial Intelligence and Statistics}. PMLR, 2016.

\bibitem[Russo \& Zou(2019)Russo and Zou]{russo2019much}
Daniel Russo and James Zou.
\newblock How much does your data exploration overfit? controlling bias via
  information usage.
\newblock \emph{IEEE Transactions on Information Theory}, 66\penalty0
  (1):\penalty0 302--323, 2019.

\bibitem[Shen et~al.(2018)Shen, Qu, Zhang, and Yu]{shen2018wasserstein}
Jian Shen, Yanru Qu, Weinan Zhang, and Yong Yu.
\newblock Wasserstein distance guided representation learning for domain
  adaptation.
\newblock In \emph{Thirty-second AAAI conference on artificial intelligence},
  2018.

\bibitem[Shen et~al.(2022)Shen, Bu, and Wornell]{shen2022benefits}
Maohao Shen, Yuheng Bu, and Gregory Wornell.
\newblock On the benefits of selectivity in pseudo-labeling for unsupervised
  multi-source-free domain adaptation.
\newblock \emph{arXiv preprint arXiv:2202.00796}, 2022.

\bibitem[Shui et~al.(2020)Shui, Chen, Wen, Zhou, Gagn{\'e}, and
  Wang]{shui2020beyond}
Changjian Shui, Qi~Chen, Jun Wen, Fan Zhou, Christian Gagn{\'e}, and Boyu Wang.
\newblock Beyond $\mathcal{H}$-divergence: Domain adaptation theory with
  jensen-shannon divergence.
\newblock \emph{arXiv preprint arXiv:2007.15567}, 2020.

\bibitem[Smith et~al.(2021)Smith, Dherin, Barrett, and De]{smith2020origin}
Samuel~L Smith, Benoit Dherin, David Barrett, and Soham De.
\newblock On the origin of implicit regularization in stochastic gradient
  descent.
\newblock In \emph{International Conference on Learning Representations}, 2021.

\bibitem[Steinke \& Zakynthinou(2020)Steinke and
  Zakynthinou]{steinke2020reasoning}
Thomas Steinke and Lydia Zakynthinou.
\newblock Reasoning about generalization via conditional mutual information.
\newblock In \emph{Conference on Learning Theory}. PMLR, 2020.

\bibitem[Sun \& Saenko(2016)Sun and Saenko]{sun2016deep}
Baochen Sun and Kate Saenko.
\newblock Deep coral: Correlation alignment for deep domain adaptation.
\newblock In \emph{European conference on computer vision}, pp.\  443--450.
  Springer, 2016.

\bibitem[Van~der Maaten \& Hinton(2008)Van~der Maaten and
  Hinton]{van2008visualizing}
Laurens Van~der Maaten and Geoffrey Hinton.
\newblock Visualizing data using t-sne.
\newblock \emph{Journal of machine learning research}, 9\penalty0 (11), 2008.

\bibitem[Vapnik(1998)]{SLT98Vapnik}
Vladimir Vapnik.
\newblock \emph{Statistical learning theory}.
\newblock Wiley, 1998.
\newblock ISBN 978-0-471-03003-4.

\bibitem[Wang et~al.(2019)Wang, Mendez, Cai, and Eaton]{wang2019transfer}
Boyu Wang, Jorge Mendez, Mingbo Cai, and Eric Eaton.
\newblock Transfer learning via minimizing the performance gap between domains.
\newblock \emph{Advances in Neural Information Processing Systems}, 32, 2019.

\bibitem[Wang et~al.(2021{\natexlab{a}})Wang, Gao, and
  Calmon]{wang2021generalization}
Hao Wang, Rui Gao, and Flavio~P Calmon.
\newblock Generalization bounds for noisy iterative algorithms using properties
  of additive noise channels.
\newblock \emph{arXiv preprint arXiv:2102.02976}, 2021{\natexlab{a}}.

\bibitem[Wang et~al.(2021{\natexlab{b}})Wang, Lan, Liu, Ouyang, Zeng, and
  Qin]{wang2021generalizing}
Jindong Wang, Cuiling Lan, Chang Liu, Yidong Ouyang, Wenjun Zeng, and Tao Qin.
\newblock Generalizing to unseen domains: A survey on domain generalization.
\newblock \emph{arXiv preprint arXiv:2103.03097}, 2021{\natexlab{b}}.

\bibitem[Wang \& Mao(2022{\natexlab{a}})Wang and Mao]{wang2022generalization}
Ziqiao Wang and Yongyi Mao.
\newblock On the generalization of models trained with {SGD}:
  Information-theoretic bounds and implications.
\newblock In \emph{International Conference on Learning Representations},
  2022{\natexlab{a}}.

\bibitem[Wang \& Mao(2022{\natexlab{b}})Wang and Mao]{wang2022two}
Ziqiao Wang and Yongyi Mao.
\newblock Two facets of sde under an information-theoretic lens: Generalization
  of sgd via training trajectories and via terminal states.
\newblock \emph{arXiv preprint arXiv:2211.10691}, 2022{\natexlab{b}}.

\bibitem[Wang \& Mao(2023)Wang and Mao]{wang2023tighter}
Ziqiao Wang and Yongyi Mao.
\newblock Tighter information-theoretic generalization bounds from
  supersamples.
\newblock \emph{arXiv preprint arXiv:2302.02432}, 2023.

\bibitem[Wilson \& Cook(2020)Wilson and Cook]{wilson2020survey}
Garrett Wilson and Diane~J Cook.
\newblock A survey of unsupervised deep domain adaptation.
\newblock \emph{ACM Transactions on Intelligent Systems and Technology (TIST)},
  11\penalty0 (5):\penalty0 1--46, 2020.

\bibitem[Wu et~al.(2020)Wu, Manton, Aickelin, and Zhu]{wu2020information}
Xuetong Wu, Jonathan~H Manton, Uwe Aickelin, and Jingge Zhu.
\newblock Information-theoretic analysis for transfer learning.
\newblock In \emph{2020 IEEE International Symposium on Information Theory
  (ISIT)}, pp.\  2819--2824. IEEE, 2020.

\bibitem[Xu \& Raginsky(2017)Xu and Raginsky]{xu2017information}
Aolin Xu and Maxim Raginsky.
\newblock Information-theoretic analysis of generalization capability of
  learning algorithms.
\newblock \emph{Advances in Neural Information Processing Systems}, 2017.

\bibitem[Yu et~al.(2022)Yu, Yang, Wei, Ma, and Steinhardt]{yu2022predicting}
Yaodong Yu, Zitong Yang, Alexander Wei, Yi~Ma, and Jacob Steinhardt.
\newblock Predicting out-of-distribution error with the projection norm.
\newblock \emph{arXiv preprint arXiv:2202.05834}, 2022.

\bibitem[Zhang et~al.(2019)Zhang, Liu, Long, and Jordan]{zhang2019bridging}
Yuchen Zhang, Tianle Liu, Mingsheng Long, and Michael Jordan.
\newblock Bridging theory and algorithm for domain adaptation.
\newblock In \emph{International Conference on Machine Learning}, pp.\
  7404--7413. PMLR, 2019.

\bibitem[Zhao et~al.(2019)Zhao, Des~Combes, Zhang, and
  Gordon]{zhao2019learning}
Han Zhao, Remi~Tachet Des~Combes, Kun Zhang, and Geoffrey Gordon.
\newblock On learning invariant representations for domain adaptation.
\newblock In \emph{International Conference on Machine Learning}, pp.\
  7523--7532. PMLR, 2019.

\bibitem[Zhou et~al.(2021)Zhou, Liu, Qiao, Xiang, and
  Change~Loy]{zhou2021domain}
Kaiyang Zhou, Ziwei Liu, Yu~Qiao, Tao Xiang, and Chen Change~Loy.
\newblock Domain generalization: A survey.
\newblock \emph{arXiv e-prints}, pp.\  arXiv--2103, 2021.

\end{thebibliography}
\bibliographystyle{iclr2023_conference}

\newpage
\appendix
\addcontentsline{toc}{section}{Appendix} 
\part{Appendix} 
\parttoc

\section{Some Prerequisite Definitions and Useful Lemmas}
\begin{defn}[Wasserstein Distance]
Let $d(\cdot,\cdot)$ be a metric and let $P$ and $Q$ be probability measures on $\mathcal{X}$. Denote  $\Gamma(P,Q)$ as the set of all couplings of $P$ and $Q$ (i.e. the set of all joint distributions  on $\mathcal {X} \times \mathcal {X}$ with two marginals being $P$ and $Q$), then the Wasserstein Distance of order one between $P$ and $Q$ is defined as $\mathbb{W}(P,Q)\triangleq\inf_{\gamma\in\Gamma(P,Q)}\int_{\mathcal{X}\times\mathcal{X}} d(x,x')d\gamma(x,x')$. 
\end{defn}
\begin{rem}
Similar to \cite{rodriguez2021tighter}, here we mainly focus on $1$-Wasserstein distance but all the upper bounds based on $1$-Wasserstein distance also holds for higher order Wasserstein distance by H\"older's inequality \cite[Remark~6.6]{Villani2008}.
\end{rem}

\begin{defn}[Total Variation]
The total variation between two probability measures  $P$ and $Q$ is $\mathrm{TV}(P,Q)\triangleq\sup_E\left|P(E)-Q(E)\right|$, where the supremum is over all measurable set $E$.
\end{defn}

\begin{rem}
    Note that the total variation equals to the Wasserstein distance under the discrete metric (or Hamming distortion)  $d(x,x')=\mathbbm{1}_{x\neq x'}$ where $\mathbbm{1}$ is the indicator function \cite[Theorem~6.15]{Villani2008}.
\end{rem}


The key quantity in the most information-theoretic generalization bounds is the mutual information between algorithm's input and output.
Specifically, the core technique behind these bounds is the well-known Donsker-Varadhan representation of KL divergence \cite[Theorem~3.5]{polyanskiy2019lecture}.
\begin{lem}[Donsker and Varadhan's variational formula]
\label{lem:DV-KL}
Let $Q$, $P$ be probability measures on $\Theta$, for any bounded measurable function $f:\Theta\rightarrow \mathbb{R}$, we have
$
\mathrm{D_{KL}}(Q||P) = \sup_{f} \ex{\theta\sim Q}{f(\theta)}-\log\ex{\theta\sim P}{\exp{f(\theta)}}.
$
\end{lem}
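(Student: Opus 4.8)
The plan is to establish the two inequalities separately via the standard change-of-measure (Gibbs tilting) argument, together with the non-negativity of the KL divergence. Throughout write $\phi\triangleq dQ/dP$ for the Radon--Nikodym derivative whenever $Q$ is absolutely continuous with respect to $P$. First I would dispose of the degenerate case: if $Q\not\ll P$ then $\mathrm{D_{KL}}(Q||P)=+\infty$, and one can make the right-hand side arbitrarily large by taking bounded $f$ that are large on a set carrying $Q$-mass but negligible $P$-mass, so equality holds trivially. Assume henceforth $Q\ll P$.

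For the lower bound, I would fix any bounded measurable $f$. Boundedness guarantees $\ex{\theta\sim P}{e^{f(\theta)}}<\infty$, so I may define the tilted probability measure $P_f$ by $dP_f/dP=e^{f}/\ex{\theta\sim P}{e^{f}}$. A direct computation then gives
\[
\mathrm{D_{KL}}(Q||P)-\mathrm{D_{KL}}(Q||P_f)=\ex{\theta\sim Q}{\log\frac{dP_f}{dP}}=\ex{\theta\sim Q}{f(\theta)}-\log\ex{\theta\sim P}{e^{f(\theta)}}.
\]
Since $\mathrm{D_{KL}}(Q||P_f)\ge 0$ by Gibbs' inequality, the right-hand side is at most $\mathrm{D_{KL}}(Q||P)$; taking the supremum over bounded $f$ yields the inequality $\mathrm{D_{KL}}(Q||P)\ge\sup_f\{\cdots\}$.

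For the matching upper bound I would plug in the (formally optimal) choice $f^\star=\log\phi$, for which $\ex{\theta\sim Q}{f^\star}=\mathrm{D_{KL}}(Q||P)$ and $\ex{\theta\sim P}{e^{f^\star}}=\int\phi\,dP=\int dQ=1$, so the objective collapses exactly to $\mathrm{D_{KL}}(Q||P)$. The hard part is that $f^\star$ need not be bounded, whereas the lemma restricts the supremum to bounded test functions. I would therefore truncate, setting $f_n=\max(\min(\log\phi,\,n),\,-n)$, and show that the objective at $f_n$ converges to $\mathrm{D_{KL}}(Q||P)$ as $n\to\infty$: the term $\log\ex{\theta\sim P}{e^{f_n}}\to\log\ex{\theta\sim P}{\phi}=0$ by dominated convergence (using $e^{f_n}\le\phi+1\in L^1(P)$), while $\ex{\theta\sim Q}{f_n}\to\ex{\theta\sim Q}{\log\phi}=\mathrm{D_{KL}}(Q||P)$ by monotone convergence applied to the positive and negative parts of $\log\phi$ separately (the negative part being $Q$-integrable by the usual $x\log x$ bound, the positive part supplying the possibly infinite limit).

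This shows the supremum over bounded $f$ attains $\mathrm{D_{KL}}(Q||P)$ in the limit, and combined with the lower bound gives the claimed equality. The main obstacle is precisely this truncation and limiting step: controlling the unboundedness of the optimal log-likelihood ratio while remaining inside the admissible class of bounded functions, and simultaneously covering the case $\mathrm{D_{KL}}(Q||P)=+\infty$. Everything else is a short Radon--Nikodym computation and an appeal to non-negativity of relative entropy.
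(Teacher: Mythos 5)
Your proof is correct. The paper does not actually prove this lemma---it is quoted as a known result with a citation to Polyanskiy and Wu's lecture notes---and your argument (Gibbs tilting $dP_f/dP=e^f/\ex{\theta\sim P}{e^f}$ plus non-negativity of $\mathrm{D_{KL}}(Q\|P_f)$ for one direction, truncation of $\log(dQ/dP)$ with monotone/dominated convergence for the other, and the singular case handled separately) is precisely the standard proof given in that reference, with the measure-theoretic edge cases handled properly.
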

\begin{rem}
Motivated by the classic $f$-divergence, \citet{acuna2021f} proposed a discrepancy measure called $\mathrm{D}_{\mathcal{H}}^\phi$-discrepancy (or $\mathrm{D}_{h,\mathcal{H}}^\phi$-discrepancy). 
As KL divergence belongs to the family of $f$-divergences and both \cite{acuna2021f} and our work use the variational representation of divergence, there appears to be a connection between our work (in Section~\ref{sec:unexpected-bound}) and theirs. However, it's important to note that the variational characterization of $f$-divergence used in \cite{acuna2021f} is based on the results of \cite{nguyen2010estimating}, while the Donsker-Varadhan representation of KL divergence (see Lemma~\ref{lem:DV-KL}) used in our paper cannot be directly obtained from their variational characterization \citep{jiao2017dependence,agrawal2020optimal}. In fact, simply choosing $x\log x$ as the conjugate function would result in a weaker bound than Lemma~\ref{lem:DV-KL}. Therefore, while there is some similarity between our results and those of \cite{acuna2021f}, our results in Section~\ref{sec:unexpected-bound} cannot be directly derived from theirs.
\end{rem}

Similar to \citet[Lemma~1.]{xu2017information}, we need the following lemma as a main tool.
\begin{lem}
\label{lem:DV-Subgaussian}
Let $Q$ and $P$ be probability measures on $\Theta$. Let $\theta'\sim Q$ and $\theta\sim P$. If $g(\theta)$ is $R$-subgaussian, then,
\[
\left|\ex{\theta'\sim Q}{g(\theta')}-\ex{\theta\sim P}{g(\theta)}\right|\leq \sqrt{2R^2\mathrm{D_{KL}}(Q||P)}.
\]
\end{lem}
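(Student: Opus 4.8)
The plan is to apply the Donsker--Varadhan variational formula (Lemma~\ref{lem:DV-KL}) with the one-parameter family of test functions $f=\lambda g$ and then optimize over the scalar $\lambda\in\mathbb{R}$. Write $\Delta\triangleq\mathbb{E}_{\theta'\sim Q}[g(\theta')]-\mathbb{E}_{\theta\sim P}[g(\theta)]$ for the quantity we wish to bound. First I would note that $R$-subgaussianity of $g$ under $P$ guarantees that $\mathbb{E}_{\theta\sim P}[\exp(\lambda g(\theta))]$ is finite for every $\lambda$, so that the substitution $f=\lambda g$ is legitimate; the boundedness hypothesis in Lemma~\ref{lem:DV-KL} can be removed by a standard truncation-and-limiting argument, using the exponential moment bound to control the tails. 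This yields, for every $\lambda\in\mathbb{R}$,
\[
\lambda\,\mathbb{E}_{\theta\sim Q}[g(\theta)]-\log\mathbb{E}_{\theta\sim P}[\exp(\lambda g(\theta))]\le\mathrm{D_{KL}}(Q||P).
\]

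Next I would invoke the definition of $R$-subgaussianity to control the cumulant generating function, namely $\log\mathbb{E}_{\theta\sim P}[\exp(\lambda g(\theta))]\le\lambda\,\mathbb{E}_{\theta\sim P}[g(\theta)]+\lambda^2R^2/2$. Substituting this into the previous display and rearranging converts the inequality into a statement purely about $\Delta$,
\[
\lambda\Delta-\frac{\lambda^2R^2}{2}\le\mathrm{D_{KL}}(Q||P)\qquad\text{for all }\lambda\in\mathbb{R},
\]
or equivalently that the upward-opening quadratic $\tfrac{R^2}{2}\lambda^2-\Delta\lambda+\mathrm{D_{KL}}(Q||P)$ in $\lambda$ is nonnegative everywhere.

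To finish, I would impose that this quadratic have nonpositive discriminant, $\Delta^2-2R^2\,\mathrm{D_{KL}}(Q||P)\le0$, which immediately gives $|\Delta|\le\sqrt{2R^2\,\mathrm{D_{KL}}(Q||P)}$, the claimed bound. Equivalently, one optimizes the left-hand side over $\lambda$: the choice $\lambda=\Delta/R^2$ extracts the bound when $\Delta\ge0$, while $\lambda=\Delta/R^2<0$ handles $\Delta\le0$, so the absolute value is covered with no sign assumption on $\Delta$.

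I expect the only genuine obstacle to be the technical justification of the first step for unbounded $g$: Lemma~\ref{lem:DV-KL} is stated for bounded $f$, whereas an $R$-subgaussian $g$ need not be bounded. The remedy is routine---truncate $g$ at levels $\pm N$, apply the bounded version, and pass $N\to\infty$---but it must be carried out so that the subgaussian tail control transfers to the limiting exponential moments and expectations. Everything after that is elementary: the cumulant bound is immediate from the definition of subgaussianity, and the conclusion is just the nonnegativity criterion for a scalar quadratic.
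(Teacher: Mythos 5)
Your proposal is correct and follows essentially the same route as the paper's proof: substitute $f=\lambda g$ into the Donsker--Varadhan formula, bound the cumulant generating function by subgaussianity, and optimize the resulting quadratic in $\lambda$ (the paper phrases the final optimization via AM--GM over $t>0$ and $t<0$, which is equivalent to your discriminant argument). Your remark on extending Lemma~\ref{lem:DV-KL} from bounded to subgaussian $g$ via truncation is a detail the paper leaves implicit, but the substance of the argument is the same.
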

\begin{proof}
Let $f=t\cdot g$ for any $t\in \mathbb{R}$, by Lemma \ref{lem:DV-KL}, we have
\begin{align*}
        \mathrm{D_{KL}}(Q||P) \geq& \sup_t\ex{\theta'\sim Q}{t g(\theta')}-\log{\ex{\theta\sim P}{\exp{t\cdot g(\theta)}}}\\
        =&\sup_t\ex{\theta'\sim Q}{t g(\theta')}-\log{\ex{\theta\sim P}{\exp{t (g(\theta)-\ex{\theta\sim P}{g(\theta)}+\ex{\theta\sim P}{g(\theta)})}}}\\
        =&\sup_t \ex{\theta'\sim Q}{t g(\theta')}-\ex{\theta\sim P}{t g(\theta)}-\log{\ex{\theta\sim P}{\exp{t (g(\theta)-\ex{\theta\sim P}{g(\theta)})}}}\\
        \geq&\sup_t t\left(\ex{\theta'\sim Q}{ g(\theta')}-\ex{\theta\sim P}{ g(\theta)}\right)-t^2R^2/2,
\end{align*}
where the last inequality is by the subgaussianity of $g(\theta)$.

Then consider the case of $t>0$ and $t<0$ ($t=0$ is trivial), by AM–GM inequality (i.e. the arithmetic mean is greater than or equal to the geometric mean), the following is straightforward,
\[
\left|\ex{\theta'\sim Q}{g(\theta')}-\ex{\theta\sim P}{g(\theta)}\right|\leq \sqrt{2R^2\mathrm{D_{KL}}(Q||P)}.
\]
This completes the proof.
\end{proof}

The following lemma is the Kantorovich–Rubinstein duality of Wasserstein distance \citep{Villani2008}.
\begin{lem}[KR duality]
\label{lem:KR duality}
For any two distributions $P$ and $Q$, we have
\[
\mathbb{W}(P,Q)=\sup_{f\in\mathrm{1-Lip(\rho)}}\int_\mathcal{X} f dP - \int_\mathcal{X} f dQ,
\]
where the supremum is taken over all $1$-Lipschitz functions in the metric $d$, i.e. $|f(x)-f(x')|\leq d(x,x')$ for any $x,x'\in\mathcal{X}$.
\end{lem}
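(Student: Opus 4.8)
The plan is to prove the identity as two matching inequalities: an easy \emph{weak duality} bound $\sup_f(\int f\,dP-\int f\,dQ)\le \mathbb{W}(P,Q)$, and the harder reverse \emph{strong duality} bound. I would start from the primal definition $\mathbb{W}(P,Q)=\inf_{\gamma\in\Gamma(P,Q)}\int_{\mathcal{X}\times\mathcal{X}} d(x,x')\,d\gamma(x,x')$.

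For the weak-duality direction, fix any $1$-Lipschitz $f$ and any coupling $\gamma\in\Gamma(P,Q)$. Since the marginals of $\gamma$ are $P$ and $Q$,
\[
\int f\,dP-\int f\,dQ=\int_{\mathcal{X}\times\mathcal{X}}(f(x)-f(x'))\,d\gamma(x,x')\le\int_{\mathcal{X}\times\mathcal{X}} d(x,x')\,d\gamma(x,x'),
\]
where the inequality uses $|f(x)-f(x')|\le d(x,x')$. Taking the infimum over $\gamma$ on the right and the supremum over $f$ on the left gives $\sup_f(\int f\,dP-\int f\,dQ)\le\mathbb{W}(P,Q)$.

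The substantive part is the reverse inequality. Here I would first invoke the general Kantorovich duality for the cost $c(x,x')=d(x,x')$,
\[
\inf_{\gamma\in\Gamma(P,Q)}\int c\,d\gamma=\sup_{\phi(x)+\psi(x')\le c(x,x')}\left(\int\phi\,dP+\int\psi\,dQ\right),
\]
the supremum being over integrable pairs $(\phi,\psi)$ obeying the pointwise constraint, and then reduce such a pair to a single $1$-Lipschitz function by the $c$-transform. Concretely, replacing $\psi$ by $\psi(x')=\inf_x(d(x,x')-\phi(x))$ keeps the pair feasible and does not decrease the objective; setting $g(x')=\sup_x(\phi(x)-d(x,x'))=-\psi(x')$, the triangle inequality for $d$ shows $g$ is $1$-Lipschitz, while $g\ge\phi$ pointwise. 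Hence $\int\phi\,dP+\int\psi\,dQ\le\int g\,dP-\int g\,dQ$, so the pair-supremum is dominated by the $1$-Lipschitz supremum; combined with weak duality this yields the claimed equality.

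The main obstacle is establishing the general Kantorovich duality with no duality gap. I would derive it from a minimax argument: relaxing the marginal constraints on $\gamma$ with Lagrange multipliers $\phi,\psi$ rewrites the primal as $\inf_{\gamma\ge0}\sup_{\phi,\psi}\mathcal{L}(\gamma,\phi,\psi)$, where $\mathcal{L}$ is affine in $\gamma$ and in $(\phi,\psi)$; interchanging $\inf$ and $\sup$ recovers the dual. The delicate step is justifying this interchange. On a Polish space with lower-semicontinuous $d$, the coupling set $\Gamma(P,Q)$ is tight and weak-$*$ compact by Prokhorov's theorem, and with the affine structure Sion's minimax theorem (or the Fenchel--Rockafellar theorem, after verifying the continuity/qualification condition at a feasible point) closes the gap. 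Once strong duality is in hand, the remaining $c$-transform bookkeeping is routine.
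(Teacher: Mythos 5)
The paper does not prove this lemma at all: it is imported as a classical fact with a citation to Villani (2008), so there is no in-paper argument to compare against. Your sketch is essentially the standard textbook proof from that reference. The weak-duality half is complete and correct: for any $1$-Lipschitz $f$ and any coupling $\gamma$, writing $\int f\,dP-\int f\,dQ=\int (f(x)-f(x'))\,d\gamma$ and bounding by $\int d\,d\gamma$ is exactly right. The $c$-transform reduction is also sound --- replacing $\psi$ by $\psi(x')=\inf_x(d(x,x')-\phi(x))$ preserves feasibility and improves the objective, $g=-\psi$ is $1$-Lipschitz by the triangle inequality, and $g\ge\phi$ gives the domination by the single-function supremum. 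The one place where your proposal is a plan rather than a proof is the strong Kantorovich duality itself: the minimax interchange is the entire analytic content of the theorem, and invoking Sion or Fenchel--Rockafellar requires verifying a qualification condition (and, on noncompact spaces, handling integrability of the $c$-transform, which needs finite first moments of $P$ and $Q$ for $\mathbb{W}$ to be finite and for $1$-Lipschitz functions to be integrable). Since the paper only uses the easy direction of the duality (upper-bounding a difference of expectations of a Lipschitz function by $\beta\,\mathbb{W}$), your weak-duality argument alone already suffices for every application in the paper; the equality, and hence the hard direction, is never actually needed.
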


To connect total variation with KL divergence , we will use Pinsker's inequality \cite[Theorem~6.5]{polyanskiy2019lecture} and Bretagnolle-Huber inequality \cite[Lemma~2.1]{bretagnolle1979estimation} in this paper, for more discussion about these two inequalities, we refer readers to \cite{canonne2022short}.

\begin{lem}[Pinsker's inequality]
\label{lem:pinsker}
$\mathrm{TV}(P,Q)\leq\sqrt{\frac{1}{2}\mathrm{D_{KL}}(P||Q)}$.
\end{lem}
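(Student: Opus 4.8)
The plan is to derive Pinsker's inequality directly from the subgaussian version of the Donsker--Varadhan bound already established in Lemma~\ref{lem:DV-Subgaussian}, rather than through the classical reduction to a scalar inequality. The starting point is the dual (variational) characterization of the total variation distance: for probability measures $P$ and $Q$,
\[
\mathrm{TV}(P,Q)=\frac12\sup_{\|g\|_\infty\le 1}\left(\mathbb{E}_{P}[g]-\mathbb{E}_{Q}[g]\right),
\]
which follows from $\mathrm{TV}(P,Q)=\sup_E|P(E)-Q(E)|$ by taking $g=\mathrm{sign}(p-q)$ (with $p,q$ densities against a common dominating measure) as the extremizer.

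First I would fix an arbitrary test function $g$ with $\|g\|_\infty\le 1$ and observe that, under $Q$, the random variable $g$ takes values in $[-1,1]$; by Hoeffding's lemma it is therefore $1$-subgaussian under $Q$. Next I would apply Lemma~\ref{lem:DV-Subgaussian} with the roles of the two measures chosen so that $P$ plays the part of the ``first'' distribution and $Q$ the ``second'', and with $R=1$, giving
\[
\left|\mathbb{E}_{P}[g]-\mathbb{E}_{Q}[g]\right|\le \sqrt{2\,\mathrm{D_{KL}}(P\|Q)}.
\]
Taking the supremum over all such $g$ and multiplying by $\tfrac12$ then yields $\mathrm{TV}(P,Q)\le \tfrac12\sqrt{2\,\mathrm{D_{KL}}(P\|Q)}=\sqrt{\tfrac12\,\mathrm{D_{KL}}(P\|Q)}$, which is the claim. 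The only bookkeeping to watch is the subgaussian constant: the bound is $1$-subgaussian because $g$ ranges over an interval of length $2$, and this is exactly what produces the factor $\tfrac12$ after combining with the $\tfrac12$ in the variational formula.

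The main obstacle is not analytic but structural: one must ensure the orientation of the KL divergence matches the statement, since Lemma~\ref{lem:DV-Subgaussian} outputs $\mathrm{D_{KL}}(Q\|P)$ with its own first/second convention and requires subgaussianity under the \emph{second} measure. I would handle this by relabelling so that the test function's subgaussianity is imposed under $Q$ (the divergence's second argument), which is automatic here since every $g$ with $\|g\|_\infty\le1$ is bounded and hence subgaussian under any measure. An alternative, entirely self-contained route avoids Lemma~\ref{lem:DV-Subgaussian}: partition the space by $E=\{p\ge q\}$, use the data-processing inequality for KL divergence to reduce to the two-point (Bernoulli) case with masses $p=P(E)$, $q=Q(E)$, and then establish the scalar inequality $(p-q)^2\le \tfrac12\,d(p\|q)$ by showing $\phi(q)=d(p\|q)-2(p-q)^2$ satisfies $\phi(p)=0$ and $\phi'(q)=(q-p)\bigl(\tfrac{1}{q(1-q)}-4\bigr)$, which has the sign of $q-p$ because $q(1-q)\le\tfrac14$; this calculus step is the crux of that alternative.
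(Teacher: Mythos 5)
Your proof is correct. Note first that the paper does not actually prove Lemma~\ref{lem:pinsker} at all: it is stated as a known result with a citation to \citet[Theorem~6.5]{polyanskiy2019lecture}, so any derivation you give is necessarily "different from the paper's." That said, your route is a nice one precisely because it is internal to the paper's own toolkit: you combine the dual form $\mathrm{TV}(P,Q)=\tfrac12\sup_{\|g\|_\infty\le1}(\mathbb{E}_P[g]-\mathbb{E}_Q[g])$ with Lemma~\ref{lem:DV-Subgaussian}, and the constants check out --- a $[-1,1]$-valued function is $1$-subgaussian under any measure by Hoeffding's lemma (interval length $2$, so $R=(b-a)/2=1$ in the paper's convention), giving $|\mathbb{E}_P[g]-\mathbb{E}_Q[g]|\le\sqrt{2\,\mathrm{D_{KL}}(P\|Q)}$ and hence $\mathrm{TV}(P,Q)\le\tfrac12\sqrt{2\,\mathrm{D_{KL}}(P\|Q)}=\sqrt{\tfrac12\,\mathrm{D_{KL}}(P\|Q)}$. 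You also correctly flag the one place this could go wrong, namely the orientation of the KL: Lemma~\ref{lem:DV-Subgaussian} requires subgaussianity under the second argument of the divergence, and your relabelling is legitimate because boundedness gives subgaussianity under either measure, so there is no circularity and no hidden hypothesis. Your alternative route (data-processing reduction to the Bernoulli case plus the scalar inequality $(p-q)^2\le\tfrac12 d(p\|q)$) is the classical textbook proof and is also sound. What the Donsker--Varadhan route buys is uniformity with the rest of the paper --- it shows Pinsker is literally the $\|g\|_\infty\le1$ specialization of the same variational machinery that drives Theorems~\ref{thm-bound-subgaussian} and~\ref{cor:kl-expected-disagreement} --- while the classical route is more elementary and self-contained.
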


\begin{lem}[Bretagnolle-Huber inequality]
\label{lem:BH}
$\mathrm{TV}(P,Q)\leq\sqrt{1-e^{-\mathrm{D_{KL}}(P||Q)}}$.
\end{lem}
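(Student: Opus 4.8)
The plan is to route the comparison between $\mathrm{TV}(P,Q)$ and $\mathrm{D_{KL}}(P||Q)$ through an intermediate quantity, the Bhattacharyya affinity $\rho(P,Q)\triangleq\int\sqrt{pq}\,d\lambda$, where $p$ and $q$ denote densities of $P$ and $Q$ with respect to a common dominating measure $\lambda$ (which always exists, e.g.\ $\lambda=P+Q$). I would prove two one-sided bounds---one relating $\mathrm{TV}$ to $\rho$, the other relating $\rho$ to $\mathrm{D_{KL}}$---and then chain them. If $P\not\ll Q$ then $\mathrm{D_{KL}}(P||Q)=\infty$ and the inequality is vacuous, so I may assume absolute continuity throughout.

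First I would express the total variation as $\mathrm{TV}(P,Q)=1-\int\min(p,q)\,d\lambda$, from which $\int\min(p,q)\,d\lambda=1-\mathrm{TV}(P,Q)$ and $\int\max(p,q)\,d\lambda=1+\mathrm{TV}(P,Q)$ follow using $\min+\max=p+q$ and $\max-\min=|p-q|$. Applying the Cauchy--Schwarz inequality to the factorization $\sqrt{pq}=\sqrt{\min(p,q)}\sqrt{\max(p,q)}$ then gives
\[
\rho(P,Q)\le\sqrt{\int\min(p,q)\,d\lambda}\cdot\sqrt{\int\max(p,q)\,d\lambda}=\sqrt{1-\mathrm{TV}(P,Q)^2},
\]
which rearranges to $\mathrm{TV}(P,Q)^2\le 1-\rho(P,Q)^2$.

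Second, I would lower-bound the affinity by writing $\rho(P,Q)=\mathbb{E}_{P}\!\left[\sqrt{q/p}\right]=\mathbb{E}_{P}\!\left[\exp\!\big(\tfrac12\log(q/p)\big)\right]$ and invoking Jensen's inequality for the convex function $\exp$, which yields $\rho(P,Q)\ge\exp\!\big(-\tfrac12\mathrm{D_{KL}}(P||Q)\big)$ and hence $\rho(P,Q)^2\ge e^{-\mathrm{D_{KL}}(P||Q)}$. Combining the two estimates produces $\mathrm{TV}(P,Q)^2\le 1-\rho(P,Q)^2\le 1-e^{-\mathrm{D_{KL}}(P||Q)}$, and taking square roots gives the claim.

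The individual computations are routine; the only genuinely nontrivial step is recognizing that the affinity $\rho$ is the right bridge and that Cauchy--Schwarz applied to the $\min$/$\max$ factorization of $\sqrt{pq}$ produces exactly the factor $\sqrt{1-\mathrm{TV}^2}$. Everything else is a direct application of Jensen's inequality and elementary algebra.
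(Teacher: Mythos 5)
Your proof is correct. The paper does not actually prove this lemma---it imports it verbatim from \citet[Lemma~2.1]{bretagnolle1979estimation} and points to \citet{canonne2022short} for discussion---so there is no in-paper argument to compare against. The route you take, bounding $\mathrm{TV}$ and $\mathrm{D_{KL}}$ on either side of the Bhattacharyya (Hellinger) affinity $\rho(P,Q)=\int\sqrt{pq}\,d\lambda$ via the Cauchy--Schwarz step $\rho\le\sqrt{(1-\mathrm{TV})(1+\mathrm{TV})}$ and the Jensen step $\rho\ge e^{-\mathrm{D_{KL}}(P\|Q)/2}$, is exactly the standard modern proof given in those references, and your handling of the edge case $P\not\ll Q$ (where the bound is vacuous) is the right disposal of the only measure-theoretic subtlety. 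In short: a correct, self-contained proof of a statement the paper only cites.
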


Below is the variational formula (or golden formula) of mutual information.
\begin{lem}[{\citet[Corollary~3.1.]{polyanskiy2019lecture}}]
\label{lem:mi-center-gravity}
For two random variables $X$ and $Y$, we have
\[
I(X;Y) = \inf_{P} \ex{X}{\mathrm{D_{KL}}(Q_{Y|X}||P)},
\]
where the infimum is achieved at $P=Q_Y$.
\end{lem}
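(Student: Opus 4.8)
The plan is to establish an exact additive decomposition that isolates all dependence on the free argument $P$, after which the claim follows immediately from the nonnegativity of KL divergence (Gibbs' inequality). Concretely, I would first prove that for \emph{every} candidate distribution $P$ on the range of $Y$,
\[
\ex{X}{\mathrm{D_{KL}}(Q_{Y|X}||P)} = I(X;Y) + \mathrm{D_{KL}}(Q_Y||P),
\]
where $Q_Y$ denotes the marginal of $Y$ induced by the joint $Q_{XY}$. This single identity does all the work.

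To derive it, I would write the left-hand side as a nested expectation $\ex{X}{\ex{Y|X}{\log (Q_{Y|X}/P)}}$ and split the logarithm by inserting the marginal $Q_Y$, namely $\log(Q_{Y|X}/P) = \log(Q_{Y|X}/Q_Y) + \log(Q_Y/P)$. After taking the expectation over $(X,Y)\sim Q_{XY}$, the first summand is exactly $\ex{X}{\mathrm{D_{KL}}(Q_{Y|X}||Q_Y)} = I(X;Y)$, which is the definition of mutual information as the averaged conditional-to-marginal KL divergence. The second summand depends on $Y$ alone, so marginalizing out $X$ collapses it to $\ex{Y}{\log(Q_Y/P)} = \mathrm{D_{KL}}(Q_Y||P)$.

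With the decomposition in hand, I would conclude by observing that $I(X;Y)$ does not depend on $P$ while $\mathrm{D_{KL}}(Q_Y||P) \ge 0$, with equality precisely when $P = Q_Y$. Taking the infimum over $P$ therefore gives $\inf_P \ex{X}{\mathrm{D_{KL}}(Q_{Y|X}||P)} = I(X;Y)$, attained at $P = Q_Y$, which is the statement. There is no deep obstacle here, since the argument is an exact equality followed by the sign of a KL term; the only point requiring care is the measure-theoretic justification of the log-split, i.e.\ the absolute-continuity bookkeeping, which is handled by declaring the objective to be $+\infty$ whenever the relevant supports fail to nest so that the infimum is attained on the admissible set.
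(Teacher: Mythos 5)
Your proof is correct: the exact decomposition $\ex{X}{\mathrm{D_{KL}}(Q_{Y|X}||P)} = I(X;Y) + \mathrm{D_{KL}}(Q_Y||P)$ followed by nonnegativity of the second term is precisely the standard argument for this ``golden formula,'' and your handling of the absolute-continuity edge case (declaring the objective $+\infty$ when supports fail to nest) is the right bookkeeping. The paper itself states this lemma without proof, citing Corollary~3.1 of Polyanskiy's lecture notes, and your derivation matches the argument given there, so there is nothing to reconcile.
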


\begin{figure}
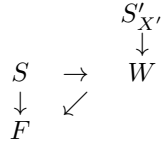

\begin{center}
{
    \begin{tabular}{clc}
          &  & $S'_{X'}$  \\
          & &$\downarrow$ \\
         $S$ &  $\rightarrow$ & $W$  \\
           $\downarrow$& $\swarrow$&  \\
         $F$&  & \\
    \end{tabular}
}
\end{center}
\caption{The relationship between random variables in UDA, where $F=R_{\mu'}(W)-R_S(W)$.}
    \label{fig:Markov}
\end{figure}

\section{Omitted Proofs and Additional Results in Section~\ref{sec:unexpected-bound}}

\subsection{Proof of Theorem~\ref{thm-bound-subgaussian}}
\begin{proof}
    Let $Q=\mu'$, $P=\mu$ and $g=\ell$, then Theorem \ref{thm-bound-subgaussian} comes directly from Lemma~\ref{lem:DV-Subgaussian}.
\end{proof}

\subsection{Proof of Corollary~\ref{cor-bound-symmetric-kl}}
\begin{proof}
    As discussed in Remark~\ref{rem:boundedness}, when the loss is bounded in $[0,M]$, it is guaranteed to be $\frac{M}{2}$-subgaussian for any $w\in\mathcal{W}$. Then, 
    similar to the proof of Theorem~\ref{thm-bound-subgaussian}, let $Q=\mu$, $P=\mu'$ and $g=\ell$ and $R=\frac{M}{2}$, then the following bound holds by Lemma~\ref{lem:DV-Subgaussian},
    \[
    \left|\widetilde{\mathrm{Err}}(w)\right| \leq \sqrt{\frac{M^2}{2}\mathrm{D_{KL}(\mu||\mu')}}.
    \]
    Then, by $\min\{A,B\}\leq\frac{1}{2}(A+B)$, the remaining part is straightforward,
    \[
     \left|\widetilde{\mathrm{Err}}(w)\right| \leq \frac{M}{\sqrt{2}}\sqrt{\min\{\mathrm{D_{KL}}(\mu||\mu'),\mathrm{D_{KL}}(\mu'||\mu)\}} \leq \frac{M}{2}\sqrt{{\mathrm{D_{KL}}(\mu||\mu')+\mathrm{D_{KL}}(\mu'||\mu)}}.
    \]
    This completes the proof.
\end{proof}

\subsection{Proof of Theorem \ref{cor-bound-subgaussian-triangle}}
\begin{proof}
        Let $w^*=\arg\min_{w\in\mathcal{W}}\ex{Z'}{\ell(f_w(X'),Y')}+\ex{Z}{\ell(f_w(X),Y)}$. By Lemma \ref{lem:DV-KL},
        \begin{align*}
             \mathrm{D_{KL}}(P_{X'}||P_{X})&\geq\sup_{t\in\mathbb{R},w\in\mathcal{W}}\ex{X'}{t\ell(f_w(X'),f_{w^*}(X'))}-\log{\ex{X}{e^{t\ell(f_w(X),f_{w^*}(X))}}}.
        \end{align*}
       Recall that $\ell(f_{w'}(X),f_w(X))$ is $R$-subgaussian, by using Lemma~\ref{lem:DV-Subgaussian} (let $Q=P_{X'}$, $P=P_{X}$ and $g(\cdot)=\ell(f_{w'}(\cdot),f_{w}(\cdot))$), we have
       \begin{align}
           \left|\ex{X'}{\ell(f_{w}(X'),f_{w^*}(X'))}-\ex{X}{\ell(f_{w}(X),f_{w^*}(X))}\right|\leq\sqrt{2R^2\mathrm{D_{KL}}(P_{X'}||P_{X})}.
           \label{ineq:dis-optimal}
       \end{align}
       
       For any $f_w\in\mathcal{F}$, by the symmetric and triangle property of the loss, we have
       \begin{align}
           &\ex{Z'}{\ell(f_{w}(X'),Y')}\notag\\
           \leq&\ex{X'}{\ell(f_{w}(X'),f_{w^*}(X'))}+\ex{Z'}{\ell(f_{w^*}(X'),Y')}\notag\\
           \leq&\ex{X}{\ell(f_{w}(X),f_{w^*}(X))}+\sqrt{2R^2\mathrm{D_{KL}}(P_{X'}||P_{X})}+\ex{Z'}{\ell(f_{w^*}(X'),Y')}\label{ineq:dis-optimal-2}\\
           =&\int_x \ell(f_{w}(x),f_{w^*}(x)) dP_X(x)+\sqrt{2R^2\mathrm{D_{KL}}(P_{X'}||P_{X})}+\ex{Z'}{\ell(f_{w^*}(X'),Y')}\notag\\
           =&\int_x\int_y \ell(f_{w}(x),f_{w^*}(x))dP_{Y|X=x}(y) dP_X(x)+\sqrt{2R^2\mathrm{D_{KL}}(P_{X'}||P_{X})}+\ex{Z'}{\ell(f_{w^*}(X'),Y')}\notag\\
           \leq&\int_x\int_y \ell(f_{w}(x),y)+\ell(y,f_{w^*}(x)) dP_{Y|X=x}(y) dP_X(x)+\sqrt{2R^2\mathrm{D_{KL}}(P_{X'}||P_{X})}+\ex{Z'}{\ell(f_{w^*}(X'),Y')}\label{ineq:triangle-1}\\
           =&\ex{Z}{\ell(f_{w}(X),Y)}+\ex{Z}{\ell(Y,f_{w^*}(X))}+\sqrt{2R^2\mathrm{D_{KL}}(P_{X'}||P_{X})}+\ex{Z'}{\ell(f_{w^*}(X'),Y')},\notag
       \end{align}
       where Eq. (\ref{ineq:dis-optimal-2}) is by Eq. (\ref{ineq:dis-optimal}) and Eq. (\ref{ineq:triangle-1}) is again by the triangle property of the loss function.
       
       Thus, $\widetilde{\mathrm{Err}}(w) \leq \sqrt{2R^2\mathrm{D_{KL}(P_{X'}||P_{X})}}+\lambda^*$, which completes the proof.
        \end{proof}
        
\subsection{Proof of Theorem~\ref{cor:kl-expected-disagreement}}
\begin{proof} By Lemma~\ref{lem:DV-KL},
    \begin{align*}
             \mathrm{D_{KL}}(P_{X'}||P_{X})&\geq\sup_{t\in\mathbb{R},w,w'\in\mathcal{W}^2}\ex{X'}{t\ell(f_{w}(X'),f_{w'}(X'))}-\log{\ex{X}{e^{t\ell(f_{w}(X),f_{w'}(X))}}}\\
             &\geq\sup_{t\in\mathbb{R}}\ex{W,W'}{\ex{X'}{t\ell(f_{W}(X'),f_{W'}(X'))}-\log{\ex{X}{e^{t\ell(f_{W}(X),f_{W'}(X))}}}}\\
             &\geq\sup_{t\in\mathbb{R}}{\ex{W,W'}{\ex{X'}{t\ell(f_{W}(X'),f_{W'}(X'))}}-\log{\ex{W,W'}{\ex{X}{e^{t\ell(f_{W}(X),f_{W'}(X))}}}}},
        \end{align*}
        where the last inequality is by applying Jensen's inequality to the logarithm function, which is concave.
        
      By the subgaussian assumption,
       \[
       \left|\ex{W,W'}{\ex{X'}{\ell(f_W(X'),f_{W'}(X'))}}-\ex{W,W'}{\ex{X}{\ell(f_W(X),f_{W'}(X))}}\right|\leq\sqrt{2R^2\mathrm{D_{KL}}(P_{X'}||P_{X})}.
       \]
       This concludes the proof.
    \end{proof}

\subsection{Proof of Theorem~\ref{thm-bound-lipschitz}}
\begin{proof}
From the definition, we have
    \begin{align*}
        \left|\widetilde{\mathrm{Err}}(w)\right|=&\left|\ex{Z'}{\ell(f_w(X'),Y')}-\ex{Z}{\ell(f_w(X),Y)}\right|\\
        \leq& \beta\mathbb{W}(\mu,\mu').
    \end{align*}
where the last inequality is by the KR duality of Wasserstein distance (see Lemma~\ref{lem:KR duality}).
    \end{proof}

\subsection{Proof of Corollary~\ref{cor-bound-total-variation-bounded}}
\begin{proof}
    When $d$ is the discrete metric, Wasserstein distance is equal to the total variation, then by Theorem~\ref{thm-bound-lipschitz},
    \[
    \left|\widetilde{\mathrm{Err}}(w)\right| \leq \beta\mathrm{TV}(\mu',\mu),
    \]
    The remaining part is by using Lemma~\ref{lem:pinsker} and Lemma~\ref{lem:BH}:
    \[
    \beta\mathrm{TV}(\mu',\mu)\leq \beta\sqrt{\min\left\{\frac{1}{2}\mathrm{D_{KL}}(\mu'||\mu),1-e^{-\mathrm{D_{KL}}(\mu'||\mu)}\right\}}.
    \]
    
    Then, if $\ell$ is bounded by $M$, we can replace $\beta$ by $M$ above, which completes the proof.
\end{proof}
    
\subsection{Proof of Theorem~\ref{thm-bound-lipshitz-triangle}}
\begin{proof}
    Let $w^*=\arg\min_{w\in\mathcal{W}}\ex{Z'}{\ell(f_w(X'),Y')}+\ex{Z}{\ell(f_w(X),Y)}$. 
    
    If $\ell(f_w(X),f_{w'}(X))$ is $\beta$-Lipschitz in $\mathcal{X}$ for any $w,w'\in \mathcal{W}$, then similar to Theorem~\ref{thm-bound-lipschitz}, it's easy to show that
    \begin{align}
    \label{ineq:dis-optimal-3}
        \ex{X'}{\ell(f_w(X'),f^*(X'))}-\ex{X}{\ell(f_w(X),f^*(X))}\leq \beta \mathbb{W}(P_X',P_X)
    \end{align}
       
       For any $f_w\in\mathcal{F}$, by the symmetric and triangle property of the loss, we have
       \begin{align}
           &\ex{Z'}{\ell(f_{w}(X'),Y')}\notag\\
           \leq&\ex{X'}{\ell(f_{w}(X'),f_{w^*}(X'))}+\ex{Z'}{\ell(f_{w^*}(X'),Y')}\notag\\
           \leq&\ex{X}{\ell(f_{w}(X),f_{w^*}(X))}+\beta \mathbb{W}(P_X',P_X)+\ex{Z'}{\ell(f_{w^*}(X'),Y')}\label{ineq:dis-optimal-4}\\
           \leq&\ex{Z}{\ell(f_{w}(X),Y)}+\ex{Z}{\ell(Y,f_{w^*}(X))}+\beta \mathbb{W}(P_X',P_X)+\ex{Z'}{\ell(f_{w^*}(X'),Y')},\notag
       \end{align}
       where Eq. (\ref{ineq:dis-optimal-4}) is by Eq. (\ref{ineq:dis-optimal-3}) and the last inequality is again by the triangle property of the loss function.
       This completes the proof.
\end{proof}

\subsection{Additional Results: Sample Complexity Bounds}
\label{sec:sample-complexity}
One of the main ingredients to derive our sample complexity bound is the following lemma, where a concentration bound for a class of unbounded functions is given.
\begin{lem}[{\citet[Corollary~9]{cortes2019relative}}]
\label{lem:pseudo-unbounded-bound}
Let $\kappa>2$ and $\mathcal{G}=\{g:\mathcal{Z}\rightarrow\mathbb{R} \text{ s.t. } \ex{\mu}{e^{g(Z)}}< +\infty\}$. Assume $\ex{\mu}{g(Z)^\kappa}< +\infty$ for all $g\in \mathcal{G}$. Let $\hat{\mu}$ be the empirical distributions consist of $n$ data points sampled i.i.d. from $\mu$. If $\mathcal{G}$ has the finite pseudo-dimension $d$, then for $\forall \delta\in(0,1)$, the following
inequality holds for all $g\in \mathcal{G}$ with probability at least $1-\delta$,
\begin{align*}
    \ex{\mu}{g(Z)} \leq  \ex{\hat{\mu}}{g(Z)} + 2\Lambda(\kappa)\sqrt[\kappa]{\ex{\mu}{g(Z)^\kappa}}\sqrt{\frac{1}{n}\left(d\log{\frac{2en}{d}}+\log{\frac{4}{\delta}}\right)},
\end{align*}
where $\Lambda(\kappa)=\left(\frac{1}{2}\right)^\frac{2}{\kappa}\left(\frac{\kappa}{\kappa-2}\right)^\frac{\kappa-1}{\kappa}$.
\end{lem}

Below is another useful lemma for the bounded case, which comes from \citet[Theorem~11.8]{mohri2018foundations}  with a slight modification (by invoking a different VC-dimension based generalization bound from \cite{SLT98Vapnik}).
\begin{lem}
\label{lem:pseudo-bounded-bound}
Let $\mathcal{F}=\{f:\mathcal{Z}\rightarrow\mathbb{R}^+\}$. Assume $\ex{\mu}{f(Z)}< M$ for all $f\in\mathcal{F}$ for some constant $M>0$. Let $\hat{\mu}$ be the empirical distributions consist of $n$ data points sampled i.i.d. from $\mu$. If $\mathcal{F}$ has the finite pseudo-dimension $d$, then for $\forall \delta\in(0,1)$, the following
inequality holds for all $f\in \mathcal{F}$ with probability at least $1-\delta$,
\begin{align*}
    \ex{\mu}{f(Z)} \leq  \ex{\hat{\mu}}{f(Z)} + 2M\sqrt{\frac{1}{n}\left(d\log{\frac{2en}{d}}+\log{\frac{4}{\delta}}\right)}.
\end{align*}
\end{lem}

 We are now in a position to state our sample complexity bound.
\begin{thm}
\label{thm:empirical-kl}
Let $\hat{\mu}$ and $\hat{\mu}'$ be the empirical distributions consist of $n$ source data and $m$ target data sampled i.i.d. from $\mu$ and $\mu'$, respectively. Let $\mathcal{G}=\{g:\mathcal{Z}\rightarrow\mathbb{R} \text{ s.t. } \ex{\mu}{e^{g(Z)}}< \infty\}$ with finite pseudo-dimension $d_1$, and let the pseudo-dimension of $\{\exp{}\circ g | g\in\mathcal{G}\}$ be $d_2$. Let $\kappa>2$ and assume that $\ex{\mu}{g(Z)^\kappa}< +\infty$ for all $g\in \mathcal{G}$. Assume there exists a constant $\alpha\le \min_{g\in\mathcal{G}}\{\ex{\hat{\mu}}{e^{g(Z)}},\ex{\mu}{e^{g(Z)}}\}$. Then for $\forall \delta\in(0,1)$ the following bound holds with probability at least $1-\delta$,
\[
\mathrm{D_{KL}}(\mu'||\mu)-\mathrm{D_{KL}}(\hat{\mu}'||\hat{\mu})\leq C_1(\kappa)\sqrt{\frac{1}{n}\left(d_1\log{\frac{2en}{d_1}}+\log{\frac{4}{\delta}}\right)}+C_2(\alpha) \sqrt{\frac{1}{m}\left(d_2\log{\frac{2em}{d_2}}+\log{\frac{4}{\delta}}\right)},
\]
where $C_1(\kappa)=\left(\frac{1}{2}\right)^\frac{2-\kappa}{\kappa}\left(\frac{\kappa}{\kappa-2}\right)^\frac{\kappa-1}{\kappa}\sup_{g\in\mathcal{G}}\sqrt[\kappa]{\ex{\mu}{g(Z)^\kappa}}$ and $C_2(\alpha)=\frac{2}{\alpha}\sup_{g\in\mathcal{G}}\ex{\mu}{e^{g(Z)}}$.
\end{thm}
\begin{proof}
    Recall Lemma~\ref{lem:DV-KL}, we have
    \[
    \mathrm{D_{KL}}(\mu'||\mu)=\sup_{g\in\mathcal{G}} \ex{\mu'}{g(Z')} - \log{\ex{\mu}{e^{g(Z)}}},
    \]
    and
    \[
    \mathrm{D_{KL}}(\hat{\mu}'||\hat{\mu})=\sup_{g\in\mathcal{G}} \ex{\hat{\mu}'}{g(Z')} - \log{\ex{\hat{\mu}}{e^{g(Z)}}}.
    \]
    
    Then, with the probability at least $1-\delta$,
    \begin{align}
        &\mathrm{D_{KL}}(\mu'||\mu)-\mathrm{D_{KL}}(\hat{\mu}'||\hat{\mu})\notag\\
        =&\sup_{g\in\mathcal{G}} \ex{\mu'}{g(Z')} - \log{\ex{{\mu}}{e^{g(Z)}}} - \left(\sup_{g\in\mathcal{G}} \ex{\hat{\mu}'}{g(Z')} - \log{\ex{\hat{\mu}}{e^{g(Z)}}}\right)\notag\\
        \leq& \sup_{g\in\mathcal{G}} \ex{\mu'}{g(Z')} - \log{\ex{\mu}{e^{g(Z)}}} - \left(\ex{\hat{\mu}'}{g(Z')} - \log{\ex{\hat{\mu}}{e^{g(Z)}}}\right)\notag\\
        =& \sup_{g\in\mathcal{G}} \ex{\mu'}{g(Z')}- \ex{\hat{\mu}'}{g(Z')}+ \log{\ex{\hat{\mu}}{e^{g(Z)}}} - \log{\ex{\mu}{e^{g(Z)}}}  \notag\\
        \leq& \sup_{g\in\mathcal{G}} \left|\ex{\mu'}{g(Z')}- \ex{\hat{\mu}'}{g(Z')}\right|+ \sup_{g\in\mathcal{G}}\left|\log{\ex{\hat{\mu}}{e^{g(Z)}}} - \log{\ex{\mu}{e^{g(Z)}}}\right|  \notag\\
        \leq& \sup_{g\in\mathcal{G}} \left|\ex{\mu'}{g(Z')}- \ex{\hat{\mu}'}{g(Z')}\right|+ \sup_{g\in\mathcal{G}}\frac{1}{\alpha}\left|\ex{\hat{\mu}}{e^{g(Z)}} - \ex{\mu}{e^{g(Z)}}\right|\label{ineq:lipschiz-kl}\\
        \leq& C_1(\kappa)\sqrt{\frac{1}{n}\left(d_1\log{\frac{2en}{d_1}}+\log{\frac{4}{\delta}}\right)}+C_2(\alpha) \sqrt{\frac{1}{m}\left(d_2\log{\frac{2em}{d_2}}+\log{\frac{4}{\delta}}\right)},\label{ineq:vc-dim-bound}
    \end{align}
    where Eq. (\ref{ineq:lipschiz-kl}) is derived below. 
    
    W.L.O.G. assume that $\ex{\hat{\mu}}{e^{g(Z)}}\leq\ex{\mu}{e^{g(Z)}}$ (and Eq.~(\ref{ineq:lipschiz-kl}) still holds when $\ex{\hat{\mu}}{e^{g(Z)}}\geq\ex{\mu}{e^{g(Z)}}$), then
    \begin{align*}
        \left|\log \ex{\hat{\mu}}{e^{g(Z)}} - \log \ex{\mu}{e^{g(Z)}}\right| = \left|\log\frac{\ex{\mu}{e^{g(Z)}}}{\ex{\hat{\mu}}{e^{g(Z)}}}\right| =& \left|\log\left(1+\frac{\ex{\mu}{e^{g(Z)}}}{\ex{\hat{\mu}}{e^{g(Z)}}}-1\right)\right|\\
\leq& \left|\frac{\ex{\mu}{e^{g(Z)}}}{\ex{\hat{\mu}}{e^{g(Z)}}}-1\right| \\
=& \left|\frac{1}{\ex{\hat{\mu}}{e^{g(Z)}}}\left(\ex{\mu}{e^{g(Z)}}-\ex{\hat{\mu}}{e^{g(Z)}}\right)\right|\\ 
 \leq& \frac{1}{\alpha}\left|\ex{\mu}{e^{g(Z)}}-\ex{\hat{\mu}}{e^{g(Z)}}\right|.
    \end{align*}
 Eq.~(\ref{ineq:vc-dim-bound}) is by Lemma~\ref{lem:pseudo-unbounded-bound} and Lemma~\ref{lem:pseudo-bounded-bound}. This concludes the proof.
\end{proof}

With Theorem~\ref{thm:empirical-kl} and Theorem~\ref{thm-bound-subgaussian}, we immediately have the following corollary.
\begin{cor}
\label{cor:kl-vc-bound}
Let the conditions in Theorem~\ref{thm:empirical-kl} and Theorem~\ref{thm-bound-subgaussian} hold, then for any $w\in\mathcal{W}$,
             \[
    \left|\widetilde{\mathrm{Err}}(w)\right| \leq \sqrt{2}R\sqrt{\mathrm{D_{KL}}(\hat{\mu}'||\hat{\mu})+C_1(\kappa)\sqrt{\frac{1}{n}\left(d_1\log{\frac{2en}{d_1}}+\log{\frac{4}{\delta}}\right)}+C_2(\alpha) \sqrt{\frac{1}{m}\left(d_2\log{\frac{2em}{d_2}}+\log{\frac{4}{\delta}}\right)}}
    ,\]
    where $C_1(\kappa)$ and $C_2(\alpha)$ are the same as in Theorem~\ref{thm:empirical-kl}.
\end{cor}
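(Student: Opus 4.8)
The plan is to simply chain the two results cited in the hypotheses, since \ref{cor:kl-vc-bound} is a direct composition of a deterministic bound with a high-probability estimation bound. First I would invoke Theorem~\ref{thm-bound-subgaussian}, which under the subgaussianity assumption gives, for every $w\in\mathcal{W}$, the deterministic inequality $\left|\widetilde{\mathrm{Err}}(w)\right| \le R\sqrt{2\,\mathrm{D_{KL}}(\mu'||\mu)}$. The only population quantity appearing here is $\mathrm{D_{KL}}(\mu'||\mu)$, which is unknown to the learner; the remaining work is to replace it by the empirical divergence $\mathrm{D_{KL}}(\hat{\mu}'||\hat{\mu})$ at the cost of an estimation error.

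Next I would apply Theorem~\ref{thm:empirical-kl}, which states that with probability at least $1-\delta$ over the draw of the $n$ source and $m$ target samples,
\[
\mathrm{D_{KL}}(\mu'||\mu)\le \mathrm{D_{KL}}(\hat{\mu}'||\hat{\mu})+\sqrt{\tfrac{4}{n}\left(d_1\log\tfrac{2en}{d_1}+\log\tfrac{4}{\delta}\right)}+\tfrac{1}{\alpha}\sqrt{\tfrac{4}{m}\left(d_2\log\tfrac{2em}{d_2}+\log\tfrac{4}{\delta}\right)}.
\]
Substituting this upper bound for $\mathrm{D_{KL}}(\mu'||\mu)$ into the expression from Theorem~\ref{thm-bound-subgaussian} and using that the square-root function is monotonically increasing, the two deviation terms are multiplied by the factor $2$ coming from the $2R^2$ inside the root, and distributing $R$ outside yields exactly the claimed expression.

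The single point that deserves care, rather than a genuine obstacle, is bookkeeping of the probabilistic scope: Theorem~\ref{thm-bound-subgaussian} holds deterministically for every $w$, whereas Theorem~\ref{thm:empirical-kl} holds only on an event of probability at least $1-\delta$; since that event does not depend on $w$, the final bound is uniform over $w\in\mathcal{W}$ and holds with probability at least $1-\delta$. I would also note that monotonicity of $\sqrt{\cdot}$ is what legitimizes inserting the upper bound inside the root without reversing the inequality, and that the constant $\alpha$ and the VC-dimensions $d_1,d_2$ are inherited verbatim from the hypotheses of Theorem~\ref{thm:empirical-kl}. No new estimates are required, so the argument is essentially a one-line composition followed by the algebraic simplification of the constants.
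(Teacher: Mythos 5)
Your proposal is correct and matches the paper's approach exactly: the paper offers no separate proof, stating only that the corollary follows "immediately" from Theorem~\ref{thm:empirical-kl} and Theorem~\ref{thm-bound-subgaussian}, which is precisely the chaining and monotone substitution you carry out. Your remark that the resulting bound holds only on the $1-\delta$ probability event (a qualifier the corollary statement leaves implicit) is a correct and worthwhile observation.
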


\subsection{Additional Discussions on the Convergence of Empirical KL Divergence}
Characterizing the convergence of the empirical KL divergence to the real KL is a challenging task that often requires several additional assumptions, as demonstrated in Theorem~\ref{thm:empirical-kl}. However, it is worth noting that the convergence rate of the empirical distribution to the real distribution in the KL sense is already established in the discrete space. This fact is supported by a classic result in \cite[Theorem~11.2.1]{thomas2006elements}, which we state in the following theorem:
\begin{thm}
\label{thm:convergence-rate-kl}
Let $\hat{\mu}$ and $\hat{\mu}'$ be defined as in Theorem~\ref{thm:empirical-kl}. Assume the space of $\mathcal{Z}$ is finite (i.e. $|\mathcal{Z}|\leq\infty$), then for $\forall \delta\in(0,1)$, with  probability at least $1-\delta$,
\[
\mathrm{D_{KL}}(\hat{\mu}||\mu)\leq \frac{|\mathcal{Z}|}{n}\log{(n+1)}+\frac{1}{n}\log{\frac{1}{\delta}},\qquad  \mathrm{D_{KL}}(\hat{\mu}'||\mu')\leq \frac{|\mathcal{Z}|}{m}\log{(m+1)}+\frac{1}{m}\log{\frac{1}{\delta}}.
\]
\end{thm}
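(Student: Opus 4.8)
The plan is to prove this via the method of types, which is exactly the machinery underlying \cite[Theorem~11.2.1]{thomas2006elements}. Since $\mathcal{Z}$ is finite, I would first observe that the empirical distribution $\hat{\mu}$ is precisely the \emph{type} $P_{Z^n}$ of the i.i.d.\ sample $Z^n=(Z_1,\dots,Z_n)\sim\mu^{\otimes n}$, i.e.\ a probability vector on $\mathcal{Z}$ whose entries are integer multiples of $1/n$. The whole argument then reduces to a tail bound on $\mathrm{D_{KL}}(\hat{\mu}||\mu)$, obtained by union-bounding over the (relatively few) types that are far from $\mu$ in KL divergence.

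Two classical facts about types do all the work. First, the number of distinct types with denominator $n$ over $\mathcal{Z}$ is at most $(n+1)^{|\mathcal{Z}|}$. Second, for any type $P$ with type class $T(P)=\{z^n : P_{z^n}=P\}$, every sequence $z^n\in T(P)$ satisfies $\mu^{\otimes n}(z^n)=\prod_{z}\mu(z)^{nP(z)}=e^{-n(H(P)+\mathrm{D_{KL}}(P||\mu))}$, while $|T(P)|\le e^{nH(P)}$; multiplying these gives the type-class probability estimate $\mu^{\otimes n}(T(P))\le e^{-n\mathrm{D_{KL}}(P||\mu)}$. Combining the two facts yields the tail bound
\[
\Pr\!\left[\mathrm{D_{KL}}(\hat{\mu}||\mu)\ge\epsilon\right]
=\sum_{P:\,\mathrm{D_{KL}}(P||\mu)\ge\epsilon}\mu^{\otimes n}(T(P))
\le (n+1)^{|\mathcal{Z}|}\,e^{-n\epsilon}.
\]

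To finish, I would set the right-hand side equal to $\delta$ and invert: solving $(n+1)^{|\mathcal{Z}|}e^{-n\epsilon}=\delta$ gives the threshold $\epsilon=\frac{|\mathcal{Z}|}{n}\log(n+1)-\frac{1}{n}\log\delta$, so with probability at least $1-\delta$ we obtain $\mathrm{D_{KL}}(\hat{\mu}||\mu)<\frac{|\mathcal{Z}|}{n}\log(n+1)-\frac{1}{n}\log\delta$ (note that $-\log\delta>0$ since $\delta\in(0,1)$, so this additive correction is positive, as any high-probability upper bound requires). The second inequality follows by running the identical argument with $(m,\mu',\hat{\mu}')$ in place of $(n,\mu,\hat{\mu})$. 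There is no genuine obstacle here, since the statement is a direct specialization of the cited textbook result; the only step demanding care is the type-class probability estimate $\mu^{\otimes n}(T(P))\le e^{-n\mathrm{D_{KL}}(P||\mu)}$, which is the crux of the argument and the place where finiteness of $\mathcal{Z}$ is essential.
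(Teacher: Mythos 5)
Your proof is correct and follows essentially the same route as the paper, which does not spell out an argument at all but simply invokes \cite[Theorem~11.2.1]{thomas2006elements}; your method-of-types derivation (counting types by $(n+1)^{|\mathcal{Z}|}$, bounding each type class by $e^{-n\mathrm{D_{KL}}(P\|\mu)}$, union-bounding, and inverting) is precisely the standard proof of that cited result. One point worth flagging: the deviation term you correctly derive is $-\tfrac{1}{n}\log\delta$, whereas the theorem as printed reads $\tfrac{1}{n\log\delta}$, which is negative for $\delta\in(0,1)$ and cannot be right (it would drive the bound to $-\infty$ as $\delta\to 1^{-}$); this is evidently a typo in the paper, and your version is the correct one.
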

Thus, it suffices to ensure that the empirical KL converge to the real KL with the similar rate, although we do not know if there might exist a faster convergence rate.

\subsection{Generalize to Approximate Triangle Inequality}
\label{sec:generalize-triangle}
In Section~\ref{sec:unexpected-bound}, some results require that the loss obeys the triangle inequality (i.e. Assumption~\ref{ass:triangle}), such as Theorem~\ref{cor-bound-subgaussian-triangle} and Theorem~\ref{thm-bound-lipshitz-triangle}. While the $0-1$ loss satisfies Assumption~\ref{ass:triangle}, some other loss may not. Thus, to generalize Theorem~\ref{cor-bound-subgaussian-triangle} and Theorem~\ref{thm-bound-lipshitz-triangle}, we invoke an approximate triangle inequality, which is originally defined in \cite{crammer2008learning}.
\begin{assum}[$\alpha$-Triangle]
\label{ass:alpha-triangle}
$\ell(\cdot,\cdot)$ is symmetric and satisfies the following $\alpha$-triangle inequality:
    $
    \ell(y_1,y_2)\leq\alpha\left(\ell(y_1,y_3)+\ell(y_3,y_2)\right)~\textit{for any}~y_1,y_2,y_3\in\mathcal{Y}
    $,
    where $\alpha\geq 1$ is a constant that may depend on the hypothesis space $\mathcal{W}$ and the loss $\ell$.
\end{assum}
\begin{rem}
We note that the squared loss satisfies $2$-triangle inequality.
\end{rem}
 Thus, Theorem~\ref{cor-bound-subgaussian-triangle} can be easily generalized below.
\begin{thm}
            \label{thm-bound-subgaussian-alpha-triangle}
            If Assumption \ref{ass:alpha-triangle} holds and let $\ell(f_{w'}(X),f_{w}(X))$ be $R$-subgaussian for any $w,w'\in \mathcal{W}$. Then for any $w$,
             \[
    \widetilde{\mathrm{Err}}(w) \leq (\alpha^2-1)R_\mu+ \alpha\sqrt{2R^2\mathrm{D_{KL}}(P_{X'}||P_{X})}+\alpha^2\lambda^*,
    \] 
    where $\lambda^*=\min_{w\in\mathcal{W}}R_{\mu'}(w)+R_\mu(w)$.
        \end{thm}
Theorem~\ref{thm-bound-lipshitz-triangle} can be generalized in the similar way. 
While Theorem~\ref{thm-bound-subgaussian-alpha-triangle} strictly speaking is not a generalization bound, as it includes $R_\mu$ in the bound, it shares the same underlying concept as Theorem~\ref{cor-bound-subgaussian-triangle}. Namely, to minimize the population risk in the target domain, it is essential for the source domain and target domain to be similar, and for both $R_\mu$ and $\lambda^*$ to be kept small.
    
\section{Omitted Proofs and Additional Discussions in Section~\ref{sec:expected-bound}}

\subsection{Additional Discussion on Theorem~\ref{thm:ood-semi-bound-1}}

 To derive the bound in Theorem~\ref{thm:ood-semi-bound-1}, we need to make use of the second equality in Eq. (\ref{eq:expect-error}). In fact, by the definition of $\mathrm{Err}$ (the first equality in Eq. (\ref{eq:expect-error})), the unlabelled sample $S'_{X_j'}$ does not explicitly appear, so one can easily apply the similar information-theoretic analysis starting from the first equality in Eq. (\ref{eq:expect-error}), and obtain an upper bound that consists of $I(W;Z_i)$ and $\mathrm{D_{KL}}(\mu||\mu')$. Precisely, the following bound holds,
 \begin{thm}
 \label{thm:itb-uda-notarget}
 Assume $\ell(f_w(X'),Y')$ is $R$-subgaussian for any $w\in\mathcal{W}$. Then 
 \[
    \left|{\mathrm{Err}}\right|
    \leq \frac{1}{n}\sum_{i=1}^n\mathbb{E}\sqrt{2R^2I(W;Z_i)}+\sqrt{2R^2\mathrm{D_{KL}}(\mu||\mu')}.
    \]
 \end{thm}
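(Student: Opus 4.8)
The plan is to split $\mathrm{Err}$ into an in-domain (source) empirical-to-population term and a source-to-target population gap, exactly as in the proof of Theorem~\ref{thm:ood-semi-bound-1} but starting from the \emph{first} expression in Eq.~(\ref{eq:expect-error}), so that the unlabelled target sample never enters and the disintegrated mutual information $I^{X_j'}(W;Z_i)$ collapses to the ordinary $I(W;Z_i)$. Concretely, since $R_{\mu'}(W)$ depends only on $W$, I would insert the decoupled source population risk $\ex{W}{R_\mu(W)}=\ex{P_W\otimes\mu}{\ell(f_W(X),Y)}$ and write
\[
\mathrm{Err}=\underbrace{\ex{W}{R_{\mu'}(W)-R_\mu(W)}}_{\text{(I) domain gap}}+\underbrace{\left(\ex{W}{R_\mu(W)}-\ex{W,S}{R_S(W)}\right)}_{\text{(II) in-domain generalization}},
\]
and then apply the triangle inequality $\left|\mathrm{Err}\right|\le|\text{(I)}|+|\text{(II)}|$.

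For term (I), I would bound it pointwise in $w$ by Lemma~\ref{lem:DV-Subgaussian}: taking $Q=\mu$, $P=\mu'$ and $g=\ell$, and using that $\ell(f_w(X'),Y')$ is $R$-subgaussian under $\mu'$, gives $\left|R_\mu(w)-R_{\mu'}(w)\right|\le\sqrt{2R^2\mathrm{D_{KL}}(\mu\|\mu')}$ for every $w$ (this is the same estimate underlying Theorem~\ref{thm-bound-subgaussian}, with the roles of $\mu$ and $\mu'$ swapped so that the KL runs from $\mu$ to $\mu'$). Since $\left|\ex{W}{R_{\mu'}(W)-R_\mu(W)}\right|\le\ex{W}{\left|R_{\mu'}(W)-R_\mu(W)\right|}$, the same bound survives the expectation over $W$, which explains why the KL term in the statement is $\mathrm{D_{KL}}(\mu\|\mu')$ rather than its reverse.

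For term (II), I would use the standard individual-sample decoupling. Because the $Z_i$ are i.i.d.\ from $\mu$ and $R_\mu(W)=\frac1n\sum_i\ex{P_W\otimes P_{Z_i}}{\ell(f_W(X_i),Y_i)}$, I can write
\[
\text{(II)}=\frac1n\sum_{i=1}^n\left(\ex{P_W\otimes P_{Z_i}}{\ell(f_W(X_i),Y_i)}-\ex{P_{W,Z_i}}{\ell(f_W(X_i),Y_i)}\right),
\]
and bound each summand by Lemma~\ref{lem:DV-Subgaussian} with $Q=P_{W,Z_i}$, $P=P_W\otimes P_{Z_i}$ and $g=\ell$, giving $\sqrt{2R^2\,\mathrm{D_{KL}}(P_{W,Z_i}\|P_W P_{Z_i})}=\sqrt{2R^2 I(W;Z_i)}$. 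Summing and combining with (I) yields the claim; the outer $\ex{}{\cdot}$ in the statement is vacuous here since $I(W;Z_i)$ is deterministic (it is the degenerate, target-free specialization of the $\ex{X_j'}{\cdot}$ appearing in Theorem~\ref{thm:ood-semi-bound-1}).

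The routine steps are the two applications of Lemma~\ref{lem:DV-Subgaussian}; the one point requiring care is the bookkeeping of \emph{which} measure the subgaussian condition is invoked under. Term (I) is exactly where the hypothesis ``$\ell(f_w(X'),Y')$ is $R$-subgaussian'' is used, and it forces the KL direction $\mathrm{D_{KL}}(\mu\|\mu')$. The individual-sample term (II) instead needs $\ell(f_w(X),Y)$ to be $R$-subgaussian under $\mu$ to license the Donsker--Varadhan step; under Assumption~\ref{ass:bounded} (and Remark~\ref{rem:boundedness}) this holds automatically. Thus the main obstacle is not any substantive difficulty but simply ensuring the subgaussianity is available on the source marginal as well and keeping the two KL directions straight.
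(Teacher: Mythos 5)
Your proposal is correct and yields the stated bound, but it takes a genuinely different route from the paper's. The paper proves this theorem (by reference to \citet{wu2020information} and by stripping the conditioning on $X_j'$ from its proof of Theorem~\ref{thm:ood-semi-bound-1}) via a \emph{single} Donsker--Varadhan application to the mismatched divergence $\mathrm{D_{KL}}(P_{W,Z_i}\|P_W\otimes P_{Z'})$, which factors exactly as $I(W;Z_i)+\mathrm{D_{KL}}(\mu\|\mu')$, and then separates the two terms with $\sqrt{a+b}\le\sqrt{a}+\sqrt{b}$; because the reference measure there is $P_W\otimes\mu'$, only the stated target-side subgaussianity is invoked. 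You instead split $\mathrm{Err}$ by the triangle inequality into a PP domain gap plus a standard in-domain individual-sample generalization term and apply Lemma~\ref{lem:DV-Subgaussian} twice --- which is precisely the decomposition the paper itself uses later to prove Corollary~\ref{cor:bounded-mutual-lautum}, so the technique is certainly in the spirit of the paper. The trade-off, which you correctly flag yourself, is that your term (II) requires $\ell(f_w(X),Y)$ to be $R$-subgaussian under the \emph{source} distribution $\mu$ as well, an assumption not present in the theorem's hypothesis (though it is implied by Assumption~\ref{ass:bounded}); the paper's one-shot route does not need it. In exchange, your treatment of term (I) is pointwise in $w$ and therefore sidesteps the delicate interchange of $\log$ and $\ex{P_W}{\cdot}$ that the one-shot argument must perform when centering the loss at $R_{\mu'}(W)$. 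Your remark that the outer $\mathbb{E}$ in the statement is vacuous is also correct.
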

 The proof of Theorem~\ref{thm:itb-uda-notarget} is nearly the same to the proof of \cite[Corollary~ 2]{wu2020information} and \cite[
Corollary~1]{masiha2021learning}.

 It's important to note that although 
 \[I(W;Z_i)\leq I(W;Z_i|X_j')=\ex{X_j'}{I^{X_j'}(W;Z_i)},\] 
 the bound in Theorem~\ref{thm:ood-semi-bound-1} is incomparable to the bound based on $I(W;Z_i)$. This is mainly due to the fact that we use the disintegrated version of mutual information, $I^{X_j'}(W;Z_i)$, and the expectation over $X_j'$ is outside of the square root, which is a convex function. Using $I^{X_j'}(W;Z_i)$ instead of $I(W;Z_i)$ allows us to figure out more details about the role of unlabelled target data in the algorithm. Additionally, one can also prove a bound based on $I(W;Z_i|X_j')$ (e.g., simply applying Jensen's inequality  to Theorem~\ref{thm:ood-semi-bound-1}), which is close to an individual and UDA version of \cite[Theorem~3]{bu2022characterizing}.

Furthermore, the first term in Theorem~\ref{thm:ood-semi-bound-1} characterize the expected generalization gap on the source domain (i.e. $\ex{W,S}{R_\mu(W)-R_S(W)}$), then the bound suggests us that it's possible to invoke the unlabelled target data to further improve the performance on source domain, and the simplest case is the semi-supervised learning (when $\mu=\mu'$).

\paragraph{Compared with \citep{wu2020information,jose2021informationJS}.}
Notably, bounds in \citep{wu2020information,jose2021informationJS} fail to characterize the dependence between $W$ and $S'_{X'}$. More precisely, the algorithm-dependent term in their bounds is $I(W;Z_i)$ or $I(W;S)$, while our algorithm-dependent term is $I^{X_j'}(W;Z_i)$ that directly depends on the unlabelled target data. 
Moreover, while the disintegrated mutual information $I^{X_j'}(W;Z_i)$ and the unconditional mutual information $I(W;Z_i)$ cannot be directly compared, recent work by \cite{wang2023tighter} provides empirical evidence comparing similar terms in the supervised learning setting. Specifically, they demonstrate that when the empirical risk is small, such as in a realizable case, the disintegrated mutual information is smaller than the unconditional mutual information. Conversely, when the empirical risk is large, the unconditional mutual information is the smaller of the two.


\paragraph{More Discussion on the Vanishing of $I(X_j';Z_i|W)$ in Remark~\ref{rem:vanishing}.} 
Note that $S$ depends on $S_{X'}'$ given $W$, so intuitively the dependence between each  individual instance $Z_i$ and $X_j'$ is weaker when $n$ and $m$ become larger. More precisely, W.L.O.G let $i=j=1$, and recall that $W=\mathcal{A}(S_{X'}',S)$, when $n,m\rightarrow \infty$, taking $S$ and $S_{X'}'$ as the input of the algorithm is nearly equivalent to computing $W$ based on the source distribution $\mu$ and the target distribution $P_{X'}$, thus, $W$ will only depend on the two distributions, without depending on the realizations $Z_1$ and $X'_1$ drawn respectively from the two distributions, that is,  $I(Z_1;X_1'|W)=I(Z_1;X_1'|\mathcal{A}(\mu,P_{X'}))=I(Z_1;X_1')=0$. 
In addition, one may argue that what if $W=constant$ that does not really depend on the input data. In this case, $I(Z_1;X_1'|W)=I(Z_1;X_1')=0$ will hold trivially.
In the other extreme, if $n=1$ and $m=1$, then $W=\mathcal{A}(X_1',Z_1)$, and the quantity $I(Z_1;X_1'|\mathcal{A}(X_1',Z_1))$ should be large. When $n$ and $m$ increase, it becomes $I(Z_1;X_1'|\mathcal{A}(X_{1:m}',Z_{1:n}))$. Now we want to guess $Z_1$ from $X_1'$, this should be easier when having the knowledge of $\mathcal{A}(X_1';Z_1)$ compared with when having the knowledge of $\mathcal{A}(X_{1:m}';Z_{1:n})$.

\subsection{Proof of Theorem~\ref{thm:ood-semi-bound-1}}
\begin{proof}
    By Lemma~\ref{lem:DV-KL},
    \begin{align}
        &\mathrm{D_{KL}}\left(P_{W,Z_i|X_j'=x_j'}||P_{W,Z'|X_j'=x_j'}\right)\notag\\
        =&\mathrm{D_{KL}}\left(P_{W,Z_i|X_j'=x_j'}||P_{W|X_j'=x_j'}P_{Z'}\right)\label{eq:independence-1}\\
        \geq& \sup_t \ex{P_{W,Z_i|X_j'=x_j'}}{t\ell(f_W(X_i),Y_i)}-\log{\ex{P_{W|X_j'=x_j'}P_{Z'}}{\exp{\left(t\ell(f_W(X'),Y')\right)}}}\notag\\
        \geq&\sup_t \ex{P_{W,Z_i|X_j'=x_j'}}{t\ell(f_W(X_i),Y_i)}-\ex{P_{W|X_j'=x_j'}}{tR_{\mu'}(W)}-R^2t^2/2,\notag
    \end{align}
    where Eq. (\ref{eq:independence-1}) is by the independence between algorithm output $W$ and unseen target domain data $Z'$, 
    and the last inequality is by the subgaussian assumption.
    
    Thus,
    \begin{align}
        \left\lvert\ex{P_{W,Z_i|X_j'=x_j'}}{\ell(f_W(X_i),Y_i)}-\ex{P_{W|X_j'=x_j'}}{R_{\mu'}(W)}\right\rvert\leq\sqrt{2R^2 \mathrm{D_{KL}}\left(P_{W,Z_i|X_j'=x_j'}||P_{W|X_j'=x_j'}P_{Z'}\right)}.\label{ineq:single-kl}
    \end{align}

    Exploiting the fact that 
    \begin{align*}
        \left|{\mathrm{Err}}\right|=&\left|{\frac{1}{n}\sum_{i=1}^{n}\ex{{W,Z_i}}{\ell(f_W(X_i),Y_i)} - \ex{{W,Z'}}{\ell(f_W(X'),Y')}}\right|\\
        =&\left|\frac{1}{m}\sum_{j=1}^{m}\ex{X'_j}{\frac{1}{n}\sum_{i=1}^{n}\ex{{W,Z_i|X_j'}}{\ell(f_W(X_i),Y_i)} - \ex{{W,Z'|X_j'}}{\ell(f_W(X'),Y')}}\right|\\
        \leq&\frac{1}{m}\sum_{j=1}^{m}\mathbb{E}_{X_j'}\left|{\frac{1}{n}\sum_{i=1}^{n}\ex{{W,Z_i|X_j'}}{\ell(f_W(X_i),Y_i)} - \ex{{W,Z'|X_j'}}{\ell(f_W(X'),Y')}}\right|\\
        \leq& \frac{1}{nm}\sum_{j=1}^{m}\sum_{i=1}^n\mathbb{E}_{X_j'}\left|{\ex{{W,Z_i|X_j'}}{\ell(f_W(X_i),Y_i)} - \ex{{W|X_j'}}{R_{\mu'}(W)}}\right|,
    \end{align*}
    where the last two inequalities are by the Jensen's inequality for the absolute function.
    
    Notice that
    \begin{align*}
        \mathrm{D_{KL}}\left(P_{W,Z_i|X_j'=x_j'}||P_{W|X_j'=x_j'}P_{Z'}\right)
        =&\ex{P_{W,Z_i|X_j'=x_j'}}{\log{\frac{P_{W,Z_i|X_j'=x_j'}}{P_{W|X_j'=x_j'}P_{Z'}}}}\\
        =&\ex{P_{W,Z_i|X_j'=x_j'}}{\log{\frac{P_{W|Z_i,X_j'=x_j'}P_{Z_i}}{P_{W|X_j'=x_j'}P_{Z'}}}}\\
        =&\ex{P_{W,Z_i|X_j'=x_j'}}{\log{\frac{P_{W|Z_i,X_j'=x_j'}}{P_{W|X_j'=x_j'}}}}+\ex{P_{Z_i}}{\log{\frac{P_{Z_i}}{P_{Z'}}}}\\
        =&I(W;Z_i|X_j'=x_j')+\mathrm{D_{KL}}(\mu||\mu').
    \end{align*}
    
    Recall Eq. (\ref{ineq:single-kl}), we then have
    \begin{align*}
        \left|{\mathrm{Err}}\right|\leq&
        \frac{1}{nm}\sum_{j=1}^{m}\sum_{i=1}^n\mathbb{E}_{X_j'}\left|{\ex{{W,Z_i|X_j'}}{\ell(f_W(X_i),Y_i)} - \ex{{W|X_j'}}{R_{\mu'}(W)}}\right|\\
        \leq&\frac{1}{nm}\sum_{j=1}^{m}\sum_{i=1}^n\mathbb{E}_{X_j'}\sqrt{2R^2 \mathrm{D_{KL}}\left(P_{W,Z_i|X_j'}||P_{W|X_j'}P_{Z'}\right)}\\
        =&\frac{1}{nm}\sum_{j=1}^{m}\sum_{i=1}^n\mathbb{E}_{X_j'}\sqrt{2R^2(I^{X_j'}(W;Z_i)+\mathrm{D_{KL}}(\mu||\mu'))}\\
        \leq&\frac{1}{nm}\sum_{j=1}^{m}\sum_{i=1}^n\mathbb{E}_{X_j'}\sqrt{2R^2I^{X_j'}(W;Z_i)}+\sqrt{2R^2\mathrm{D_{KL}}(\mu||\mu')}.
    \end{align*}
    This completes the proof.
    \end{proof}

\subsection{Proof of Corollary~\ref{cor:bounded-mutual-lautum}}
\begin{proof}
    We now modify the proof in Theorem~\ref{thm:ood-semi-bound-1}.
    
    Recall that 
    \begin{align*}
        \left|{\mathrm{Err}}\right|\leq& \frac{1}{nm}\sum_{j=1}^{m}\sum_{i=1}^n\mathbb{E}_{X_j'}\left|{\ex{{W,Z_i|X_j'}}{\ell(f_W(X_i),Y_i)} - \ex{{W|X_j'}}{R_{\mu'}(W)}}\right|.
    \end{align*}
    
    We first decompose the right hand side,
    \begin{align*}
        &\left|{\ex{{W,Z_i|X_j'=x_j'}}{\ell(f_W(X_i),Y_i)} - \ex{{W|X_j'=x_j'}}{R_{\mu'}(W)}}\right|\\
        =& \left|{\ex{{W,Z_i|X_j'=x_j'}}{\ell(f_W(X_i),Y_i)}-\ex{{W|X_j'=x_j'}}{R_{\mu}(W)} +\ex{{W|X_j'=x_j'}}{R_{\mu}(W)} - \ex{{W|X_j'=x_j'}}{R_{\mu'}(W)}}\right|\\
        \leq& \left|\ex{{W,Z_i|X_j'=x_j'}}{\ell(f_W(X_i),Y_i)}-\ex{{W|X_j'=x_j'}}{R_{\mu}(W)}\right| +\left|\ex{{W|X_j'=x_j'}}{R_{\mu}(W)-R_{\mu'}(W)}\right|\\
        \leq& \left|\ex{{W,Z_i|X_j'=x_j'}}{\ell(f_W(X_i),Y_i)}-\ex{{W|X_j'=x_j'}}{R_{\mu}(W)}\right| +\frac{M}{\sqrt{2}}\sqrt{\min\{\mathrm{D_{KL}}(\mu||\mu'),\mathrm{D_{KL}}(\mu'||\mu)\}},
    \end{align*}
    where the last inequality is by Corollary~\ref{cor-bound-symmetric-kl}.
    
    Then for the first term in RHS, notice that
    \begin{align*}
        &\mathrm{D_{KL}}\left(P_{W,Z|X_j'=x_j'}||P_{W,Z_i|X_j'=x_j'}\right)\\
        =&\mathrm{D_{KL}}\left(P_{W|X_j'=x_j'}P_{Z}||P_{W,Z_i|X_j'=x_j'}\right)\\
        \geq& \sup_t \ex{P_{W|X_j'=x_j'}P_{Z}}{t\ell(f_W(X),Y)}-\log{\ex{P_{W,Z_i|X_j'=x_j'}}{\exp{t\ell(f_W(X_i),Y_i)}}}\\
        \geq&\sup_t \ex{P_{W|X_j'=x_j'}P_{Z}}{t\ell(f_W(X),Y)}-\ex{P_{W,Z_i|X_j'=x_j'}}{t\ell(f_W(X_i),Y_i)}\\&-\log{\ex{P_{W,Z_i|X_j'=x_j'}}{e^{t(\ell(f_W(X_i),Y_i)-\ex{P_{W,Z_i|X_j'=x_j'}}{\ell(f_W(X_i),Y_)})}}}\\
        \geq&\sup_t \ex{P_{W|X_j'=x_j'}}{tR_{\mu}(W)}-\ex{P_{W,Z_i|X_j'=x_j'}}{t\ell(f_W(X_i),Y_i)}-M^2t^2/8,
    \end{align*}
    where the last inequality is due to the fact that $\ell$ is bounded by $M$ and $\ell(f_W(X_i),Y_i)$ is $M/2$-subgaussian.
    
    Thus,
    \begin{align*}
        \left|\ex{{W,Z_i|X_j'=x_j'}}{\ell(f_W(X_i),Y_i)}-\ex{{W|X_j'=x_j'}}{R_{\mu}(W)}\right|\leq& \sqrt{\frac{M^2}{2}\mathrm{D_{KL}}\left(P_{W|X_j'=x_j'}P_{Z}||P_{W,Z_i|X_j'=x_j'}\right)}\\
        =& \sqrt{\frac{M^2}{2}L\left(W,Z_i|X_j'=x_j'\right)}.\\
    \end{align*}
    Plugging this inequality with the decomposition into the inequality at the beginning of the proof, we have
    \begin{align*}
        \left|{\mathrm{Err}}\right|\leq& \frac{1}{nm}\sum_{j=1}^{m}\sum_{i=1}^n\mathbb{E}_{X_j'}\sqrt{\frac{M^2}{2}L^{X_j'}\left(W,Z_i\right)}+\frac{M}{\sqrt{2}}\sqrt{\min\{\mathrm{D_{KL}}(\mu||\mu'),\mathrm{D_{KL}}(\mu'||\mu)\}}.
    \end{align*}
    Similar development also holds for $\mathrm{D_{KL}}\left(P_{W,Z_i|X_j'=x_j'}||P_{W|X_j'=x_j'}P_{Z}\right)$ as in the proof of Theorem~\ref{thm:ood-semi-bound-1}, thus
    \[
    \left|{\mathrm{Err}}\right|
    \leq \frac{M}{\sqrt{2}nm}\sum_{j=1}^{m}\sum_{i=1}^n\mathbb{E}_{X_j'}\sqrt{\min\left\{I^{X_j'}(W;Z_i),L^{X_j'}(W;Z_i)\right\}}+\frac{M}{\sqrt{2}}\sqrt{\min\left\{\mathrm{D_{KL}}(\mu||\mu'),\mathrm{D_{KL}}(\mu'||\mu)\right\}}.
    \]
    This completes the proof.
\end{proof}
    
\subsection{Proof of Theorem~\ref{thm:wasserstein-bound-expected}}
\begin{proof}
Similar to the proof of Corollary~\ref{cor:bounded-mutual-lautum},
recall Theorem~\ref{thm-bound-lipschitz},
    \begin{align*}
        &\left|{\mathrm{Err}}\right|\\
        \leq& \frac{1}{nm}\sum_{j=1}^{m}\sum_{i=1}^n\mathbb{E}_{X_j'}\left|{\ex{{W,Z_i|X_j'}}{\ell(f_W(X_i),Y_i)} - \ex{{W|X_j'}}{R_\mu(W)} + \ex{{W|X_j'}}{R_\mu(W)}- \ex{{W|X_j'}}{R_{\mu'}(W)}}\right|\\
         \leq& \frac{1}{nm}\sum_{j=1}^{m}\sum_{i=1}^n\mathbb{E}_{X_j'}\left|{\ex{{W,Z_i|X_j'}}{\ell(f_W(X_i),Y_i)} - \ex{{W|X_j'}}{R_\mu(W)} }\right|+\beta\mathbb{W}(\mu,\mu')\\
         \leq& \frac{1}{nm}\sum_{j=1}^{m}\sum_{i=1}^n\mathbb{E}_{X_j',Z_i}\left|{\ex{{W|Z_i,X_j'}}{\ell(f_W(X_i),Y_i)} - \ex{{W|X_j'}}{\ell(f_W(X_i),Y_i)} }\right|+\beta \mathbb{W}(\mu,\mu')\\
         \leq& \frac{\beta'}{nm}\sum_{j=1}^{m}\sum_{i=1}^n\mathbb{E}_{X_j',Z_i}\mathbb{W}(P_{W|X_j',Z_i},P_{W|X_j'})+\beta \mathbb{W}(\mu,\mu'),
    \end{align*}
where the last inequality is by Lemma~\ref{lem:KR duality}. 
This concludes the proof.
\end{proof}

\subsection{Proof of Corollary~\ref{cor:tv-bound-expected}}
\begin{proof}
    Similar to the proof of Corollary~\ref{cor-bound-total-variation-bounded}, replacing Wasserstein distance by the total variation and replacing $\beta$ and $\beta'$ by $M$, will give us the first inequality,
    \begin{align*}
        \left|\widetilde{\mathrm{Err}}\right|&\leq \frac{M}{nm}\sum_{j=1}^{m}\sum_{i=1}^n\ex{X_j',Z_i}{\mathrm{TV}(P_{W|Z_i,X_j'},P_{W|X_j'})}+M\mathrm{TV}(\mu,\mu').
    \end{align*}
    The second inequality is by Lemma~\ref{lem:pinsker},
    \[
       \left|\widetilde{\mathrm{Err}}\right|\leq \frac{M}{nm}\sum_{j=1}^{m}\sum_{i=1}^n\mathbb{E}_{X_j',Z_i}\sqrt{\frac{1}{2}\mathrm{D_{KL}}(P_{W|Z_i,X_j'}||P_{W|X_j'})}+\sqrt{\frac{M^2}{2}\mathrm{D_{KL}}(\mu||\mu')}.
    \]
    Again, one can also apply Lemma~\ref{lem:BH} here. This concludes the proof.
\end{proof}


\subsection{Proof of Theorem~\ref{thm:uda-gradient-bound}}
\begin{proof}
    Recall Theorem~\ref{thm:ood-semi-bound-1} and by Jensen's inequality we have
    \begin{align*}
        \left|{\mathrm{Err}}\right|
    \leq& \frac{1}{nm}\sum_{j=1}^{m}\sum_{i=1}^n\mathbb{E}_{X_j'}\sqrt{2R^2I^{X_j'}(W;Z_i)}+\sqrt{2R^2\mathrm{D_{KL}}(\mu||\mu')}\\
    \leq&\sqrt{\frac{2R^2}{nm}\sum_{j=1}^{m}\sum_{i=1}^nI(W;Z_i|X_j')}+\sqrt{2R^2\mathrm{D_{KL}}(\mu||\mu')}.
    \end{align*}
    
    Let $X'_{1,\dots,j-1,j+1,\dots,m}=S'_{X'}\setminus X_j'$. Notice that 
    \begin{align*}
        I(W;Z_i|S'_{X'})=& I(W;Z_i|S'_{X'})+I(X'_{1,\dots,j-1,j+1,\dots,m};Z_i|X_j')\\
        =& I(W;Z_i|X_j')+I(X'_{1,\dots,j-1,j+1,\dots,m};Z_i|X_j',W)\\
        \geq& I(W;Z_i|X_j').
    \end{align*}

    Thus, $I(W;Z_i|X_j')\leq I(W;Z_i|S'_{X'})$. Then
    \[
    \frac{1}{nm}\sum_{j=1}^{m}\sum_{i=1}^nI(W;Z_i|X_j')\leq\frac{1}{nm}\sum_{j=1}^{m}\sum_{i=1}^nI(W;Z_i|S'_{X'})=\frac{1}{n}\sum_{i=1}^nI(W;Z_i|S'_{X'}).
    \]
    
    Then, since $S \perp\!\!\!\perp S'_{X'}$ and $Z_i\perp\!\!\!\perp Z_{1:i-1}$ for any $i\in [n]$, by the chain rule of mutual information, we have
    \begin{align*}
        I(W;S|S'_{X'})=\sum_{i=1}^nI(W;Z_i|S'_{X'},Z_{1:i-1})=&\sum_{i=1}^nI(W;Z_i|S'_{X'},Z_{1:i-1})+I(Z_i;Z_{1:i-1})\\
        =&\sum_{i=1}^nI(W,Z_{1:i-1};Z_i|S'_{X'})\\
        =&\sum_{i=1}^nI(W;Z_i|S'_{X'})+I(Z_i;Z_{1:i-1}|S'_{X'},W)\\
        \geq&\sum_{i=1}^nI(W;Z_i|S'_{X'}).
    \end{align*}
    
    Thus, the generalization error bound becomes
    \[
    \left|{\mathrm{Err}}\right|\leq\sqrt{\frac{2R^2}{n}I(W;S|S'_{X'})}+\sqrt{2R^2\mathrm{D_{KL}}(\mu||\mu')}.
    \]
    
    Recall the updating rule of $W$ and notice that $W_0$ is independent of $S$ and $S'_{X'}$, the following process is by using the chain rule of mutual information and data processing inequality recurrently,
    \begin{align*}
        I(W_T;S|S'_{X'})=& I(W_{T-1}-\eta_T g(W_{T-1},Z_{B_T},X'_{B_T})+N_T;S|S'_{X'})\\
        \leq& I(W_{T-1},-\eta_T g(W_{T-1},Z_{B_T},X'_{B_T})+N_T;S|S'_{X'})\\
        =& I(W_{T-1};S|S'_{X'})+I(\eta_T g(W_{T-1},Z_{B_T},X'_{B_T})+N_T;S|S'_{X'},W_{T-1})\\
        \vdots&\\
        =&\sum_{t=1}^T I(\eta_t g(W_{t-1},Z_{B_t},X'_{B_t})+N_t;S|S'_{X'},W_{t-1}).
    \end{align*}
    
    For each $t\in [T]$, denote $g(W_{t-1},Z_{B_t},X'_{B_t})$ as $G_t$, then
    
    \begin{align*}
        I(\eta_t g(W_{t-1},Z_{B_t},X'_{B_t})+N_t;S|S'_{X'},W_{t-1})=&\ex{S'_{X'},W_{t-1},S}{\mathrm{D_{KL}}(P_{G_t+\frac{N_t}{\eta_t}|S,S'_{X'},W_{t-1}}||P_{G_t+\frac{N_t}{\eta_t}|S'_{X'},W_{t-1}})}\\
        \leq& \ex{S'_{X'},W_{t-1},S}{\mathrm{D_{KL}}(P_{G_t+\frac{N_t}{\eta_t}|S,S'_{X'},W_{t-1}}||P_{\ex{S}{G_t}+\frac{N_t}{\eta_t}|S'_{X'},W_{t-1}})}\\
        =&\frac{\eta_t^2}{2\sigma^2_t}\ex{S'_{X'},W_{t-1},S}{\left|\left|G_t-\ex{S}{G_t}\right|\right|^2},
    \end{align*}
    where the inequality is by Lemma~\ref{lem:mi-center-gravity} and the last equality is by the KL divergence between two Gaussian distributions.
    
    Finally, putting everything together, 
    \[
    \left|{\mathrm{Err}}\right|\leq\sqrt{\frac{R^2}{n}\sum_{t=1}^T\frac{\eta_t^2}{\sigma^2_t}\ex{S'_{X'},W_{t-1},S}{\left|\left|G_t-\ex{S}{G_t}\right|\right|^2}}+\sqrt{2R^2\mathrm{D_{KL}}(\mu||\mu')},
    \]
    which concludes the proof.
    \end{proof}

\subsection{Derivation of Eq.~(\ref{eq:cross-entropy decomposition})}

    Recall the expected cross-entropy loss, we have
    \begin{align*}
        \ex{W,Z_i}{\ell(f_W(T_i),Y_i)} &= \ex{Z_i,W}{-\log{Q_{Y_i|T_i,W}}}\\
        &=\ex{Z_i,W}{\log{\frac{P_{Y_i|T_i,W}}{Q_{Y_i|T_i,W}P_{Y_i|T_i,W}}}}\\
        &=H(Y_i|T_i,W)  +\ex{X_i,W}{\mathrm{D_{KL}}(P_{Y_i|T_i,W}||Q_{Y_i|T_i,W})}\\
        &=\ex{Z_i,W}{\log{\frac{P_{Y_i|T_i}P_{W|T_i}}{P_{Y_i|T_i,W}P_{Y_i|T_i}P_{W|T_i}}}} +\ex{T_i,W}{\mathrm{D_{KL}}(P_{Y_i|T_i,W}||Q_{Y_i|T_i,W})}\\
        &=\ex{Z_i,W}{\log{\frac{P_{Y_i|T_i}P_{W|T_i}}{P_{Y_i,W|T_i}P_{Y_i|T_i}}}}  +\ex{T_i,W}{\mathrm{D_{KL}}(P_{Y_i|T_i,W}||Q_{Y_i|T_i,W})}\\
        &=H(Y_i|T_i)-I(W;Y_i|T_i)+\ex{T_i,W}{\mathrm{D_{KL}}(P_{Y_i|T_i,W}||Q_{Y_i|T_i,W})}
    \end{align*}
    
\subsection{Additional Discussion on LIMIT}
\label{sec:limit}
In Section~\ref{sec:expected-bound}, we discussed the LIMIT approach proposed by \cite{harutyunyan2020improving} as a means of controlling label information memorization during training. Roughly speaking, to update the classifier parameters, LIMIT constructs an auxiliary network that predicts gradients instead of using the true gradients, which avoids direct use of the true labels for training. To obtain accurate gradients, the auxiliary network needs to be trained using the true labels. We found that the training of LIMIT is unstable and difficult to tune the hyperparameters when used under UDA settings. Therefore, we opted to use the pseudo label strategy proposed in Section~\ref{sec:expected-bound} instead of the pseudo gradient strategy.


\section{Experiment Details}
We implemented our approach using PyTorch \citep{paszke2019pytorch} and conducted all experiments on NVIDIA Tesla V100 GPUs with 32 GB of memory. Our code builds largely on the implementation from \cite{gulrajani2021in}\footnote{Available at: \href{https://github.com/facebookresearch/DomainBed}{https://github.com/facebookresearch/DomainBed}.} and \cite{nguyen2022kl}\footnote{Available at: \href{https://github.com/atuannguyen/kl}{https://github.com/atuannguyen/kl}.}.

\subsection{Objective Functions of Gradient Penalty and Controlling Label Information}
For every iteration, the objective function after adding the gradient penalty becomes 
\[
\min_{W} \hat{L}(W,Z_{B_t},X'_{B_t})+\lambda_1||g(W,Z_{B_t},X'_{B_t})||^2,
\]
where $\hat{L}(W,Z_{B_t},X'_{B_t})$ is some loss function for the source and target domain data in the current mini-batch and $\lambda_1$ is the trade-off coefficient. For example, if we combine ERM with gradient penalty then $\hat{L}(W,Z_{B_t},X'_{B_t})=\frac{1}{|B_t|}\sum_{k\in B_t}\ell(f_W(X_k),Y_k)$ and $\ell$ could be the cross-entropy loss. Moreover, if we combine KL guided marginal alignment algorithm \citep{nguyen2022kl} with gradient penalty then the objective function is
\[
\min_{W,\theta} \frac{1}{|B_t|}\sum_{k\in B_t}\ell(f_W(T_k),Y_k)+\beta_1\mathrm{D_{KL}}(P_{T'}||P_T)+\beta_2\mathrm{D_{KL}}(P_T||P_{T'})+\lambda_1||g(W,Z_{B_t},X'_{B_t})||^2,
\]
where $\theta$ is the parameters of the representation network and the gradient is
\[
g(W,Z_{B_t},X'_{B_t})= \frac{1}{|B_t|}\sum_{k\in B_t}\nabla_{W,\theta}\ell(f_W(T_k),Y_k)+\beta_1\nabla_\theta\mathrm{D_{KL}}(P_{T'}||P_T)+\beta_2\nabla_\theta\mathrm{D_{KL}}(P_T||P_{T'}).
\]

In \cite{nguyen2022kl}, the representation distribution is modelled as an Gaussian distribution, i.e., $T\sim \mathcal{N}(\mu_\theta, \sigma^2_\theta \mathrm{I}_d|X)$ and $T'\sim \mathcal{N}(\mu_\theta, \sigma^2_\theta \mathrm{I}_d|X')$. Additionally, let the batch size be $b=|B_t|$, the empirical KL divergence is estimated by the mini-batch data, as given in \cite{nguyen2022kl},
\begin{align*}
    &\beta_1\mathrm{D_{KL}}(P_{T'}||P_T)+\beta_2\mathrm{D_{KL}}(P_T||P_{T'}) \\
    \approx& \beta_1\frac{1}{b}\sum_{k\in B_t} [\log{P_{T_k'}}-\log{P_{T_k}}]+\beta_2\frac{1}{b}\sum_{k\in B_t} [\log{P_{T_k}}-\log{P_{T'_k}}]\\
    \approx& \beta_1\frac{1}{b}\sum_{k\in B_t} \left[\log{\frac{1}{b}\sum_{k\in B_t} P_{T'_k|X'_k}}-\log{\frac{1}{b}\sum_{k\in B_t} P_{T_k|X_k}}\right]+\beta_2\frac{1}{b}\sum_{k\in B_t} \left[\log{{\frac{1}{b}\sum_{k\in B_t} P_{T_k|X_k}}}-\log{\frac{1}{b}\sum_{k\in B_t} P_{T'_k|X'_k}}\right],
\end{align*}
where $P_{T_k|X_k}=\mathcal{N}(\mu_\theta, \sigma^2_\theta \mathrm{I}_d|X_k)$ and $P_{T'_k|X'_k}=\mathcal{N}(\mu_\theta, \sigma^2_\theta \mathrm{I}_d|X'_k)$. To be more precise, $\mu_\theta$ and $\sigma_\theta$ are the outputs of the representation network. Since the forward pass requires the sampling of $T_k$ and $T_k'$, we need to use the reparameterization trick \citep{kingma2013auto} for the backward pass.

When we train the model with controlling label information, the objective function becomes
\[
\min_{W} \hat{L}(W,Z_{B_t},X'_{B_t})+\lambda_2||W-\widetilde{W}||^2,
\]
where $\widetilde{W}$ is the auxiliary classifier and $\lambda_2$ is the trade-off hyperparameter.

Similarly, when we combine KL guided marginal alignment algorithm with controlling label information, then the objective function in every iteration is
\[
\min_{W,\theta} \frac{1}{|B_t|}\sum_{k\in B_t}\ell(f_W(T_k),Y_k)+\beta_1\mathrm{D_{KL}}(P_{T'}||P_T)+\beta_2\mathrm{D_{KL}}(P_T||P_{T'})+\lambda_2||W-\widetilde{W}||^2.
\]
In addition, the training objective for the auxiliary classifier is
\begin{align}
\label{eq:cl-obj}
    \min_{\widetilde{W}} \frac{1}{|B_t|}\sum_{k\in B_t}\ell(f_{\widetilde{W}}(T'_k), f_W(T'_k))+\frac{1}{|B_t|}\sum_{k\in B_t}\ell(f_{\widetilde{W}}(T_k), f_W(T_k)).
\end{align}
In practice, removing the second term would not affect the performance. Note that we need to disenable the automatic differentiation of $T$, $T'$ and $W$ when executing the backward pass for the auxiliary classifier. The detailed algorithm of controlling label information is given in the next section.

\subsection{Algorithm of Controlling Label Information and Additional Results of {ERM-CL}}
\begin{algorithm}[!htbp]
\caption{Controlling Label Information}
\label{alg:cl}
\begin{algorithmic}[2]
\REQUIRE Source domain labelled dataset $S$, Target domain unlabelled dataset $S'_{X'}$, Batch size $b$, Classification loss function $\ell_c$, Marginal alignment loss function $\ell_r$, Initial classifier parameter $\boldsymbol{w}_0=\widetilde{\boldsymbol{w}}_0$, Initial representation network parameter $\boldsymbol{\theta}_0$, Learning rate $\eta$, Lagrange multiplier $\lambda_2$ 
\WHILE{$\boldsymbol{w}_t,\theta_t$ not converged}
\STATE Update iteration: $t\gets t+1$
\STATE Sample $\mathcal{Z_B}=\{\boldsymbol{z}_i\}_{i=1}^b$ from source domain training set $S$
\STATE Sample $\mathcal{X'_B}=\{\boldsymbol{x}'_i\}_{i=1}^b$ from target domain training set $S'_{X'}$
\STATE Compute distance from the auxiliary classifier $\boldsymbol{dis}\gets||\boldsymbol{w}_t-\widetilde{\boldsymbol{w}}_t||^2$ 
\STATE Compute marginal alignment loss $L_r \gets \frac{1}{b} \sum_{i=1}^b \ell_r(\theta_t,\boldsymbol{z}_i, \boldsymbol{x}'_i)$
\STATE Compute classification loss $L_c \gets \frac{1}{b} \sum_{i=1}^b \ell_c(\boldsymbol{w}_{t}, \theta_t,\boldsymbol{z}_i, \boldsymbol{x}'_i)$
\STATE Compute gradient: \\ 
$g_{\mathcal{B}} \gets \nabla (L_c+L_r+\lambda_2 \boldsymbol{dis})$
\STATE Update parameter: $\boldsymbol{w}_{t+1}\gets\boldsymbol{w}_{t}-\eta\cdot g_{\mathcal{B}}$, $\boldsymbol{\theta}_{t+1}\gets\boldsymbol{\theta}_{t}-\eta\cdot g_{\mathcal{B}}$
\STATE Obtain the pseudo labels $\mathcal{Y'_B}\gets f_{\boldsymbol{w}_{t}}(g_{\boldsymbol{\theta}_{t}}(\mathcal{X'_B}))$
\STATE Compute auxiliary classification loss $L_a \gets \frac{1}{b} \sum_{i=1}^b \ell_c(\widetilde{\boldsymbol{w}}_t, \theta_t,\boldsymbol{x}'_i, \boldsymbol{y}'_i)$
\STATE Compute auxiliary classifier gradient: \\ 
$\widetilde{g}_{\mathcal{B}} \gets \nabla L_a$
\STATE Update auxiliary classifier parameter: $\widetilde{\boldsymbol{w}}_{t+1}\gets\widetilde{\boldsymbol{w}}_{t}-\eta\cdot \widetilde{g}_{\mathcal{B}}$
\ENDWHILE
\end{algorithmic}
\end{algorithm}
If we only provide the pseudo labels for the target domain data to the auxiliary classifier, i.e. removing the second term in Eq~(\ref{eq:cl-obj}), the Algorithm~\ref{alg:cl} is the algorithm for combining any marginal alignment algorithm with controlling label information.

Even without incorporating with the marginal alignment algorithm, e.g., ERM, in which case $L_r$ is removed,  Algorithm~\ref{alg:cl} still boosts the performance in practice. 

\begin{table*}[t!]
 \scriptsize
 \centering
 \caption{RotatedMNIST and Digits Experiments of \textbf{ERM-CL}. Results of ERM are reported from \cite{nguyen2022kl}.}
  \label{tab:RM-Digits-ERMCL}
 \centering
 	\begin{tabular}{ccccccccccc}
 		\toprule
 		& \multicolumn{6}{c}{RotatedMNIST ($\mathbf{0^{\circ}}$ as source domain)} & \multicolumn{4}{c}{Digits}\\
		\cmidrule(r){2-7}
		\cmidrule(r){8-11}
		Method  & $\mathbf{15^{\circ}}$  & $\mathbf{30^{\circ}}$   & $\mathbf{45^{\circ}}$ & $\mathbf{60^{\circ}}$ & $\mathbf{75^{\circ}}$ & \textbf{Ave} & \textbf{M $\rightarrow$ U}  & \textbf{U $\rightarrow$ M}  & \textbf{S $\rightarrow$ M}  & \textbf{Ave}   \\
		\midrule
		ERM & 97.5±0.2 & 84.1±0.8 & 53.9±0.7 & 34.2±0.4 & 22.3±0.5 & 58.4 & 73.1±4.2 & 54.8±6.2 & 65.9±1.4 & 64.6 \\
		\midrule
		ERM-GP & \textbf{97.5±0.1} & \textbf{86.2±0.5} & \textbf{62.0±1.9} & \textbf{34.8±2.1} & \textbf{26.1±1.2} & \textbf{61.2} & \textbf{91.3±1.6} & \textbf{72.7±4.2} & {68.4±0.2} &  77.5 \\
		ERM-CL  & {97.3±0.1} & {84.1±0.1} & {56.9±2.5} & {34.2±1.9} & 25.5±1.6 & 59.6 & 88.9±0.4& 71.2±3.6 & \textbf{73.5±1.4} & \textbf{77.9} \\
 		\bottomrule
 	\end{tabular}
\end{table*}

Table~\ref{tab:RM-Digits-ERMCL} shows that \textbf{ERM-CL} can overall outperform the basic \textbf{ERM} and is close to the performance of \textbf{ERM-GP}.

\subsection{Architectures and Hyperparameters}
The network architecture in this work is the same as in \cite{gulrajani2021in} and \cite{nguyen2022kl}, where a simple CNN is used.

Other settings are also the same as \cite{gulrajani2021in} and \cite{nguyen2022kl}, for example, each algorithm is trained for $100$ epochs. To select the hyperparameters ($\lambda_1$ and $\lambda_2$) for \textbf{ERM-GP}, \textbf{ERM-KL}, \textbf{KL-GP} and \textbf{KL-CL}, we perform random search. Specifically, $\lambda_1$ is searched between $[0.1, 0.9]$ and $\lambda_2$ is searched between $[10^{-6}, 0.8]$. Other hyperparameters searching range could be found in the source code of \cite{nguyen2022kl}.

\subsection{Additional Experimental Results}
\subsection{Ablation Study on the Effect of Gradient Penalty Hyperparameter}
Our study includes an ablation analysis to investigate the impact of the hyperparameter $\lambda_1$ in the context of \textbf{KL-GP}. Specifically, we conduct experiments on both RotatedMNIST and Digits datasets, where the source and target domains are set to 0°/60° and SVHN/MNIST, respectively. Table~\ref{tab:ablation-study} summarizes the results. It is worth noting that setting $\lambda_1$ to zero effectively reduces \textbf{KL-GP} to \textbf{KL}, and our results confirm the efficacy of including the gradient penalty term in \textbf{KL-GP}.
\begin{table*}[t!]
 \centering
 \caption{{Ablation study on Effect of Gradient Penalty Hyperparameter}.}
 	\begin{tabular}{c|cccc}
 		\toprule
		  $\lambda_1$ & 0  & 0.1   & 0.3 & 0.5  \\
		\midrule
		$\mathbf{0^{\circ}}\rightarrow\mathbf{60^{\circ}}$ & 75.5±2.4 & 88.0±8.1 & 82.8±5.8 & 80.1±3.7 \\
		$\textbf{S $\rightarrow$ M}$ & 92.5±0.9 & 93.6±1.2 & 93.8±1.1 & 93.1±1.7 \\
 		\bottomrule
 	\end{tabular}
  \label{tab:ablation-study}
\end{table*}

\subsection{Visualization Results}

\begin{figure}[htbp]
    \centering
       \begin{subfigure}[t]{0.32\columnwidth}
           \centering
          \includegraphics[scale=0.16]{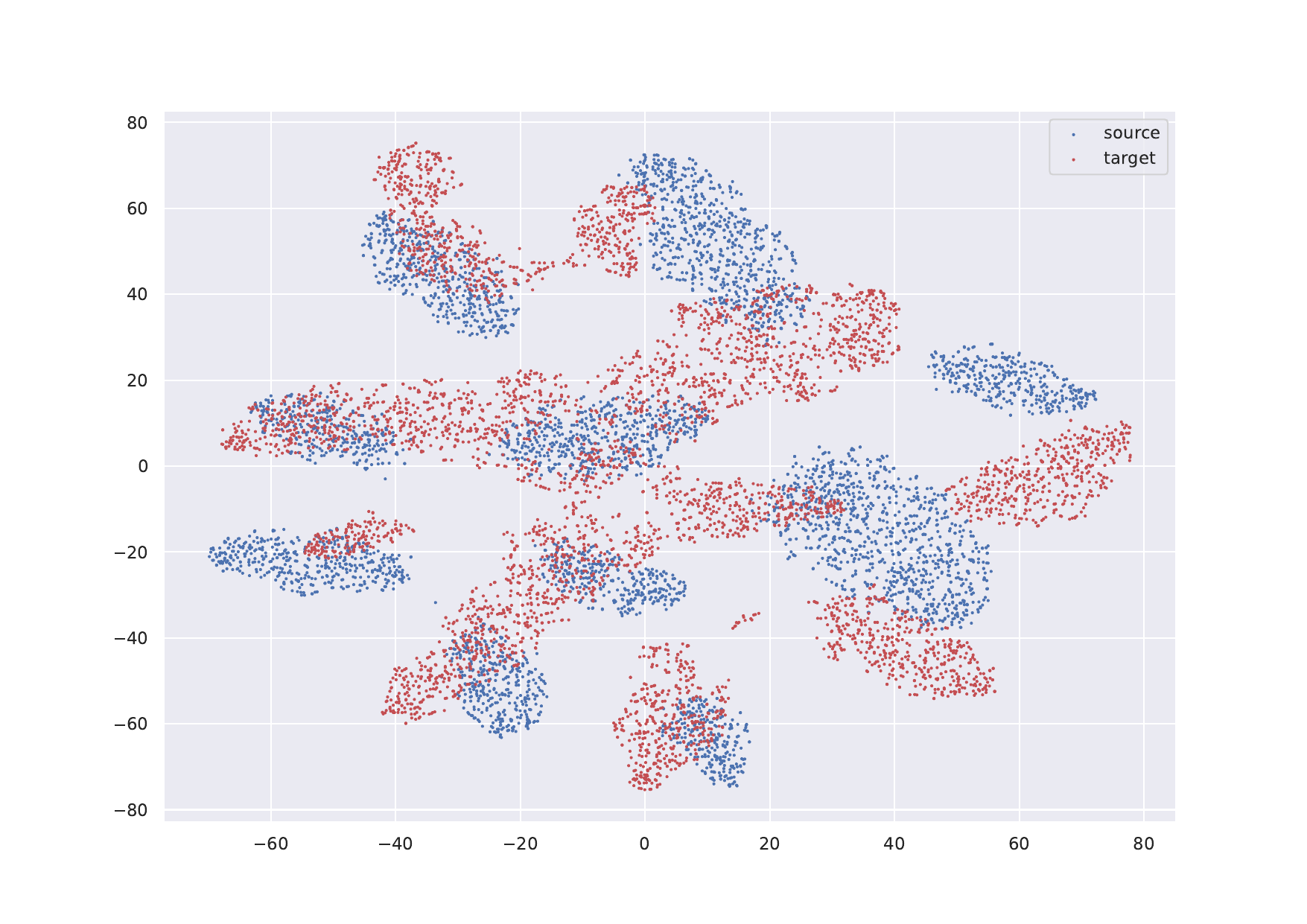}
            \caption{Visualization of KL}
            \label{fig:a}
    \end{subfigure}
    \begin{subfigure}[t]{0.32\columnwidth}
            \centering
            \includegraphics[scale=0.16]{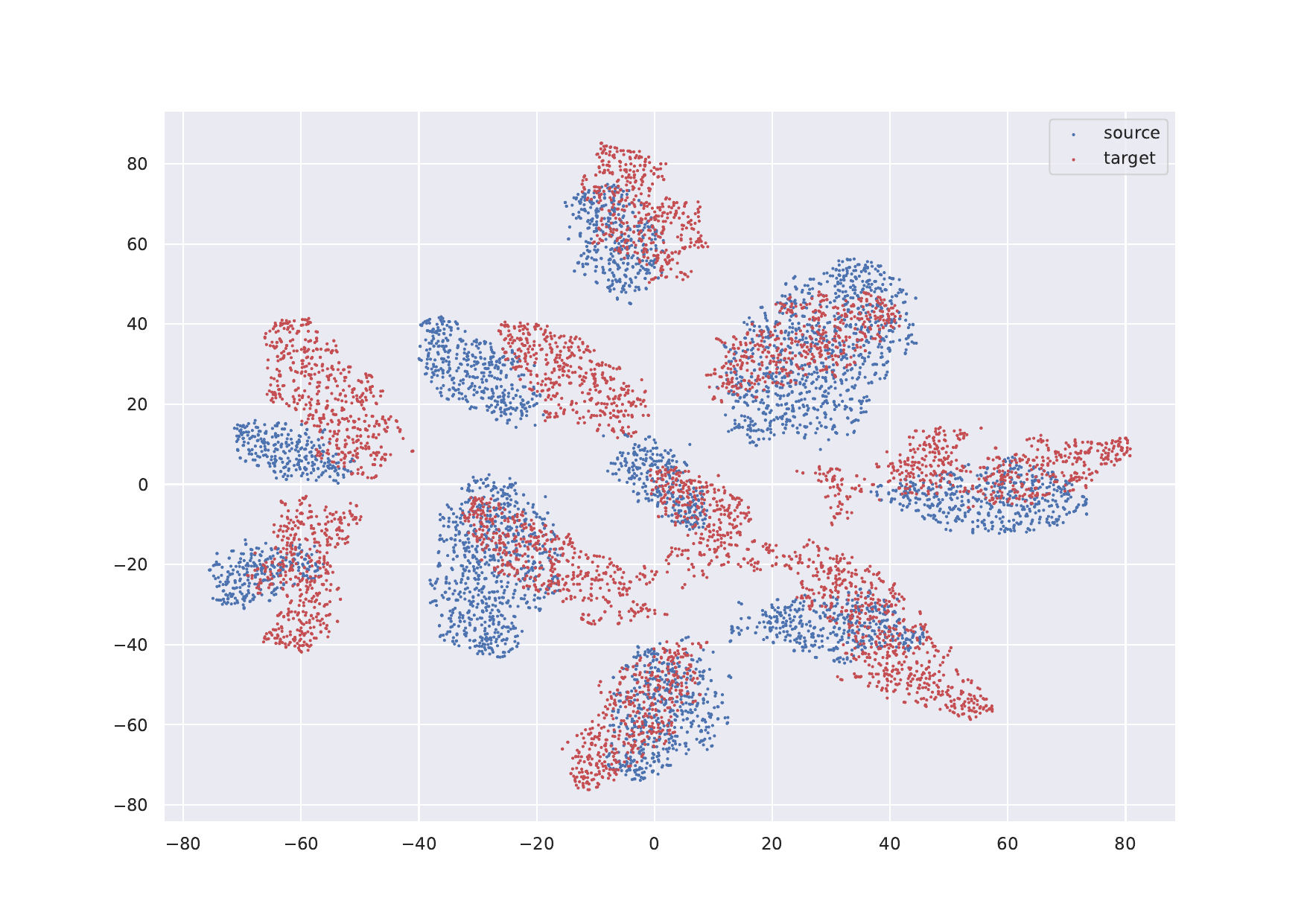}
            \caption{Visualization of KL-GP}
            \label{fig:b}
    \end{subfigure}
\begin{subfigure}[t]{0.32\columnwidth}
            \centering
            \includegraphics[scale=0.16]{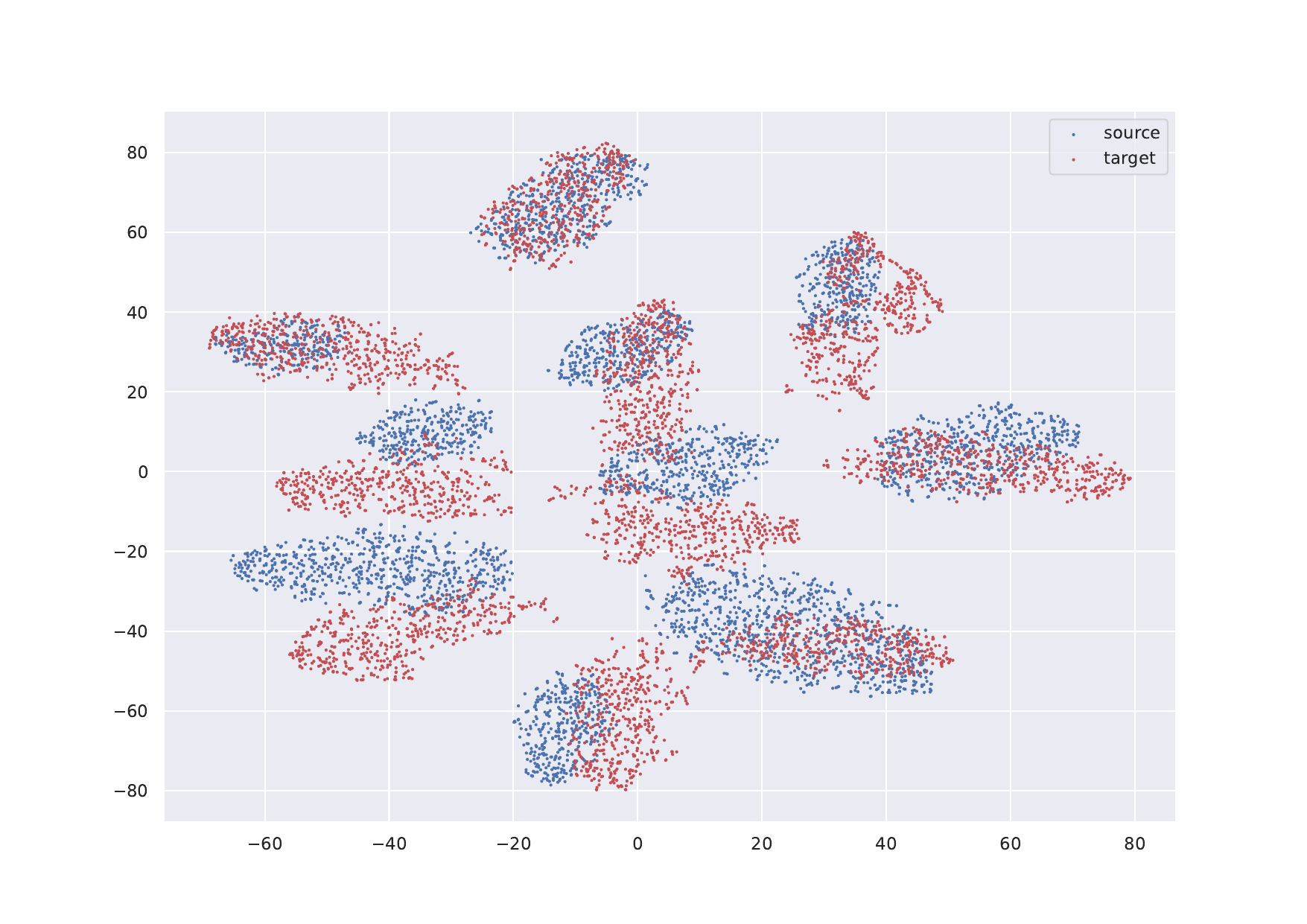}
            \caption{Visualization of KL-CL}
            \label{fig:c}
    \end{subfigure}
    \caption{Visualization results of representations obtained by using t-SNE. The source domain (blue points) is SVHN and the target domain (red points) is MNIST.}
    \label{fig:vis-tsne}
\end{figure}
To visualize the representations of models trained using \textbf{KL}, \textbf{KL-GP}, and \textbf{KL-CL}, we employ t-SNE \citep{van2008visualizing}. Figure~\ref{fig:vis-tsne} displays the visualization results when SVHN is used as the source domain and MNIST as the target domain. Our findings indicate that the incorporation of additional regularizers yields a slight improvement in representation alignment. However, it is essential to note that these regularization terms are primarily designed to enhance the performance of the classifier network, rather than the representation network.

\subsection{Results on VisDA17}
\begin{table*}[t!]
 \centering
 \caption{VisDA17 experiments. Results of baselines are reported directly from \cite{nguyen2022kl}.}
  \label{tab:visda17}
 	\begin{tabular}{c|c}
 		\toprule
		Method   & Synthetic $\rightarrow$ Real  \\
		\midrule
		ERM & 39.1±0.5\\
		DANN & 57.7±1.3\\
		MMD & 62.8±1.1\\
		CORAL & 39.5±4.5 \\
		WD & 38.9±4.8\\
		KL & 70.6±0.5\\
		\midrule
		KL-GP & \textbf{71.9±0.7}\\
		KL-CL & {71.3±0.4}\\
 		\bottomrule
 	\end{tabular}
\end{table*}
We also conduct experiments on the VisDA17 dataset \citep{peng2017visda}, which is a real-world classification task with $280K$ images from $12$ classes. Particularly, the source domain
contains synthetic images and the target domain contains real images. Table~\ref{tab:visda17} presents our experimental results. Notably, our regularization techniques, namely \textbf{KL-GP} and \textbf{KL-CL} are still capable of improving the performance of the KL guided marginal alignment algorithm to some extent.

\begin{figure}[ht!]
\begin{subfigure}[t]{0.5\columnwidth}%
\centering%
\captionsetup{font=small}%
\scalebox{0.6}{
\begin{tikzpicture}
\begin{axis}
[
axis y line*=left,
xlabel=Epoch,
grid style=dashed,
xmin=0,
    xmax=100.5,
ylabel=Error
]
\addplot[line width=2pt,  color=orange] table [y=Error, x=epoch]{data/KL.txt};
\label{Test}
\end{axis}
\begin{axis}
[legend style={at={(0.95,1)},nodes={scale=0.9, transform shape}},
legend cell align={left},
axis y line*=right,
scaled y ticks = false,
ticks=none,
xmin=0,
    xmax=100.5,
grid style=dashed,
xmajorgrids=true,
ymajorgrids=true,
ylabel style={rotate=-180},
]
\addlegendimage{/pgfplots/refstyle=Test}\addlegendentry{Testing Error}
\addplot[line width=2pt,  color=blue] table [y=KL, x=epoch]{data/KL.txt};
\addlegendentry{(scaled) Jeffrey’s divergence}
\end{axis}
\end{tikzpicture}
}
\caption{Training Dynamic}%
\label{fig:svhn-mnist}
\end{subfigure}
\hfill
\begin{subfigure}[t]{0.5\columnwidth}%
\centering%
\captionsetup{font=small}%
\scalebox{0.6}{
\begin{tikzpicture}
\begin{axis}
[legend style={at={(1,0.1)},nodes={scale=0.9, transform shape}},
xlabel=Fraction,
grid style=dashed,
xmajorgrids=true,
ymajorgrids=true,
ylabel=Accuracy
]
\addplot[line width=2pt, mark=*, color=red] table [y=acc, x=frac]{data/sample.txt};
\addlegendentry{Testing Accuracy}
\end{axis}
\end{tikzpicture}
}
\caption{Testing Performance vs. $\#$ of Target Data}%
\label{fig:sample}
\end{subfigure}%
\caption{\textbf{KL} on \textbf{S$\rightarrow$M}. The left figure is the comparison of the Jeffrey's divergence in the representation space and the testing error. The right figure is the evolution of testing accuracy with respect to the different fraction of unlabelled target data used for training.
}
\label{fig:train-dynamic}
\end{figure}
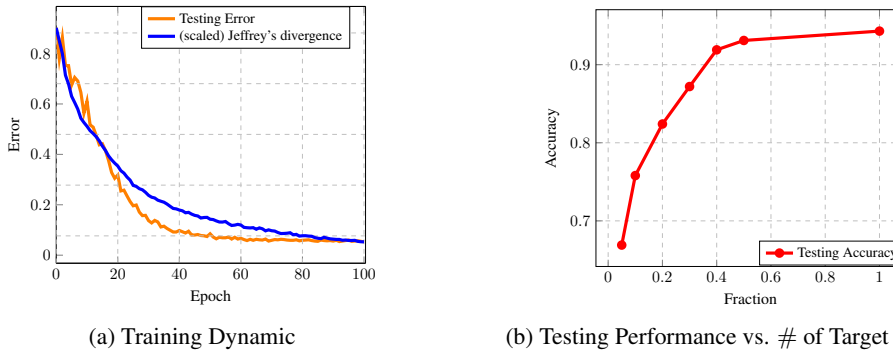

\subsection{Dynamics of Jeffrey's divergence}
The representation space version of Corollary~\ref{cor-bound-symmetric-kl} suggests that a small Jeffrey's divergence can lead to a low testing error. Figure~\ref{fig:svhn-mnist} demonstrates that the dynamic of Jeffrey's divergence, as computed in the representation space, can effectively characterize the evolution of the testing error throughout the training phase. Additionally, Figure~\ref{fig:sample} reveals that the number of target data used has an impact on testing performance. Specifically, when less than half of the available unlabelled target data is used, performance increases with the number of data. However, when more than half of the unlabelled target data is used, there is only marginal improvement on performance.

\end{document}